%%%%%%%%%%%%%%%%%%%%%%%%%%%%%%%%%%%%%%%%%%%%%%%%%%%%%%%%%%%%%%%%%%%%%%%%%%%
%
% Template for a LaTex article in English.
%
%%%%%%%%%%%%%%%%%%%%%%%%%%%%%%%%%%%%%%%%%%%%%%%%%%%%%%%%%%%%%%%%%%%%%%%%%%%
\documentclass{article}
\usepackage[top=1in, bottom=1in, left=1in, right=1in]{geometry}

\usepackage{algorithm}
\usepackage{algorithmic}
\usepackage{stfloats}
\usepackage{thmtools}
\usepackage{thm-restate}

\usepackage{url}
\usepackage[hidelinks]{hyperref}
\usepackage{amsmath,amsthm,amssymb,amsbsy}
\usepackage{paralist}
\usepackage{xcolor}
\usepackage{color}
\usepackage{graphicx}
\graphicspath{{./figs/}}
\usepackage{algorithm}
\usepackage{algorithmic}
\usepackage{comment}

\usepackage{fancyhdr}
\usepackage{cite}
\usepackage{cleveref}

% Theorems

\newtheorem{thm}{Theorem}%[section] %(If you want theorem numbered
%[section] %%    with section number.
\newtheorem{cor}{Corollary}%[section]
\newtheorem{prop}{Proposition}%[section]
%[section]
\newtheorem{defi}{Definition}%[section]

\newtheorem{lem}{Lemma}

\theoremstyle{remark}

% fields

\newcommand{\R}{\mathbb{R}}

\newcommand{\e}{\begin{equation}}
\newcommand{\ee}{\end{equation}}
\newcommand{\en}{\begin{equation*}}
\newcommand{\een}{\end{equation*}}
\newcommand{\eqn}{\begin{eqnarray}}
\newcommand{\eeqn}{\end{eqnarray}}
\newcommand{\bmat}{\begin{bmatrix}}
\newcommand{\emat}{\end{bmatrix}}
% complex numbers

% probability and stat

\newcommand{\dif}{\operatorname{d}}

% calculus

% constants (written in roman, if wanted)
%\newcommand{\e}{\mathrm{e}}

% linear algebra
% 	vector notation
\newcommand{\vct}[1]{\boldsymbol{#1}}
%   matrices
\newcommand{\mtx}[1]{\boldsymbol{#1}}
%   block vector

%   block matrix

%	inner products

% 	transpose, Hermitian, pseudo-inverse

\newcommand{\T}{\mathrm{T}}

%	fundamental subspaces

%	operators
\newcommand{\trace}{\operatorname{trace}}

\newcommand{\dist}{\operatorname{dist}}
\def \lg        {\langle}
\def \rg        {\rangle}

%

% sets and topology

	% closure

% functional analysis
\newcommand{\domain}{\operatorname{dom}}

	% general linear operator

% optimization
\DeclareMathOperator*{\minimize}{\text{minimize}}

\DeclareMathOperator*{\argmin}{\text{arg~min}}
\DeclareMathOperator*{\argmax}{\text{arg~max}}
\def \st {\operatorname*{subject\ to\ }}
% other

%--------------------------------------------------------------------------

\newcommand{\calC}{\mathcal{C}}
\newcommand{\calD}{\mathcal{D}}

\newcommand{\calO}{\mathcal{O}}

\newcommand{\ve}{\vct{e}}

\newcommand{\vu}{\vct{u}}
\newcommand{\vv}{\vct{v}}

\newcommand{\vz}{\vct{z}}
\newcommand{\vzero}{\vct{0}}

\newcommand{\mA}{\mtx{A}}
\newcommand{\mB}{\mtx{B}}

\newcommand{\mD}{\mtx{D}}

\newcommand{\mG}{\mtx{G}}
\newcommand{\mH}{\mtx{H}}

\newcommand{\mM}{\mtx{M}}

\newcommand{\mS}{\mtx{S}}

\newcommand{\mU}{\mtx{U}}
\newcommand{\mV}{\mtx{V}}
\newcommand{\mW}{\mtx{W}}
\newcommand{\mX}{\mtx{X}}
\newcommand{\mY}{\mtx{Y}}

\newcommand{\mLambda}{\mtx{\Lambda}}

\newcommand{\mPhi}{\mtx{\Phi}}

\newcommand{\mId}{{\bf I}}

\newcommand{\mzero}{{\bf 0}}

\setcounter{MaxMatrixCols}{20}

\graphicspath{{./figs/}}

\newlength{\imgwidth}
\setlength{\imgwidth}{3.125in}

\newboolean{twoColVersion}
\setboolean{twoColVersion}{false}
\newcommand{\twoCol}[2]{\ifthenelse{\boolean{twoColVersion}} {#1} {#2} }

\usepackage[framemethod=tikz]{mdframed}

\hyphenation{op-tical net-works semi-conduc-tor}

%-----------------------------------------------------------------
\title{Dropping Symmetry for Fast Symmetric Nonnegative Matrix Factorization}
\author{Zhihui Zhu, Xiao Li, Kai Liu, and Qiuwei Li 
	\thanks{The first and second authors contributed equally to this paper. ~Z. Zhu is with the Mathematical Institute for Data Science, Johns Hopkins University, Baltimore, MD, USA (Email: zzhu29@jhu.edu). ~Xiao Li is with the Department of Electronic Engineering, The Chinese University of Hong Kong, Shatin, NT, Hong Kong (E-mail: xli@ee.cuhk.edu.hk).  ~Kai Liu is with Department of Computer Science, Colorado School of Mines, Golden, CO, USA (E-mail: kaliu@mines.edu). ~ Qiuwei Li is with Department of Electrical Engineering,
		Colorado School of Mines, Golden, CO, USA (E-mail: qiuli@mines.edu).
	}
}

\begin{document}
% make the title area
\maketitle
%\IEEEpeerreviewmaketitle
	\begin{abstract}
	
	Symmetric nonnegative matrix factorization (NMF)---a special but important class of the general NMF---is demonstrated to be useful for data analysis and in particular for various
	clustering tasks. Unfortunately, designing fast algorithms for Symmetric NMF is not as easy as for the nonsymmetric counterpart, the later admitting the splitting property that allows  efficient alternating-type algorithms. To overcome this issue, we transfer the symmetric NMF to a nonsymmetric one, then we can  adopt the idea from the state-of-the-art algorithms for nonsymmetric NMF to design fast algorithms solving symmetric NMF.  We  {rigorously} establish that solving nonsymmetric reformulation returns a solution for symmetric NMF and then apply  fast alternating based algorithms for the corresponding reformulated problem. Furthermore, we show these fast algorithms admit strong convergence guarantee in the sense that the generated sequence is convergent at least at a sublinear rate and it converges globally to a critical point of the symmetric NMF.  We conduct experiments on both synthetic data and image clustering to support our result.		
\end{abstract}

\section{Introduction}

General nonnegative matrix factorization (NMF) is referred to the following problem: Given a matrix $\mY\in \R^{n\times m}$ and a factorization rank $r$, solve
\e
\min_{\mU\in\R^{n\times r},\mV\in\R^{m\times r}}  \frac{1}{2}\|\mY - \mU\mV^\T\|_F^2
,\quad  \st \mU\geq \vzero, \mV \geq \vzero,
\label{eq:NMF}\ee
where $\mU\geq \vzero$ means each element in $\mU$ is nonnegative. NMF has been successfully used in the applications of face feature extraction \cite{lee1999learning,guillamet2002non},  document clustering \cite{shahnaz2006document},  source separation~\cite{ma2014signal} and many others \cite{gillis2014and}. Because of the ubiquitous applications of NMF,  many efficient algorithms have been proposed for solving \eqref{eq:NMF}.  Well-known algorithms include MUA~\cite{lee2001algorithms}, projected gradientd descent \cite{lin2007projected}, alternating nonnegative least squares (ANLS) \cite{kim2008toward}, and hierarchical ALS (HALS) \cite{cichocki2009fast}. In particular, ANLS (which uses the block principal pivoting algorithm to very efficiently solve the nonnegative least squares) and HALS achive the state-of-the-art performance. 

One special but important class of NMF, called symmetric NMF, requires  the two factors $\mU$ and $\mV$ identical, i.e., it factorizes a PSD matrix $\mX\in\R^{n\times n}$ by solving
\e                                                                     
\min_{\mU\in\R^{n\times r}} \frac{1}{2}\|\mX - \mU\mU^\T\|_F^2,\quad \st \mU\geq \vzero.
\label{eq:SNMF}\ee
As a contrast, \eqref{eq:NMF} is referred to as nonsymmetric NMF. Symmetric NMF has its own applications in data analysis, machine learning  and signal processing~\cite{kuang2015symnmf,ding2005equivalence,he2011symmetric}. In particular the symmetric NMF is equivalent to the classical $K$-means  kernel clustering  in \cite{ding2005equivalence}and it is inherently suitable for clustering nonlinearly separable data from a similarity matrix \cite{kuang2015symnmf}.

\begin{figure}[htb!]
	\centering
	\includegraphics[width=0.33\textwidth]{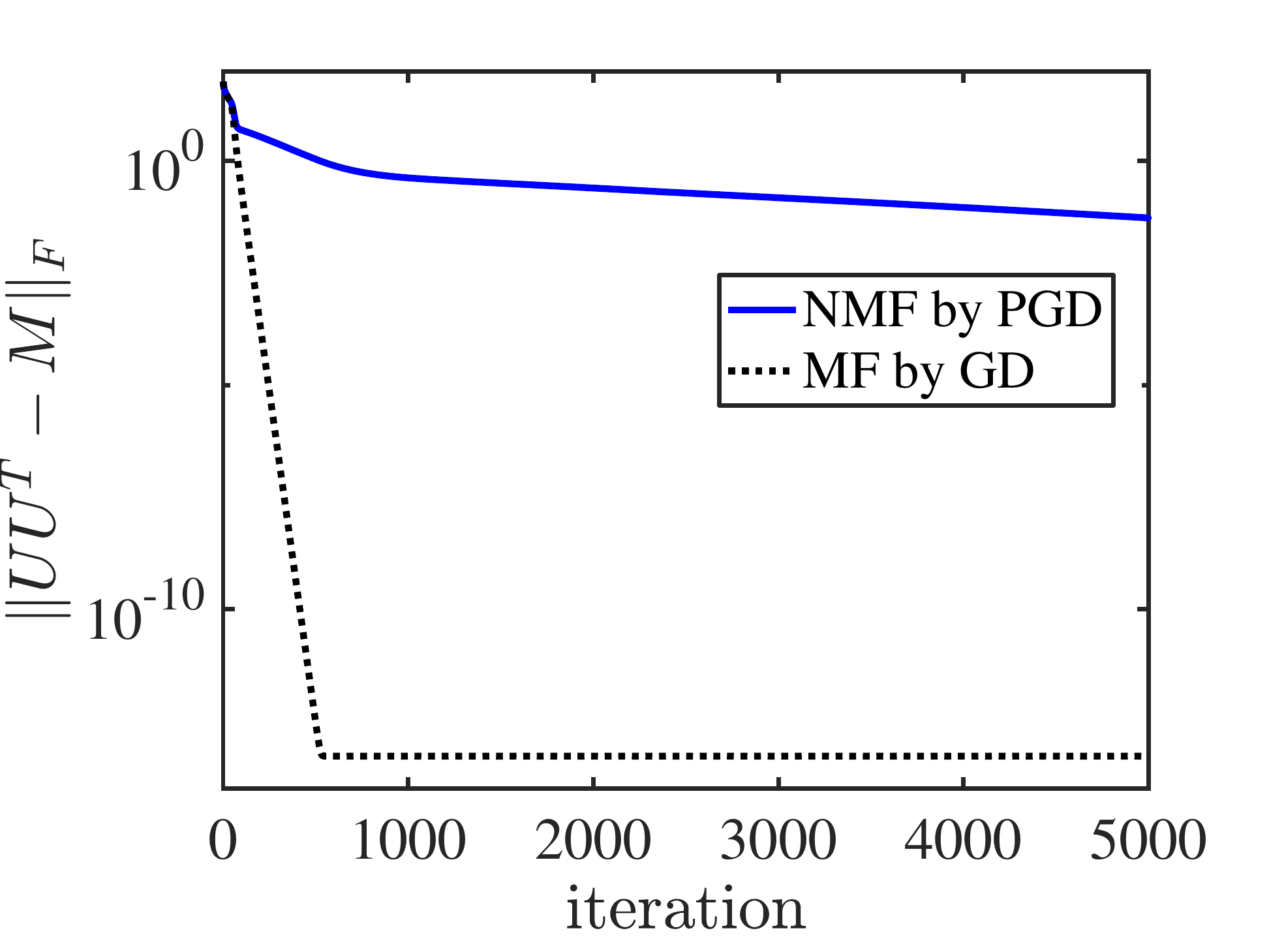}
	\vspace{-.2in}
	\caption{\small Convergence of MF by GD and symmetric NMF by PGD with the same  initialization.}\label{fig:NMFvsMF}
\end{figure}

In the first glance, since \eqref{eq:SNMF}  has only one variable, one may think it is easier to solve \eqref{eq:SNMF} than \eqref{eq:NMF}, or at least \eqref{eq:SNMF}  can be solved by directly utilizing  efficient algorithms developed for nonsymmetric NMF. However, the state-of-the-art alternating based algorithms (such as ANLS and HALS) for nonsymmetric NMF utilize the splitting property of \eqref{eq:NMF} and thus can not be used for \eqref{eq:SNMF}. On the other hand, first-order method such as projected gradient descent (PGD) for solving  \eqref{eq:SNMF} suffers from very slow convergence. As a proof of concept, we show in \Cref{fig:NMFvsMF} the convergence of PGD for solving symmetric NMF and as a comparison, the convergence of gradient descent (GD) for solving a matrix factorization (MF) (i.e., \eqref{eq:SNMF} without the nonnegative constraint) which is proved to admit  linear convergence \cite{tu2015low,zhu2017global}. This phenomenon also appears in nonsymmetric NMF and is the main motivation to have many efficient algorithms such as ANLS and HALS.

\paragraph{Main Contributions} This paper addresses the above issue by considering a simple framework that allows us to design alternating-type algorithms for solving the symmetric NMF, which are similar to alternating minimization algorithms (such as ANLS and HALS) developed for nonsymmetric NMF. The main contributions of this paper are summarized as follows.
%[leftmargin=0.5cm,label=(\roman*)]
\begin{itemize}
	\item Motivated by the splitting property exploited in ANLS and HALS algorithms, we split the bilinear form of $\mU$  into two different factors  and transfer the symmetric NMF into a nonsymmetric one:
	\e
	\min_{\mU,\mV} f(\mU,\mV) = \frac{1}{2}\|\mX - \mU\mV^\T\|_F^2 + \frac{\lambda}{2}\|\mU - \mV\|_F^2, \quad  \st \mU\geq \vzero, \mV \geq \vzero,
	\label{eq:SNMF by reg}\ee
	where the regularizer $\|\mU - \mV\|_F^2$ is introduced to force the two factors identical and $\lambda>0$ is a balancing factor. The first main contribution is to guarantee that any {\em critical point} of \eqref{eq:SNMF by reg} that has bounded energy satisfies $\mU = \mV$ with a sufficiently large $\lambda$. We further show that any local-search algorithm  with a decreasing property is guaranteed to solve \eqref{eq:SNMF} by targeting \eqref{eq:SNMF by reg}. To the best of our knowledge, this is the first work to \emph{rigorously} establish that  symmetric NMF can be efficiently solved by fast alternating-type algorithms.
	\item Our second contribution is to provide  convergence analysis for our proposed alternating-based algorithms solving \eqref{eq:SNMF by reg}. By exploiting the specific structure in \eqref{eq:SNMF by reg}, we show that  our proposed algorithms(without any proximal terms and any additional constraints on $\mU$ and $\mV$ except the nonnegative constraint) is convergent. Moreover, we establish the point-wise global iterates  sequence convergence and show that  the proposed alternating-type algorithms achieve at least a global sublinear convergence rate. Our  sequence convergence result  provides theoretical guarantees for the practical utilization of alternating-based algorithms directly solving \eqref{eq:SNMF by reg} without any proximal terms  or additional constraint on the factors which are usually needed to guarantee the convergence.
\end{itemize}

\paragraph{Related Work} Due to slow convergence of PGD for solving symmetric NMF, different algorithms have been proposed to efficiently solve \eqref{eq:SNMF}, either in a direct way or similar to \eqref{eq:SNMF by reg} by splitting the two factors. Vandaele et al.~ \cite{vandaele2016efficient} proposed an alternating algorithm that 
cyclically optimizes over each element in $\mU$ by solving a nonnegative constrained nonconvex univariate fourth order polynomial minimization.  A  quasi newton second order method was used in \cite{kuang2015symnmf} to directly solve the symmetric NMF optimization problem \eqref{eq:SNMF}. However, both the element-wise updating approach and the second order method are observed to be computationally expensive in large scale applications. We will illustrate this with experiments in \Cref{sec:experiments}.

The idea of solving symmetric NMF by targeting  \eqref{eq:SNMF by reg}  also appears in \cite{kuang2015symnmf}. However, despite an algorithm used for solving \eqref{eq:SNMF by reg},  no other formal guarantee (such as solving \eqref{eq:SNMF by reg} returns a solution of \eqref{eq:SNMF}) was provided in \cite{kuang2015symnmf}. Lu et al. \cite{lu2017nonconvex} considered an alternative problem to \eqref{eq:SNMF by reg} that also enjoys the splitting property and utilized  alternating direction method of multipliers (ADMM) algorithm to tackle the corresponding problem with equality constraint (i.e., $\mU = \mV$). Unlike the sequence convergence guarantee of algorithms solving \eqref{eq:SNMF by reg}, the ADMM  is only guaranteed to have a  subsequence convergence in \cite{lu2017nonconvex} with an additional proximal term\footnote{In $k$-th iteration, a proximal term $\|\mU-\mU_{k-1}\|_F^2$ is added to the objective function when updating $\mU$.} and constraint on the boundedness of columns of $\mU$, rendering the problem hard to solve. 

Finally, our work is also closely related to recent advances in convergence analysis for alternating minimizations. The sequence convergence result for general alternating minimization with an additional proximal term was provided in \cite{attouch2010proximal}.
When specified to NMF, as pointed out in~\cite{huang2016flexible}, with the aid of this additional proximal term (and also an additional constraint to bound the factors), the convergence of ANLS and HALS can be established from~\cite{attouch2010proximal,razaviyayn2013unified}. With similar proximal term and constraint, the subsequence convergence of ADMM for symmetric NMF was obtained in \cite{lu2017nonconvex}. Although the convergence of these algorithms are observed without the proximal term and constraint (which are also not used in practice), these are in general necessary to formally show the convergence of the algorithms. For alternating minimization methods solving \eqref{eq:SNMF by reg}, without any additional constraint, we show the factors are  indeed bounded through the iterations, and without the proximal term, the algorithms admit sufficient decreasing property. These observations then guarantee the sequence convergence of the original algorithms that are used in practice. The  convergence result  for algorithms solving \eqref{eq:SNMF by reg} is not only limited to alternating-type algorithms, though we only consider these as they achieve state-of-the-art performance.

\section{Transfering Symmetric NMF to Nonsymmetric NMF}
\label{sec:SNMF to NMF}
We first rewrite \eqref{eq:SNMF} as
\[
\min_{\mU,\mV} \frac{1}{2}\|\mX - \mU\mV^\T\|_F^2, \st \mU = \mV, \mU\geq 0, \mV \geq 0
\]
and turn to solve the following regularized form:
\e
\min_{\mU,\mV} f(\mU,\mV) = \frac{1}{2}\|\mX - \mU\mV^\T\|_F^2 + \frac{\lambda}{2}\|\mU - \mV\|_F^2 + \delta_+(\mU) + \delta_+(\mV).
\label{eq:SNMF by reg}\ee

Compared  with \eqref{eq:SNMF}, in the first glance, \eqref{eq:SNMF by reg} is slightly more complicated as it has one more variable. However, because of this new variable, $f(\mU,\mV)$ is now strongly convex with respect to either $\mU$ or $\mV$, thought it is still nonconvex in terms of the joint variable $(\mU,\mV)$. Moreover, the two decision variables $\mU$ and $\mV$ in \eqref{eq:SNMF by reg} are well separated, like the case in nonsymmetric NMF. This observation suggests an interesting and useful factor that  \eqref{eq:SNMF by reg} can be solved by tailored state of the art algorithms (such as the alternating minimization type algorithms) develpped for solving the general NMF for solving .

On the other hand, a theoretical question raised in the regularized form \eqref{eq:SNMF by reg} is that we are not guaranteed $\mU = \mV$ and hence solving \eqref{eq:SNMF by reg} is not equivalent to solving \eqref{eq:SNMF}. One of the main contribution is to assure that solving \eqref{eq:SNMF by reg} gives a solution of \eqref{eq:SNMF}.

\begin{thm} Suppose $(\mU^\star,\mV^\star)$ be any critical point of \eqref{eq:SNMF by reg} satisfying $\|\mU^\star\mV^{\star\T}\| < 2 \lambda + \sigma_n(\mX)$, where  $\sigma_n(\cdot)$ denotes the $n$-th largest singular value. Then $\mU^\star = \mV^\star$ and $\mU^\star$ is a critical point of \eqref{eq:SNMF}.
\label{thm:U = V}\end{thm}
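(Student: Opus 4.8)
The plan is to argue directly from the first-order (KKT) optimality conditions of \eqref{eq:SNMF by reg} and show that the gap $\mU^\star-\mV^\star$ is forced to vanish once $\lambda$ dominates the product energy $\|\mU^\star\mV^{\star\T}\|$. Throughout write $W:=\mU^\star\mV^{\star\T}$ and $D:=\mU^\star-\mV^\star$.

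First I would make precise what \emph{critical point} means. Since the nonsmooth part of the objective is the sum of orthant indicators $\delta_+(\mU)+\delta_+(\mV)$, stationarity is the standard complementarity system: with $G_U:=-(\mX-\mU^\star\mV^{\star\T})\mV^\star+\lambda(\mU^\star-\mV^\star)$ the gradient of the smooth part in $\mU$, and $G_V$ its analogue in $\mV$, a critical point obeys $\mU^\star,\mV^\star\ge\vzero$, $G_U,G_V\ge\vzero$ elementwise, and the slackness $G_U\odot\mU^\star=\vzero$, $G_V\odot\mV^\star=\vzero$. Pairing these with $D$ and using nonnegativity gives $\langle G_U,D\rangle=-\langle G_U,\mV^\star\rangle\le 0$ and $\langle G_V,D\rangle=\langle G_V,\mU^\star\rangle\ge 0$, hence the master inequality $\langle G_U-G_V,\,D\rangle\le 0$.

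Next I would expand $G_U-G_V=\mX D+\big(\mU^\star\mV^{\star\T}\mV^\star-\mV^\star\mU^{\star\T}\mU^\star\big)+2\lambda D$ and feed it into the master inequality. The easy terms are $\langle\mX D,D\rangle\ge\sigma_n(\mX)\|D\|_F^2$ (as $\mX\succeq 0$) and $\langle 2\lambda D,D\rangle=2\lambda\|D\|_F^2$. The crux, and the step I expect to be the main obstacle, is controlling the cross term $\langle\mU^\star\mV^{\star\T}\mV^\star-\mV^\star\mU^{\star\T}\mU^\star,\,D\rangle$. Splitting $W$ into its symmetric and antisymmetric parts $W_s,W_a$, I expect this to collapse to the clean identity $2\|W_a\|_F^2-\langle W_s,DD^\T\rangle$; the key algebraic fact driving the collapse is $\mU^\star\mU^{\star\T}+\mV^\star\mV^{\star\T}=2W_s+DD^\T$ (obtained by expanding $DD^\T$), together with $\|W\|_F^2=\|W_s\|_F^2+\|W_a\|_F^2$ and the vanishing of pairings between symmetric and antisymmetric matrices. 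Verifying this identity and then the bound $\langle W_s,DD^\T\rangle\le\lambda_{\max}(W_s)\|D\|_F^2\le\|W_s\|\,\|D\|_F^2\le\|W\|\,\|D\|_F^2$ is where essentially all the work sits; since $\|W_a\|_F^2\ge 0$, it yields the lower bound $\langle\mU^\star\mV^{\star\T}\mV^\star-\mV^\star\mU^{\star\T}\mU^\star,\,D\rangle\ge-\|W\|\,\|D\|_F^2$.

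Assembling the three pieces turns the master inequality into $0\ge\langle G_U-G_V,D\rangle\ge\big(2\lambda+\sigma_n(\mX)-\|\mU^\star\mV^{\star\T}\|\big)\|D\|_F^2$. The hypothesis $\|\mU^\star\mV^{\star\T}\|<2\lambda+\sigma_n(\mX)$ makes the coefficient strictly positive, forcing $\|D\|_F=0$, i.e.\ $\mU^\star=\mV^\star$. Finally, once equality holds the regularizer contribution $\lambda(\mU^\star-\mV^\star)$ drops out and $G_U=-(\mX-\mU^\star\mU^{\star\T})\mU^\star$, so the surviving complementarity conditions coincide (up to the harmless scalar factor $2$ in the gradient of $\tfrac12\|\mX-\mU\mU^\T\|_F^2$) with the KKT conditions of \eqref{eq:SNMF}; hence $\mU^\star$ is a critical point of the symmetric NMF, completing the argument.
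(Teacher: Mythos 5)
Your proposal is correct and follows essentially the same route as the paper's proof: you pair the difference of the two stationarity conditions with $D=\mU^\star-\mV^\star$, discard the multiplier terms by nonnegativity and complementarity, and your identity $2\|W_a\|_F^2-\langle W_s, DD^\T\rangle$ is exactly the paper's $\left\langle \frac{\mV^\star\mU^{\star\T}+\mU^\star\mV^{\star\T}}{2}, DD^\T\right\rangle - \frac{1}{2}\left\|\mU^\star\mV^{\star\T}-\mV^\star\mU^{\star\T}\right\|_F^2$ with the sign flipped, while your eigenvalue bounds play the role of the paper's trace lemma (\Cref{lem:trace inequality for one PSD}). The only cosmetic differences are that you phrase criticality via the KKT system (which the paper's \Cref{prop:critical and KKT} shows is equivalent) and that your spectral-norm chain $\lambda_{\max}(W_s)\le\|W_s\|\le\|W\|$ is slightly more careful than the paper's Frobenius-norm bookkeeping with $\alpha$.
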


\begin{proof}[Proof of \Cref{thm:U = V}] 
We first preset the following useful result, which generalizes the classical result for two PSD matrices.
\begin{lem}\label{lem:trace inequality for one PSD}
For any symmetric $\mA\in\R^{n\times n}$ and PSD matrix $\mB\in\R^{n\times n}$, we have
\[
\sigma_n(\mA)\trace(\mB)\leq \trace\left(\mA\mB\right) \leq \sigma_1(\mA)\trace(\mB),
\]
where $\sigma_i(\mA)$ is the $i$-th largest eigenvalue of $\mA$.
\end{lem}
\begin{proof}[Proof of Lemma~\ref{lem:trace inequality for one PSD}]
Let $\mA = \mPhi_1\mLambda_1\mPhi_1^\T$ and $\mB = \mPhi_2\mLambda_2\mPhi_2^\T$ be the eigendecompositions of $\mA$ and $\mB$, respectively. Here $\mLambda_1$ ($\mLambda_2$) is a diagonal matrix with the eigenvalues of $\mA$ ($\mB$) along its diagonal. We first rewrite $\trace\left(\mA\mB\right)$ as
\begin{align*}
\trace\left(\mA\mB\right) = \trace\left(\mLambda_1\mPhi_1^\T \mPhi_2\mLambda_2\mPhi_2^\T \mPhi_1\right).
\end{align*}
Noting that $\mLambda_1$ is a diagonal matrix and $\mPhi_1^\T \mPhi_2\mLambda_2\mPhi_2^\T \mPhi_1\succeq \mzero$ since $\mLambda_2\succeq \mzero$, we have
\begin{align*}
&\trace\left(\mLambda_1\mPhi_1^\T \mPhi_2\mLambda_2\mPhi_2^\T \mPhi_1\right) \\ &\leq \max_{i}\mLambda_1[i,i]\cdot\trace\left(\mPhi_1^\T \mPhi_2\mLambda_2\mPhi_2^\T \mPhi_1\right) \\&= \sigma_1(\mA)\trace(\mB).
\end{align*}
The other direction follows similarly.
\end{proof}

We now prove \Cref{thm:U = V}. The subdifferential of $f$ is given as follows
\e\begin{split} \label{eq:subdifferential U V}
&\partial_{\mU}f(\mU,\mV) = (\mU\mV^\T - \mX)\mV + \lambda (\mU - \mV) + \partial \delta_+(\mU),\\
&\partial_{\mV}f(\mU,\mV) = (\mU\mV^\T - \mX)^\T\mU - \lambda (\mU - \mV) + \partial \delta_+(\mV),
\end{split}\ee
where $\partial \delta_+(\mU) = \left\{\mG\in\R^{n\times r}:\mG \circ \mU = \vzero, \mG \leq \vzero \right\}$ when $\mU\geq \mzero$  and otherwise $\partial \delta_+(\mU) = \emptyset$. Since $(\mU^\star,\mV^\star)$ is a critical point of \eqref{eq:SNMF by reg}, it satisfies
\begin{align}
(\mU^\star\mV^{\star\T} - \mX)\mV^\star + \lambda (\mU^\star - \mV^\star) + \mG = \vzero,\label{eq:first order U}\\
(\mU^\star\mV^{\star\T} - \mX)^\T\mU^\star - \lambda (\mU^\star - \mV^\star) + \mH = \vzero,\label{eq:first order V}
\end{align}
where $\mG\in  \partial \delta_+(\mU^\star)$ and $\mH\in  \partial \delta_+(\mV^\star)$. Subtracting \eqref{eq:first order V} from \eqref{eq:first order U}, we have
\e
(2\lambda \mId + \mX)(\mU^\star - \mV^\star) =  \mV^\star\mU^{\star\T}\mU^\star - \mU^\star\mV^{\star\T}\mV^\star - \mG + \mH.
\ee
where we utilize the fact that $\mX$ is symmetric, i.e., $\mX = \mX^\T$. Taking the inner product of $\mU^\star - \mV^\star$ with both sides of the above equation gives \e
\langle (\lambda \mId + \mX),(\mU^\star - \mV^\star)(\mU^\star - \mV^\star)^\T\rangle = \langle  \mV^\star\mU^{\star\T}\mU^\star - \mU^\star\mV^{\star\T}\mV^\star - \mG + \mH, \mU^\star - \mV^\star \rangle.
\label{eq: U V innder product}\ee

In what follows, by choosing sufficiently large $\lambda$, we show that $(\mU^\star,\mV^\star)$ satisfying \eqref{eq: U V innder product} must satisfy $\mU^\star = \mV^\star$. To that end, we first provide the lower bound and the upper bound for the LHS and RHS of \eqref{eq: U V innder product}, respectively. Specifically,
\begin{align}
\langle ((2\lambda \mId + \mX),(\mU^\star - \mV^\star)(\mU^\star - \mV^\star)^\T\rangle  \geq \sigma_n((2\lambda \mId + \mX)\|\mU^\star - \mV^\star\|_F^2 = ((2\lambda + \sigma_n( \mX))\|\mU^\star - \mV^\star\|_F^2,
\label{eq:LHS}\end{align}
where the inequality follows from \Cref{lem:trace inequality for one PSD}. On the other hand,
\e\begin{split}
&\langle  \mV^\star\mU^{\star\T}\mU^\star - \mU^\star\mV^{\star\T}\mV^\star - \mG + \mH, \mU^\star - \mV^\star \rangle\\& \leq \langle  \mV^\star\mU^{\star\T}\mU^\star - \mU^\star\mV^{\star\T}\mV^\star, \mU^\star - \mV^\star \rangle\\
&= \left\langle \frac{\mV^\star\mU^{\star\T} + \mU^\star\mV^{\star\T}}{2}, (\mU^\star - \mV^\star)(\mU^\star - \mV^\star)^\T \right\rangle - \frac{1}{2}\left\| \mU^\star\mV^{\star\T}- \mV^\star\mU^{\star\T}\right\|_F^2\\
& \leq \left\langle \frac{\mV^\star\mU^{\star\T} + \mU^\star\mV^{\star\T}}{2}, (\mU^\star - \mV^\star)(\mU^\star - \mV^\star)^\T \right\rangle\\
& \leq \sigma_1\left( \frac{\mV^\star\mU^{\star\T} + \mU^\star\mV^{\star\T}}{2} \right)\|\mU^\star - \mV^\star\|_F^2,
\end{split}\label{eq:RHS}\ee
where the last inequality utilizes \Cref{lem:trace inequality for one PSD} and the first inequality follows because $\mV^\star,\mU^\star\geq \vzero$ indicating that
\[
%\langle\mG , \mU^\star\rangle = 0,  \langle\mG ,  \mV^\star\rangle \leq 0, \langle \mH, \mU^\star \rangle \leq 0, %\langle\mH, \mV^\star\rangle = 0,
-\langle\mG , \mU^\star - \mV^\star\rangle \leq  0, \ \ \  \langle\mH , \mU^\star - \mV^\star\rangle \leq  0.
\]

Now plugging \eqref{eq:LHS} and \eqref{eq:RHS} back into \eqref{eq: U V innder product} and utilizing the assumption that $\|\mU^\star\mV^{\star\T}\|_F \leq \alpha$, we have
\begin{align*}
((2\lambda + \sigma_n(\mX)) \|\mU^\star - \mV^\star\|_F^2\leq
\sigma_1\left( \frac{\mV^\star\mU^{\star\T} + \mU^\star\mV^{\star\T}}{2} \right)\|\mU^\star - \mV^\star\|_F^2\leq \alpha\|\mU^\star - \mV^\star\|_F^2,
\end{align*}
which implies that if we choose $2\lambda> \alpha - \sigma_n(\mX)$, then $\mU^\star = \mV^\star$ must hold. Plugging it into \eqref{eq:subdifferential U V} gives 
\[
  \textbf{0}\in (\mU^\star(\mU^\star)^\T - \mX)\mU^\star + \partial \delta_+(\mU^\star),
\]
which implies $\mU^\star$ is a critical point of \eqref{eq:SNMF}.
\end{proof}

%The rest is to argue that for any local search algorithms, if it decreases the objective function and converges to a critical point $(\mU^\star,\mV^\star)$, then the operator norm of $\mU^\star\mV^{\star\T}$ must be bounded and the upper bound has nothing to do with $\lambda$, or the upper bound should be smaller than $c\lambda$ where $c<0$.
%
%
%
%\begin{lem} For any local search algorithm solving \eqref{eq:SNMF by reg}, suppose it is initialized  with $\mV_0=\mU_0$ for some $\mU_0\geq0$ and it generates a sequence with decreasing objective values. Then, we have all the iterates $(\mU_k,\mV_k)$ are living within inside a $\ell_2$ norm ball, i.e., 
%\begin{align}
%	\|\mU_k\|_F^2+\|\mV_k\|_F^2\leq \frac{1}{\lambda}\left((1+\sqrt r)\|\mX-\mU_0\mU_0^\T\|_F^2+\sqrt{r} \|\mX\|_F \right):=B_0
%	\label{eqn:bound}
%\end{align}
%for all $k$. Also the product of these iterates are also bounded:
%\begin{align*}
%\|\mU_k\mV_k^\T\|_F\leq\|\mX-\mU_0\mV_0^\T\|_F+\|\mX\|_F.
%\label{eqn:UV}
%\end{align*}
%
%\label{lem:bound:iterate}
%\end{lem} 

Towards interpreting \Cref{thm:U = V}, we  note that for any $\lambda>0$, \Cref{thm:U = V} ensures a certain region (whose size depends on $\lambda$) in which each critical point of \eqref{eq:SNMF by reg} has identical factors and also returns a solution for the original symmetric NMF \eqref{eq:SNMF}. This further suggests the opportunity of choosing an appropriate $\lambda$ such that the corresponding region (i.e., all $(\mU,\mV)$ such that $\|\mU\mV^{\T}\| < 2 \lambda + \sigma_n(\mX)$) contains all the possible points that the algorithms will converge to. Towards that end,  next result indicates that for any local search algorithms, if it decreases the objective function, then the iterates are bounded.
\begin{lem} For any local search algorithm solving \eqref{eq:SNMF by reg} with initialization $\mV_0=\mU_0, \mU_0\geq0$, suppose it sequentially decreases the objective value.  Then, for any $k\geq 0$,  the iterate $(\mU_k,\mV_k)$ generated by this algorithm satisfies
	\e\begin{split}
		\|\mU_k\|_F^2+\|\mV_k\|_F^2& \leq \left(\frac{1}{\lambda}+2\sqrt{r}\right)\|\mX-\mU_0\mU_0^\T\|_F^2+2\sqrt{r} \|\mX\|_F:=B_0,\\
		\|\mU_k\mV_k^\T\|_F&\leq\|\mX-\mU_0\mV_0^\T\|_F+\|\mX\|_F.
	\end{split}	\label{eqn:bound}\ee
	
	\label{lem:bound:iterate}
\end{lem}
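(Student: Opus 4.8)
The plan is to convert the ``sequentially decreases the objective'' hypothesis into a single a priori inequality $f(\mU_k,\mV_k)\le f(\mU_0,\mV_0)$ and then read off both bounds from the two nonnegative pieces of $f$. First I would evaluate the right-hand side: since $\mU_0=\mV_0\geq\vzero$, the indicators $\delta_+(\mU_0),\delta_+(\mV_0)$ and the regularizer $\frac{\lambda}{2}\|\mU_0-\mV_0\|_F^2$ all vanish, leaving $f(\mU_0,\mV_0)=\frac12\|\mX-\mU_0\mU_0^\T\|_F^2$; write $D_0:=\|\mX-\mU_0\mU_0^\T\|_F^2$. The decreasing property (together with finiteness of $f$, which forces $\mU_k,\mV_k\geq\vzero$ so the indicators stay zero) then gives $\frac12\|\mX-\mU_k\mV_k^\T\|_F^2+\frac{\lambda}{2}\|\mU_k-\mV_k\|_F^2\le \frac12 D_0$ for every $k$. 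Because both summands are nonnegative, this splits into the two working estimates $\|\mX-\mU_k\mV_k^\T\|_F^2\le D_0$ and $\|\mU_k-\mV_k\|_F^2\le D_0/\lambda$.

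The second inequality of the lemma is then immediate: by the triangle inequality $\|\mU_k\mV_k^\T\|_F\le\|\mX\|_F+\|\mX-\mU_k\mV_k^\T\|_F\le\|\mX\|_F+\sqrt{D_0}$, and $\sqrt{D_0}=\|\mX-\mU_0\mV_0^\T\|_F$ since $\mV_0=\mU_0$.

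For the first inequality I would expand $\|\mU_k\|_F^2+\|\mV_k\|_F^2=\|\mU_k-\mV_k\|_F^2+2\langle\mU_k,\mV_k\rangle$ and control the cross term. The key observation is that $\langle\mU_k,\mV_k\rangle=\trace(\mU_k\mV_k^\T)$, and $\mU_k\mV_k^\T$ has rank at most $r$, so $\trace(\mU_k\mV_k^\T)\le\|\mU_k\mV_k^\T\|_*\le\sqrt{r}\,\|\mU_k\mV_k^\T\|_F$ (the trace is bounded by the nuclear norm, which is in turn bounded by $\sqrt{\rank}$ times the Frobenius norm via Cauchy--Schwarz over the at most $r$ nonzero singular values). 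Combining this with the two estimates above yields $\|\mU_k\|_F^2+\|\mV_k\|_F^2\le\frac{1}{\lambda}\|\mX-\mU_0\mU_0^\T\|_F^2+2\sqrt{r}\,\|\mX-\mU_0\mU_0^\T\|_F+2\sqrt{r}\|\mX\|_F$, which is a bound of the claimed form $B_0$.

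The main obstacle is precisely this cross term $\langle\mU_k,\mV_k\rangle$: the objective controls only the product $\mU_k\mV_k^\T$ and the difference $\mU_k-\mV_k$, not the individual factor norms, and the naive estimate $\trace(\mU_k\mV_k^\T)=\langle\mId,\mU_k\mV_k^\T\rangle\le\|\mId\|_F\|\mU_k\mV_k^\T\|_F=\sqrt{n}\,\|\mU_k\mV_k^\T\|_F$ would cost a factor $\sqrt{n}$. Routing through the nuclear norm is what replaces $\sqrt{n}$ by the rank-induced $\sqrt{r}$ and keeps the bound dimension-light; this step uses only $\mU_k,\mV_k\in\R^{n\times r}$, while everything else is elementary algebra.
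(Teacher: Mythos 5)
Your proof is correct and follows essentially the same route as the paper's: evaluate $f(\mU_0,\mV_0)=\frac{1}{2}\|\mX-\mU_0\mU_0^\T\|_F^2$, split the monotonically decreased objective into its two nonnegative summands, get the product bound by the triangle inequality, and then bound the cross term $\langle\mU_k,\mV_k\rangle=\trace(\mU_k\mV_k^\T)$ by $\sqrt{r}\,\|\mU_k\mV_k^\T\|_F$. Your nuclear-norm/rank justification of that last step is in fact the rigorous version of the paper's shorthand $|\langle\mU_k\mV_k^\T,\mId_r\rangle|\le\|\mId_r\|_F\|\mU_k\mV_k^\T\|_F$ (which has a dimension mismatch as written), and your resulting middle term $2\sqrt{r}\,\|\mX-\mU_0\mU_0^\T\|_F$ is the correct unsquared form of the corresponding term in the paper's stated $B_0$.
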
 

\begin{proof}[Proof of \Cref{lem:bound:iterate}]
	By the assumption that the algorithm decreases the objective function, we have
	\begin{align*}
	\frac{1}{2}\left\|\mX - \mU_k \mV_k^{\T }\right \|_F^2 + \frac{\lambda}{2} \left\|\mU_k-\mV_k \right\|_F^2
	\leq \frac{1}{2}\left \|\mX - \mU_0\mU_0^{\T} \right\|_F^2
	\end{align*}
	which further implies that
	\[
	\begin{cases}
	\left\|\mX-\mU_k \mV_k^{\T}\right\|_F \leq \left\| \mX-\mU_0\mU_0^\T\right\|_F\\
	\frac{\lambda}{2} \left(\|\mU_k\|_F^2+\|\mV_k\|_F^2-2|\lg\mU_k\mV_k^\T,\mId_r\rg|\right)\le\frac{\lambda}{2} \left\|\mU_k-\mV_k \right\|_F^2
	\leq \frac{1}{2}\left \|\mX - \mU_0\mU_0^{\T} \right\|_F^2
	\end{cases}
	\]
	where the first line further gives that
	\begin{align}
	\|\mU_k\mV_k^\T\|_F\leq\|\mX-\mU_0\mV_0^\T\|_F+\|\mX\|_F,
	\nonumber
	\end{align}
	while the second line leads to
	\begin{align*}
	\|\mU_k\|_F^2+\|\mV_k\|_F^2
	\leq& \frac{1}{\lambda}\|\mX-\mU_0\mU_0^\T\|_F^2+2\|\mU_k\mV_k^\T\|_F\|\mId_r\|_F\nonumber\\
	=&
	\frac{1}{\lambda}\|\mX-\mU_0\mU_0^\T\|_F^2+2\sqrt{r}\|\mU_k\mV_k^\T\|_F\nonumber\\
	\leq&
	\left(\frac{1}{\lambda}+2\sqrt{r}\right)\|\mX-\mU_0\mU_0^\T\|_F^2+2\sqrt{r} \|\mX\|_F =:B_0
	\end{align*}
	
	%	\paragraph{Remark.} If those iterates $(\mU_k,\mV_k)$ 
	%	generated by the algorithm further converges to some critical point $(\mU^\star,\mV^\star)$, and further by \Cref{lem:bound:iterate}, we have $\|\mU^\star\mV^{\star T}\|\leq \|\mX-\mU_0\mV_0^\T\|_F+\|\mX\|_F.$
	%	Therefore, we can
	%	combine \Cref{thm:U = V} and \Cref{lem:bound:iterate} to guaranteed that many local search algorithms can be utilized to find a critical point of \eqref{eq:SNMF}.
	
\end{proof}

%\paragraph{Remark.} If those iterates $(\mU_k,\mV_k)$ 
%generated by the algorithm further converges to some critical point $(\mU^\star,\mV^\star)$, and further by \Cref{lem:bound:iterate}, we have $\|\mU^\star\mV^{\star T}\|\leq \|\mX-\mU_0\mV_0^\T\|_F+\|\mX\|_F.$
%Therefore, we can
%combine \Cref{thm:U = V} and \Cref{lem:bound:iterate} to guaranteed that many local search algorithms can be utilized to find a critical point of \eqref{eq:SNMF}.
%
%
%\begin{cor} For any local search algorithm solving \eqref{eq:SNMF by reg}, suppose it is initialized  with $\mV_0=\mU_0$ for some $\mU_0\geq0$ and it generates a sequence with decreasing objective values. Choose 
%	\[
%	\lambda > \frac{1}{2} \left( \|\mX\|_2 + \left\|\mX-\mU_0\mU_0^\T\right\|_F - \sigma_n(\mX)  \right).
%	\]
%	Then, if the algorithm converges and converges to a critical point $(\mU^\star,\mV^\star)$, we have $\mU^\star = \mV^\star$ and that $\mU^\star$ is also a critical point of \eqref{eq:SNMF}.
%	\label{thm:critical point U = V}
%\end{cor} 

There are two interesting facts regarding the iterates can be interpreted from \eqref{eqn:bound}. The first equation of \eqref{eqn:bound} implies that both $\mU_k$ and $\mV_k$ are bounded and the upper bound decays when the  $\lambda$ increases. Specifically, as long as $\lambda$ is not too close to zero, then the RHS in \eqref{eqn:bound} gives a meaningful bound which will be used for the convergence analysis of local search algorithms in next section.
In terms of $\mU_k\mV_k^\T$, the second equation of \eqref{eqn:bound} indicates that it is indeed upper bounded by a quantity that is independent of $\lambda$. This suggests a key result that if  the iterative algorithm is convergent and the iterates $(\mU_k,\mV_k)$  converge to a critical point $(\mU^\star,\mV^\star)$, then  $\mU^\star\mV^{\star T}$ is also bounded, irrespectively the value of $\lambda$. This together with \Cref{thm:U = V} ensures that many local search algorithms can be utilized to find a critical point of \eqref{eq:SNMF} by choosing a sufficiently large $\lambda$.

\begin{thm}
	Choose $\lambda > \frac{1}{2} \left( \|\mX\|_2 + \left\|\mX-\mU_0\mU_0^\T\right\|_F - \sigma_n(\mX)  \right)$ for \eqref{eq:SNMF by reg}. For any local search algorithm solving \eqref{eq:SNMF by reg}  with initialization $\mV_0=\mU_0$, if it sequentially decreases the objective value, is convergent and  converges to a critical point $(\mU^\star,\mV^\star)$ of \eqref{eq:SNMF by reg}, then we have $\mU^\star = \mV^\star$ and that $\mU^\star$ is also a critical point of \eqref{eq:SNMF}.
	\label{thm:critical point U = V}
\end{thm}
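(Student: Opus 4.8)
The plan is to combine the uniform boundedness of the iterates supplied by \Cref{lem:bound:iterate} with the critical-point characterization of \Cref{thm:U = V}, and then simply pass to the limit. The only genuine content beyond bookkeeping is verifying that the limit point $(\mU^\star,\mV^\star)$ lands inside the region where \Cref{thm:U = V} forces $\mU^\star=\mV^\star$, for the prescribed choice of $\lambda$.

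First I would extract the relevant estimate on the product $\mU_k\mV_k^\T$ from the decreasing hypothesis. Since the algorithm is initialized at $\mV_0=\mU_0$, the regularizer vanishes at $k=0$ and the objective there equals $\tfrac12\|\mX-\mU_0\mU_0^\T\|_F^2$; the decreasing property then gives $\|\mX-\mU_k\mV_k^\T\|_F\le\|\mX-\mU_0\mU_0^\T\|_F$ for every $k$, which is exactly the first consequence derived inside the proof of \Cref{lem:bound:iterate}. Using $\|\cdot\|_2\le\|\cdot\|_F$ together with the triangle inequality in the spectral norm, this yields the uniform bound
\[
\|\mU_k\mV_k^\T\|_2 \;\le\; \|\mX\|_2 + \|\mX-\mU_k\mV_k^\T\|_2 \;\le\; \|\mX\|_2 + \|\mX-\mU_0\mU_0^\T\|_F .
\]

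Next I would pass to the limit. Because $(\mU_k,\mV_k)\to(\mU^\star,\mV^\star)$ and both matrix multiplication and the spectral norm are continuous, the same estimate survives in the limit, giving $\|\mU^\star\mV^{\star\T}\|_2 \le \|\mX\|_2 + \|\mX-\mU_0\mU_0^\T\|_F$. The stated choice $\lambda>\tfrac12\!\left(\|\mX\|_2+\|\mX-\mU_0\mU_0^\T\|_F-\sigma_n(\mX)\right)$ then converts this into the strict inequality $\|\mU^\star\mV^{\star\T}\|_2 < 2\lambda+\sigma_n(\mX)$, so the critical point $(\mU^\star,\mV^\star)$ satisfies the hypothesis of \Cref{thm:U = V}. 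Invoking that theorem directly delivers $\mU^\star=\mV^\star$ and the fact that $\mU^\star$ is a critical point of \eqref{eq:SNMF}, completing the argument.

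The step I expect to require the most care is the norm bookkeeping: \Cref{lem:bound:iterate} is naturally phrased in the Frobenius norm, while the threshold on $\lambda$ in the statement is written with $\|\mX\|_2$. The reconciliation is that the quantity actually controlled inside the proof of \Cref{thm:U = V} is $\sigma_1\!\big(\tfrac12(\mV^\star\mU^{\star\T}+\mU^\star\mV^{\star\T})\big)\le\|\mU^\star\mV^{\star\T}\|_2$, a spectral-norm quantity; hence it suffices to bound $\|\mU^\star\mV^{\star\T}\|_2$ rather than its Frobenius norm, and this is precisely where the $\|\mX\|_2$ term (as opposed to the weaker $\|\mX\|_F$) enters. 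I would also take care that the limit point is genuinely a critical point of \eqref{eq:SNMF by reg} — which is granted by the convergence hypothesis — so that \Cref{thm:U = V} is legitimately applicable, and that the strict inequality in the choice of $\lambda$ is what upgrades the non-strict limiting bound to the strict condition the theorem demands.
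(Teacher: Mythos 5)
Your proposal is correct and follows the same route the paper intends: use the decreasing property from the initialization $\mV_0=\mU_0$ to bound the product $\mU_k\mV_k^\T$ uniformly in $k$ (this is exactly the second estimate in \Cref{lem:bound:iterate}), pass to the limit using the convergence hypothesis, and invoke \Cref{thm:U = V} with the prescribed $\lambda$. The one place where you genuinely improve on the paper's bookkeeping is the norm reconciliation: \Cref{lem:bound:iterate} as stated controls $\|\mU_k\mV_k^\T\|_F \leq \|\mX-\mU_0\mV_0^\T\|_F + \|\mX\|_F$, which on its own would only justify a threshold with $\|\mX\|_F$ in place of $\|\mX\|_2$; your direct spectral-norm chain $\|\mU_k\mV_k^\T\|_2 \leq \|\mX\|_2 + \|\mX-\mU_k\mV_k^\T\|_F$, combined with the observation that the quantity actually used inside the proof of \Cref{thm:U = V} is $\sigma_1\bigl(\tfrac{1}{2}(\mV^\star\mU^{\star\T}+\mU^\star\mV^{\star\T})\bigr) \leq \|\mU^\star\mV^{\star\T}\|_2$, is precisely what justifies the sharper $\|\mX\|_2$ constant appearing in the theorem statement. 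You also correctly note that the strictness of the inequality defining $\lambda$ is what upgrades the non-strict limiting bound to the strict hypothesis of \Cref{thm:U = V}, and that criticality of the limit is supplied by assumption rather than needing proof; both points are glossed over in the paper. No gaps.
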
 

\Cref{thm:critical point U = V} indicates that instead of directly solving the symmetric NMF \eqref{eq:SNMF}, one can turn to solve \eqref{eq:SNMF by reg} with a sufficiently large regularization parameter $\lambda$. The latter is very similar to the nonsymmetric NMF \eqref{eq:NMF} and obeys similar splitting property, which enables us to utilize efficient alternating-type algorithms. In the next section, we propose alternating based algorithms for tackling \eqref{eq:SNMF by reg} provide strong guarantees on the descend property and convergence issue.

\section{Fast Algorithms for Symmetric NMF}
\label{sec:fast algorithms}

In the last section, we have shown that the symmetric NMF \eqref{eq:SNMF} can be transfered to problem \eqref{eq:SNMF by reg}, the latter admitting splitting property which enable us to design alternating-type algorithms to solve symmetric NMF. Specifically, we exploit the splitting property by adopting the main idea in ANLS and HALS  for nonsymmetric NMF to design fast algorithms for \eqref{eq:SNMF by reg}. Moreover, note that the objective function $f$ in \eqref{eq:SNMF by reg} is strongly convex with respect to $\mU$ (or $\mV$) with fixed $\mV$ (or $\mU$) because of the regularized term $\frac{\lambda}{2}\|\mU - \mV\|_F^2$. This together with \Cref{lem:bound:iterate} ensures that  strong descend property and point-wise sequence convergence guarantee of the proposed alternating-type algorithms. With \Cref{thm:critical point U = V}, we are finally guaranteed that the algorithms converge to a critical point of symmetric NMF \eqref{eq:SNMF}.

\subsection{ANLS for symmetric NMF(SymANLS)}
\begin{algorithm}[htb]
	\caption{SymANLS}
	\label{alg:ANLS}
	{\bf Initialization:}  $k=1$ and  $\mU_0 = \mV_0$.
	
	\begin{algorithmic}[1]
		\WHILE{stop criterion not meet}
		\STATE $
		\mU_{k} = \arg\min_{\mV\geq 0}  \frac{1}{2}\|\mX - \mU\mV_{k-1}^\T\|_F^2   + \frac{\lambda}{2}\|\mU - \mV_{k-1}\|_F^2$;
		\STATE 
		$\mV_{k} = \arg\min_{\mU\geq 0}  \frac{1}{2}\|\mX - \mU_{k}\mV^\T\|_F^2 + \frac{\lambda}{2}\|\mU_k - \mV\|_F^2 $;
		\STATE
		$k = k+1$.
		\ENDWHILE
	\end{algorithmic}
	{\bf Output:} factorization $(\mU_k, \mV_k)$.
\end{algorithm}

ANLS is an alternating-type algorithm customized for nonsymmetric NMF \eqref{eq:NMF} and its main idea is that at each time, keep one factor fixed, and update another one via solving a nonnegative constrained least squares.  We use similar idea for solving \eqref{eq:SNMF by reg} and refer the corresponding algorithm as SymANLS. Specifically,  at the $k$-th iteration, SymANLS first updates $\mU_k$ by
\e
\mU_k =\argmin_{\mU\in\R^{n\times r}, \mU\geq 0} \|\mX - \mU\mV_{k-1}^\T\|_F^2   + \frac{\lambda}{2}\|\mU - \mV_{k-1}\|_F^2. 
\label{eq:Uk ANLS}\ee
$\mV_k$ is then updated in a similar way. We depict the  whole procedure of SymANLS in \Cref{alg:ANLS}. With respect to solving the subproblem~\eqref{eq:Uk ANLS}, we first note that there exists a unique minimizer (i.e., $\mU_k$) for \eqref{eq:Uk ANLS} as it involves a strongly objective function as well as a convex feasible region. However, we note that because of the nonnegative constraint, unlike least squares,  in general there is no closed-from solution for \eqref{eq:Uk ANLS} unless $r =1$. Fortunately, there exist many feasible methods to solve the  nonnegative constrained least squares, such as projected gradient descend, active set method and projected Newton's method. Among these methods,  a  block principal pivoting method is remarkably efficient for tackling the subproblem \eqref{eq:Uk ANLS} (and also the one for updating $\mV$) \cite{kim2008toward}.

With the specific structure within \eqref{eq:SNMF by reg} (i.e., its objective function is strongly convex and its feasible region is convex), we first show that  SymANLS monotonically decreases the function value at each iteration, as required in \Cref{thm:critical point U = V}.
\begin{restatable}{lem}{sufficientdecrease}
	\label{lem:sufficient decrease}
	Let $\{(\mU_k,\mV_k)\}$ be the iterates sequence  generated by \Cref{alg:ANLS}.  Then we have 
	\[
	f(\mU_k,\mV_k) - f(\mU_{k+1},\mV_{k+1}) \geq \frac{\lambda}{2}( \|\mU_{k+1}  - \mU_{k}\|_F^2 + \|\mV_{k+1}  - \mV_{k}\|_F^2).
	\]
\end{restatable}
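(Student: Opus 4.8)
The plan is to exploit the block strong convexity of $f$ that the regularizer $\frac{\lambda}{2}\|\mU-\mV\|_F^2$ induces, and then invoke the elementary fact that minimizing a strongly convex function buys a quadratic amount of descent relative to any feasible comparison point.

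First I would fix $\mV$ and regard $\mU \mapsto f(\mU,\mV)$ as a function of $\mU$ alone. The smooth part $\frac{1}{2}\|\mX - \mU\mV^\T\|_F^2$ is convex in $\mU$ (its Hessian, acting row by row, equals $\mV^\T\mV \succeq \mzero$), the regularizer contributes exactly $\lambda \mId$ to the Hessian, and $\delta_+(\mU)$ is convex; hence $\mU \mapsto f(\mU,\mV)$ is $\lambda$-strongly convex, uniformly in $\mV$. By the symmetric roles of $\mU$ and $\mV$, the same holds for $\mV \mapsto f(\mU,\mV)$. This block strong convexity is the only structural input the proof needs.

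Next I would record the standard inequality: if $h$ is $\mu$-strongly convex and $x^+ = \argmin_x h(x)$, then $h(x) - h(x^+) \ge \frac{\mu}{2}\|x - x^+\|_F^2$ for every $x$ in the domain of $h$. This follows by combining the strong-convexity lower bound $h(x) \ge h(x^+) + \langle g, x - x^+\rangle + \frac{\mu}{2}\|x-x^+\|_F^2$ for any $g \in \partial h(x^+)$ with the first-order optimality condition $\vzero \in \partial h(x^+)$, which annihilates the linear term. I would then apply this twice along one sweep of \Cref{alg:ANLS}. Since $\mU_{k+1}$ minimizes $f(\cdot,\mV_k)$ and $\mU_k \ge \vzero$ is feasible, the inequality with $\mu = \lambda$ gives $f(\mU_k,\mV_k) - f(\mU_{k+1},\mV_k) \ge \frac{\lambda}{2}\|\mU_{k+1} - \mU_k\|_F^2$; likewise, since $\mV_{k+1}$ minimizes $f(\mU_{k+1},\cdot)$ and $\mV_k$ is feasible, $f(\mU_{k+1},\mV_k) - f(\mU_{k+1},\mV_{k+1}) \ge \frac{\lambda}{2}\|\mV_{k+1}-\mV_k\|_F^2$. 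Adding the two inequalities cancels the intermediate value $f(\mU_{k+1},\mV_k)$ and yields the claimed bound.

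The only point requiring care — the main obstacle, though a mild one — is the bookkeeping around the indicator terms $\delta_+(\mU)$ and $\delta_+(\mV)$ in $f$. One must verify that each per-block update is genuinely the unconstrained minimization of a $\lambda$-strongly convex function (the smooth strongly convex part plus the convex indicator), that the previous iterate is an admissible comparison point in the descent inequality (which holds because $\mU_k,\mV_k \ge \vzero$ along the whole sweep), and that $\vzero \in \partial h(x^+)$ is the correct stationarity statement in the presence of $\delta_+$. Once these are confirmed, the result follows with no per-entry computation.
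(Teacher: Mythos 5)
Your proposal is correct and follows essentially the same route as the paper: both arguments rest on the $\lambda$-strong convexity of $f$ in each block (induced by the regularizer $\tfrac{\lambda}{2}\|\mU-\mV\|_F^2$) together with exact block minimization, yielding $\tfrac{\lambda}{2}$-quadratic descent per block and telescoping the intermediate value $f(\mU_{k+1},\mV_k)$. The only difference is presentational: where you invoke the standard inequality $h(x)-h(x^+)\geq \tfrac{\mu}{2}\|x-x^+\|_F^2$ for a $\mu$-strongly convex $h$ at its minimizer, the paper unpacks it by hand via the first-order condition $\nabla_{\mV}g(\mU_k,\mV_{k+1})+\mS_{k+1}=\vzero$ with $\mS_{k+1}\in\partial\delta_+(\mV_{k+1})$, the indicator's subgradient inequality, and a Taylor expansion with Hessian lower bound $\lambda\mId$ --- precisely the bookkeeping you flagged as the mild obstacle.
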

The proof of \Cref{lem:sufficient decrease} is given in \Cref{sec:prf lem sufficient decrease}. We now give the following main convergence guarantee for \Cref{alg:ANLS}.
\begin{thm}[Sequence convergence of \Cref{alg:ANLS}] Let $\{(\mU_k,\mV_k)\}$ be the sequence generated by \Cref{alg:ANLS}. Then 
	\[
	\lim\limits_{k\rightarrow \infty} (\mU_k,\mV_k) = (\mU^\star,\mV^\star),
	\]
	where $(\mU^\star,\mV^\star)$ is a critical point of \eqref{eq:SNMF by reg}. Furthermore the convergence rate is at least sublinear. 
	\label{thm:ANLS}\end{thm}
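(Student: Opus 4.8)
The plan is to place \Cref{thm:ANLS} within the Kurdyka--{\L}ojasiewicz (KL) framework for descent methods (in the style of Attouch--Bolte--Svaiter), which reduces global sequence convergence and rate estimates to verifying three ingredients: a sufficient-decrease property, a subgradient (relative-error) bound, and the KL property of the objective together with boundedness of the iterates. Two of these are already in hand: \Cref{lem:sufficient decrease} supplies the sufficient-decrease inequality with constant $\lambda/2$, and---because that lemma shows the objective is nonincreasing along the iterates---\Cref{lem:bound:iterate} guarantees that $\{(\mU_k,\mV_k)\}$ stays in a compact set, so it admits at least one accumulation point, and the monotone sequence $f(\mU_k,\mV_k)$ converges.

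The first real step is to establish the relative-error bound: to exhibit, for each $k$, an element $\mW_k\in\partial f(\mU_k,\mV_k)$ with $\|\mW_k\|_F\le b\,\|(\mU_k,\mV_k)-(\mU_{k-1},\mV_{k-1})\|_F$ for a constant $b$. I would read off the optimality conditions of the two strongly convex subproblems in \Cref{alg:ANLS}. The $\mV_k$-update is the last performed, so its optimality condition coincides exactly with $\vzero\in\partial_{\mV} f(\mU_k,\mV_k)$, making the $\mV$-block of the residual vanish. For the $\mU$-block I would substitute the $\mU_k$-optimality multiplier $\mG_k\in\partial\delta_+(\mU_k)$ (which is defined from $\mV_{k-1}$) into $\partial_{\mU} f(\mU_k,\mV_k)$ from \eqref{eq:subdifferential U V}; the $\mG_k$ terms cancel and what remains is
\[
(\mU_k\mV_k^\T-\mX)\mV_k-(\mU_k\mV_{k-1}^\T-\mX)\mV_{k-1}+\lambda(\mV_{k-1}-\mV_k),
\]
which, after telescoping $\mV_k^\T\mV_k-\mV_{k-1}^\T\mV_{k-1}$ into first-order differences, is a sum of terms each carrying a factor $\mV_k-\mV_{k-1}$. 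Using the uniform bounds on $\|\mU_k\|_F,\|\mV_k\|_F$ from \Cref{lem:bound:iterate} and on $\|\mX\|_2$, this is controlled by $b\,\|\mV_k-\mV_{k-1}\|_F\le b\,\|(\mU_k,\mV_k)-(\mU_{k-1},\mV_{k-1})\|_F$, as required.

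Next I would verify that $f$ is a KL function. Since $f$ is the sum of a polynomial in $(\mU,\mV)$ and the indicator $\delta_+$ of the (polyhedral, hence semialgebraic) nonnegative orthant, $f$ is semialgebraic and lower semicontinuous; consequently it satisfies the KL property at every point of its domain, with a desingularizing function of the form $\phi(s)=c\,s^{1-\theta}$ for some exponent $\theta\in[0,1)$. Continuity of $f$ along the bounded iterate sequence (its domain being closed and $f$ being continuous on it) supplies the last hypothesis. With sufficient decrease, the relative-error bound, boundedness, and the KL property in place, the abstract theorem yields that the iterates form a Cauchy sequence of finite length $\sum_k\|(\mU_{k+1},\mV_{k+1})-(\mU_k,\mV_k)\|_F<\infty$, and therefore converge to a single limit $(\mU^\star,\mV^\star)$, which---being an accumulation point at which the residual subgradient tends to zero---is a critical point of \eqref{eq:SNMF by reg}. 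Finally, the KL-based rate analysis converts the exponent $\theta$ into a convergence speed: $\theta\in(1/2,1)$ gives a sublinear rate $\|(\mU_k,\mV_k)-(\mU^\star,\mV^\star)\|_F=O(k^{-(1-\theta)/(2\theta-1)})$, while $\theta\in(0,1/2]$ gives linear and $\theta=0$ finite convergence, so in all cases the rate is at least sublinear.

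I expect the main obstacle to be the relative-error bound: one must choose the correct subgradient element (coming from the previous-block multiplier rather than the current gradient), verify the exact cancellation of the indicator terms, and then exploit boundedness of the iterates to turn the bilinear residual into something linear in the consecutive differences. The KL verification and the invocation of the abstract convergence/rate theorem are comparatively routine once the semialgebraic structure is noted.
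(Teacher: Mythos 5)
Your proposal is correct and takes essentially the same route as the paper: sufficient decrease (\Cref{lem:sufficient decrease}), boundedness of the iterates (\Cref{lem:bound:iterate}), a relative-error subgradient bound (the paper's \Cref{lem:safeguard}, obtained there via the Lipschitz estimate of \Cref{lem:lipchitz} rather than your explicit telescoping, and with the exactly-zero residual placed on the $\mU$-block instead of the $\mV$-block---an equivalent bookkeeping choice), semialgebraicity of $f$ giving the KL property, and the standard KL argument yielding finite length, a single critical limit, and the $\theta$-dependent linear/sublinear rates. The only difference is presentational: the paper re-derives the Attouch--Bolte--Svaiter machinery in-house (including a uniformized KL inequality over the compact connected limit set, \Cref{lem:Uniform KL}) rather than invoking the abstract convergence theorem directly as you do.
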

The proof of \Cref{thm:ANLS} is given in \Cref{sec:prf thm ANLS}. Equipped with all the machinery developed above, the global sublinear sequence convergence of SymANLS to a critical solution of symmetric NMF \eqref{eq:SNMF}  is formally guaranteed in the following result, which is a direct consequence of \Cref{thm:critical point U = V}, \Cref{lem:sufficient decrease} and \Cref{thm:ANLS}. 

\begin{cor}[Convergence of \Cref{alg:ANLS} to a critical point of \eqref{eq:SNMF}]
	Suppose \Cref{alg:ANLS} is initialized  with $\mV_0=\mU_0$. Choose 
	\[
	\lambda > \frac{1}{2} \left( \|\mX\|_2 + \left\|\mX-\mU_0\mU_0^\T\right\|_F - \sigma_n(\mX)  \right).
	\]	
	Let $\{(\mU_k,\mV_k)\}$ be the sequence generated by \Cref{alg:ANLS}.  Then $\{(\mU_k,\mV_k)\}$ is convergent and converges to $(\mU^\star,\mV^\star)$ with $
	\mU^\star = \mV^\star $
	and $\mU^\star$	a critical point of \eqref{eq:SNMF}. Furthermore, the convergence rate is at least sublinear.
	\label{cor:ANLS}\end{cor}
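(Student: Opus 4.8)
The plan is to derive \Cref{cor:ANLS} by checking that SymANLS meets every hypothesis of \Cref{thm:critical point U = V} and then invoking that theorem alongside \Cref{thm:ANLS}. Note that \Cref{thm:critical point U = V} is conditional: it converts any decreasing, convergent local-search method for \eqref{eq:SNMF by reg} started at $\mV_0 = \mU_0$ and run with a sufficiently large $\lambda$ into a guarantee that its limit $(\mU^\star,\mV^\star)$ satisfies $\mU^\star = \mV^\star$ and is critical for \eqref{eq:SNMF}. Hence no new estimate is needed; the entire task is to certify that SymANLS is such a method and that the prescribed $\lambda$ clears the stated threshold.

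First I would discharge the initialization and descent hypotheses. The initialization $\mV_0 = \mU_0$ is built into \Cref{alg:ANLS}, matching the requirement of \Cref{thm:critical point U = V}. The required monotone descent is supplied verbatim by \Cref{lem:sufficient decrease}, whose right-hand side $\frac{\lambda}{2}(\|\mU_{k+1}-\mU_k\|_F^2 + \|\mV_{k+1}-\mV_k\|_F^2)$ is nonnegative, so $f$ is nonincreasing along the iterates. Together with $\mV_0 = \mU_0$, this descent is precisely the premise of \Cref{lem:bound:iterate}, which keeps $\|\mU_k\mV_k^\T\|_F$ uniformly bounded; this boundedness is the mechanism by which the chosen $\lambda$ pushes the limiting energy below the $2\lambda + \sigma_n(\mX)$ bound that \Cref{thm:U = V} requires.

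Next I would supply the convergence hypothesis. \Cref{thm:ANLS} asserts that the SymANLS sequence $\{(\mU_k,\mV_k)\}$ converges to a critical point $(\mU^\star,\mV^\star)$ of \eqref{eq:SNMF by reg} at a rate that is at least sublinear. This simultaneously discharges the remaining premise of \Cref{thm:critical point U = V} (convergence to a critical point of the regularized problem) and delivers the convergence-rate conclusion of the corollary.

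Finally, since the corollary's choice $\lambda > \frac{1}{2}(\|\mX\|_2 + \|\mX-\mU_0\mU_0^\T\|_F - \sigma_n(\mX))$ coincides exactly with the threshold in \Cref{thm:critical point U = V}, all hypotheses of that theorem hold, and its conclusion applies directly: $\mU^\star = \mV^\star$ and $\mU^\star$ is a critical point of \eqref{eq:SNMF}. Because the statement is purely a composition of previously established results, there is no substantive obstacle. The only point demanding care is the bookkeeping that the $\mV_0 = \mU_0$ initialization, the $\lambda$-threshold, and the notion of ``critical point of \eqref{eq:SNMF by reg}'' all align across \Cref{lem:sufficient decrease}, \Cref{thm:ANLS}, and \Cref{thm:critical point U = V}, so that the three results chain together without gap.
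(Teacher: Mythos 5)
Your proposal is correct and follows exactly the paper's own route: the paper proves \Cref{cor:ANLS} in one line as a direct consequence of \Cref{thm:critical point U = V}, \Cref{lem:sufficient decrease}, and \Cref{thm:ANLS}, which is precisely the composition you carry out. Your version merely makes explicit the bookkeeping (initialization, descent, $\lambda$-threshold) that the paper leaves implicit, so there is no substantive difference.
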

\begin{proof}[Proof of \Cref{cor:ANLS}]
	This follows from \Cref{thm:critical point U = V}, \Cref{lem:sufficient decrease} and \Cref{thm:ANLS}. 
\end{proof}	

\paragraph{Remark.} We  emphasis that the specific structure within \eqref{eq:SNMF by reg} enables \Cref{cor:ANLS} get rid of the assumption on the boundedness of iterates $(\mU_{k},\mV_k)$ and also the requirement of a proximal term, which is usually required for convergence analysis but not necessarily used in practice.  As a contrast and also as pointed out in~\cite{huang2016flexible}, to provide the convergence guarantee for standard ANLS solving nonsymmetric NMF \eqref{eq:NMF}, one needs to modify it by adding an additional proximal term as well as  an additional constraint to make the factors bounded.

\subsection{HALS for symmetric NMF (SymHALS)}
As we stated before,  due to the nonnegative constraint,  there is no closed-from solution for \eqref{eq:Uk ANLS}, although one may utilize some efficient algorithms for solving \eqref{eq:Uk ANLS}. However, there do exist a close-form solution when $r =1$. HALS exploits this observation by splitting the pair of variables $(\mU,\mV)$ into  columns $(\vu_1,\cdots,\vu_r,\vv_1,\cdots,\vv_r)$ and then optimizing over \emph{column by column}. We utilize similar idea for solving \eqref{eq:SNMF by reg}. Specifically, rewrite $\mU\mV^\T = \vu_i\vv_i^\T + \sum_{j\neq i}\vu_j\vv_j^\T$ and denote by
\[
\mX_i = \mX - \sum_{j\neq i}\vu_j\vv_j^\T
\]
the factorization residual $\mX-\mU\mV^\T$ excluding $\vu_i\vv_i^\T$. Now if we minimize the objective function $f$ in \eqref{eq:SNMF by reg} only with respect to $\vu_i$, then it is equivalent to 
\[
\vu_i^\natural = \argmin_{\vu_i\in\R^{n}} \frac{1}{2}\|\mX_i - \vu_i\vv_i^\T\|_F^2 + \frac{\lambda}{2}\|\vu_i - \vv_i\|_2^2 = \max\left(\frac{( \mX_{i} +\lambda \mId)\vv_i}{\|\vv_i\|_2^2+\lambda },0\right).
\]
Similar closed-form solution also holds when optimizing in terms of $\vv_i$. With this observation, we utilize alternating-type minimization that at each time minimizes the objective function in \eqref{eq:SNMF by reg} only with respect to one column in $\mU$ or $\mV$ and denote the corresponding algorithm as SymHALS. We depict SymHALS in \Cref{alg:HALS}, where we use subscript $k$ to denote the $k$-th iteration. Note that to make the presentation easily understood, we directly use $\mX - \sum_{j=1}^{i-1}\vu_j^{k}(\vv_j^k)^\T -  \sum_{j=i+1}^{r}\vu_j^{k-1}(\vv_j^{k-1})^\T$ to update $\mX_i^k$, which is not adopted in practice. Instead, letting $\mX_1^k = \mX - \mU^{k-1} (\mV^{k-1})^\T$, we can then update $\mX_i^k$ with only the computation of $\vu_i^k(\vv_i^k)^\T$ by recursively utilizing the previous one. The detailed information about efficient implementation of SymHALS can be found in  \Cref{alg:HALS 2}. 

\begin{algorithm}[htb]
	\caption{SymHALS}
	\label{alg:HALS}
	{\bf Initialization:}  $\mU_0, \mV_0$, iteration $k=1$. 
	
	\begin{algorithmic}[1]
		\WHILE{stop criterion not meet}
		\FOR{$i=1:r$}
		\STATE $\mX_i^k = \mX - \sum_{j=1}^{i-1}\vu_j^{k}(\vv_j^k)^\T -  \sum_{j=i+1}^{r}\vu_j^{k-1}(\vv_j^{k-1})^\T$;
		\STATE $\vu_i^k  = \argmin_{\vu_i\geq \vzero} \frac{1}{2}\|\mX_{i}^k - \vu_i(\vv_i^{k-1})^\T\|_F^2 + \frac{\lambda}{2}\|\vu_i - \vv_i^{k-1}\|_F^2 = \max\left(\frac{( \mX_{i}^k +\lambda \mId)\vv_i^{k-1}}{\|\vv_i^{k-1}\|_2^2+\lambda },0\right)$;
		\STATE
		$\vv_i^k  = \argmin_{\vv_i\geq \vzero} \frac{1}{2}\|\mX_i^k - \vu_i^k\vv_i^\T\|_F^2  + \frac{\lambda}{2}\|\vu_i^k - \vv_i\|_F^2 = \max\left(\frac{( \mX_{i}^k +\lambda \mId)\vu_i^{k}}{\|\vu_i^{k}\|_2^2+\lambda },0\right);
		$
		\ENDFOR
		\STATE$k = k+1$.
		\ENDWHILE
	\end{algorithmic}
	{\bf Output:} factorization $(\mU_k, \mV_k)$.
\end{algorithm}

\begin{algorithm}[htb]
	\caption{Efficient Implementation of SymHALS}
	\label{alg:HALS 2}
	{\bf Initialization:}  $\mU_0, \mV_0$, iteration $k=1$. 
	
	\begin{algorithmic}[1]
		\STATE precompute residual $\mX_1^k = X - \mU^{k-1}(\mV^{k-1})^\T$.
		\WHILE{stop criterion not meet}
		\FOR{$i=1:r$}
		\STATE $\mX_i^k \leftarrow \mX_i^k + \vu_i^{k-1}(\vv_i^{k-1})^\T$
		\STATE $\vu_i^k  = \argmin_{\vu_i\geq \vzero} \frac{1}{2}\|\mX_{i}^k - \vu_i(\vv_i^{k-1})^\T\|_F^2 + \frac{\lambda}{2}\|\vu_i - \vv_i^{k-1}\|_F^2 = \max\left(\frac{( \mX_{i}^k +\lambda \mId)\vv_i^{k-1}}{\|\vv_i^{k-1}\|_2^2+\lambda },0\right)$;
		\STATE
		$\vv_i^k  = \argmin_{\vv_i\geq \vzero} \frac{1}{2}\|\mX_i^k - \vu_i^k\vv_i^\T\|_F^2  + \frac{\lambda}{2}\|\vu_i^k - \vv_i\|_F^2 = \max\left(\frac{( \mX_{i}^k +\lambda \mId)\vu_i^{k}}{\|\vu_i^{k}\|_2^2+\lambda },0\right);
		$
		\STATE update residual as $\mX_i^{k} \leftarrow \mX_i^k - \vu_i^{k}(\vv_i^{k})^\T$.
		\ENDFOR
		\STATE$\mX_i^{k+1}  = \mX_i^{k}$, $k = k+1$.
		\ENDWHILE
	\end{algorithmic}
	{\bf Output:} factorization $(\mU_k, \mV_k)$.
\end{algorithm}

The SymHALS enjoys similar descend property and convergence guarantee to algorithm SymANLS as both of them are alternating-based algorithms. 

\begin{lem}\label{lem:sufficient decrease for HALS}
	Suppose the iterates sequence $\{(\mU_k,\mV_k)\}$ is generated by \Cref{alg:HALS}, then we have 
	\[
	f(\mU_k,\mV_k) - f(\mU_{k+1},\mV_{k+1}) \geq \frac{\lambda}{2}( \|\mU_{k+1}  - \mU_{k}\|_F^2 + \|\mV_{k+1}  - \mV_{k}\|_F^2).
	\]
\end{lem}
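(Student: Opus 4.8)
The plan is to mirror the argument for SymANLS in \Cref{lem:sufficient decrease}, but applied column-by-column rather than block-by-block, exploiting that each individual column update in \Cref{alg:HALS} is the exact minimizer of a $\lambda$-strongly convex subproblem. The one elementary fact I would invoke throughout is: if $g$ is $\mu$-strongly convex and differentiable on a convex set $C$ with minimizer $x^\star = \argmin_{x\in C} g(x)$, then $g(x) - g(x^\star) \ge \frac{\mu}{2}\|x - x^\star\|_2^2$ for every $x \in C$. This follows immediately by combining the strong-convexity lower bound $g(x) \ge g(x^\star) + \langle \nabla g(x^\star), x - x^\star\rangle + \frac{\mu}{2}\|x - x^\star\|_2^2$ with the first-order optimality condition $\langle \nabla g(x^\star), x - x^\star\rangle \ge 0$.

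First I would pin down the strong-convexity parameter. When all variables except the single column $\vu_i$ are held at their current values, the smooth part of $f$ reduces, up to an additive constant gathering the untouched columns, to $g_i(\vu_i) = \frac12\|\mX_i^k - \vu_i(\vv_i^{k-1})^\T\|_F^2 + \frac{\lambda}{2}\|\vu_i - \vv_i^{k-1}\|_2^2$, whose Hessian in $\vu_i$ is $(\|\vv_i^{k-1}\|_2^2 + \lambda)\mId \succeq \lambda\mId$. Hence each $\vu_i$-subproblem, and symmetrically each $\vv_i$-subproblem, is $\lambda$-strongly convex over the convex feasible set $\{\vu_i \ge \vzero\}$, with a parameter $\lambda$ that is uniform and independent of the current iterate.

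Second, I would set up the telescoping over the $2r$ single-column updates performed in one outer iteration, ordered as $\vu_1, \vv_1, \dots, \vu_r, \vv_r$, generating intermediate points $w_0 = (\mU_k,\mV_k), w_1, \dots, w_{2r} = (\mU_{k+1},\mV_{k+1})$. The key observation is that updating one column is exactly block-coordinate minimization of the full objective $f$: by the residual bookkeeping $\mX_i^k = \mX - \sum_{j\neq i}\vu_j\vv_j^\T$ the data-fit term depends on $\vu_i$ only through $\vu_i\vv_i^\T$, and the regularizer $\frac{\lambda}{2}\|\mU-\mV\|_F^2$ splits columnwise, so restricting $f$ to $\vu_i$ gives precisely $g_i(\vu_i)$ plus a constant. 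Applying the strong-convexity fact with the previous column value as the feasible point and the newly computed column as the minimizer, each step obeys $f(w_t) - f(w_{t+1}) \ge \frac{\lambda}{2}\|(\text{changed column})_{t+1} - (\text{changed column})_t\|_2^2$.

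Finally, summing these $2r$ inequalities telescopes the left-hand side to $f(\mU_k,\mV_k) - f(\mU_{k+1},\mV_{k+1})$, while the right-hand side collects $\frac{\lambda}{2}\sum_{i}\big(\|\vu_i^{k+1}-\vu_i^k\|_2^2 + \|\vv_i^{k+1}-\vv_i^k\|_2^2\big)$; using that the squared Frobenius norm of a matrix equals the sum of its columns' squared norms converts this into $\frac{\lambda}{2}(\|\mU_{k+1}-\mU_k\|_F^2 + \|\mV_{k+1}-\mV_k\|_F^2)$, which is the claim. I expect the main point requiring care to be the block-coordinate-minimization identity in the third step: one must verify that a single-column update decreases $f$ by \emph{exactly} the decrease of $g_i$, not merely bounds it, so that the per-column decreases sum to the full outer-iteration decrease rather than to a loose surrogate. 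This is the only place where the columnwise splitting of SymHALS, as opposed to the two-block splitting used for SymANLS, genuinely enters the argument.
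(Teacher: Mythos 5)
Your proposal is correct and takes essentially the same route as the paper: the sketch in \Cref{sec:prf HALS} establishes exactly your per-column inequality $f(\ldots,\vu_i^{k},\ldots) - f(\ldots,\vu_i^{k+1},\ldots) \geq \frac{\lambda}{2}\|\vu_i^{k+1}-\vu_i^{k}\|_2^2$ for each of the $2r$ single-column updates (derived, as in \Cref{lem:sufficient decrease}, from the $\lambda$-strong convexity of each subproblem together with the first-order optimality condition at the constrained minimizer) and then sums these inequalities across the outer iteration, just as in your telescoping step. Your explicit check that the residual bookkeeping makes each column update an \emph{exact} coordinate minimization of $f$ is the same point the paper relies on implicitly, so the two arguments coincide.
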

\begin{thm}[Sequence convergence of \Cref{alg:HALS}] For any $\lambda>0$, let $\{(\mU_k,\mV_k)\}$ be the sequence generated by \Cref{alg:HALS}.  Then 
	\[
	\lim\limits_{k\rightarrow \infty} (\mU_k,\mV_k) = (\mU^\star,\mV^\star)
	\]
	where $(\mU^\star,\mV^\star)$ is a critical point of \eqref{eq:SNMF by reg}. Furthermore the convergence rate is at least sublinear. 
\label{thm:HALS}\end{thm}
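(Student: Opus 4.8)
The plan is to cast \Cref{alg:HALS} as an instance of the abstract descent framework for nonsmooth nonconvex problems governed by the Kurdyka--{\L}ojasiewicz (KL) inequality \cite{attouch2010proximal}, mirroring the argument behind \Cref{thm:ANLS}. Writing $z_k:=(\mU_k,\mV_k)$, this framework certifies single-point convergence once I verify three conditions at the generated iterates: (i) a sufficient-decrease estimate $f(z_k)-f(z_{k+1})\geq \tfrac{\lambda}{2}\|z_{k+1}-z_k\|_F^2$; (ii) a relative-error (bounded-subgradient) estimate $\dist(\vzero,\partial f(z_{k+1}))\leq b\,\|z_{k+1}-z_k\|_F$ for some constant $b<\infty$; and (iii) the KL property of $f$ at every point, together with boundedness of $\{z_k\}$. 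Ingredient (i) is already supplied by \Cref{lem:sufficient decrease for HALS}, so I would not need to redo it.

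First I would settle boundedness. Because (i) makes $f(z_k)$ nonincreasing, every iterate lies in the sublevel set $\{(\mU,\mV):f(\mU,\mV)\leq f(\mU_0,\mV_0)\}$, and I would show this set is compact by checking that $f$ is coercive: if $\|\mU-\mV\|_F$ stays bounded (else the regularizer already forces $f\to\infty$) while $\|\mU\|_F\to\infty$, then $\sigma_1(\mU)\to\infty$, so $\|\mU\mU^\T\|_F\geq\sigma_1(\mU)^2$ grows quadratically whereas $\|\mU\mU^\T\|_F\leq\|\mU\mV^\T\|_F+\sigma_1(\mU)\|\mU-\mV\|_F$ can grow only linearly in $\sigma_1(\mU)$ unless $\|\mU\mV^\T\|_F\to\infty$; either way $f\to\infty$. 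This compactness both guarantees the limit-point (continuity) hypothesis of the framework and supplies a uniform bound on the factors, which is what keeps the constant $b$ in (ii) finite.

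The crux is (ii). I would use the fact that each column update in \Cref{alg:HALS} is the exact minimizer of a strongly convex single-column problem over the nonnegative orthant, so its first-order optimality condition yields an explicit element of the column subdifferential---but taken against the \emph{staggered} residual $\mX_i^k$, which mixes already-updated columns $j<i$ with not-yet-updated columns $j>i$ (and uses $\vv_i^{k-1}$ rather than $\vv_i^{k}$). To assemble a genuine subgradient of $f$ at the end-of-sweep iterate $z_{k+1}$, I would re-express each optimality condition in terms of the true residual $\mX-\mU_{k+1}\mV_{k+1}^\T$ and collect the discrepancy into correction terms built from the rank-one differences $\vu_j^{k}(\vv_j^{k})^\T-\vu_j^{k-1}(\vv_j^{k-1})^\T$. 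Using the boundedness from the previous step to dominate the factors multiplying these differences, each correction is $O(\|z_{k+1}-z_k\|_F)$, which delivers (ii).

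For (iii), $f$ is the sum of the polynomial $\tfrac12\|\mX-\mU\mV^\T\|_F^2+\tfrac{\lambda}{2}\|\mU-\mV\|_F^2$ and the indicators of the polyhedral nonnegative orthant, hence semialgebraic, so it satisfies the KL inequality with some exponent $\theta\in[0,1)$ at every point. With (i)--(iii) and boundedness verified, the theorem of \cite{attouch2010proximal} yields finite length $\sum_k\|z_{k+1}-z_k\|_F<\infty$, forcing $z_k$ to converge to a single limit $z^\star=(\mU^\star,\mV^\star)$; the bounded-subgradient property then forces $\vzero\in\partial f(z^\star)$, i.e.\ $z^\star$ is a critical point of \eqref{eq:SNMF by reg}, and the KL exponent gives the stated at-least-sublinear rate. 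I expect (ii) to be the genuine obstacle: unlike SymANLS, where an entire factor is minimized in one shot, the column-by-column sweeps of SymHALS require me to track the staggered residuals $\mX_i^k$ and prove that all the induced cross terms are controlled by the single aggregate step $\|z_{k+1}-z_k\|_F$.
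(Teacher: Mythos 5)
Your proposal is correct and follows essentially the same route as the paper: the proof sketch in \Cref{sec:prf HALS} likewise reduces \Cref{thm:HALS} to the per-column sufficient decrease of \Cref{lem:sufficient decrease for HALS}, a safeguard (relative-error) subgradient bound in the spirit of \Cref{lem:safeguard}, and the KL property of the semialgebraic objective, then runs the identical descent-plus-KL machinery of \Cref{thm:ANLS}, which the paper proves inline rather than citing the abstract framework of \cite{attouch2010proximal}. The only minor divergences are that you obtain boundedness of the iterates from coercivity of $f$ where the paper invokes the descent-based bound of \Cref{lem:bound:iterate}, and that you spell out the staggered-residual bookkeeping behind the relative-error estimate, which the paper compresses into ``by the same strategy for \Cref{lem:safeguard}.''
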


The proof of \Cref{lem:sufficient decrease for HALS} and \Cref{thm:HALS} follows similar arguments used for  \Cref{lem:sufficient decrease} and \Cref{thm:ANLS}. See the discussion in \Cref{sec:prf HALS}.

\begin{cor}[Convergence of \Cref{alg:HALS} to a critical point of \eqref{eq:SNMF}]
	Suppose it is initialized  with $\mV_0=\mU_0$. Choose 
	\[
	\lambda > \frac{1}{2} \left( \|\mX\|_2 + \left\|\mX-\mU_0\mU_0^\T\right\|_F - \sigma_n(\mX)  \right).
	\]	
	Let $\{(\mU_k,\mV_k)\}$ be the sequence generated by \Cref{alg:HALS}.  Then $\{(\mU_k,\mV_k)\}$ is convergent and converges to $(\mU^\star,\mV^\star)$ with $
	\mU^\star = \mV^\star $
	and $\mU^\star$	being a critical point of \eqref{eq:SNMF}. Furthermore, the convergence rate is at least sublinear.
	\label{cor:HALS}\end{cor}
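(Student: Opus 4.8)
The plan is to obtain \Cref{cor:HALS} as a direct corollary by checking that \Cref{alg:HALS} meets every hypothesis of \Cref{thm:critical point U = V} and then invoking that theorem, exactly in the spirit of the proof of \Cref{cor:ANLS}. The chosen regularization parameter $\lambda$ and the symmetric initialization $\mV_0 = \mU_0$ are stated so as to match the standing assumptions of \Cref{thm:critical point U = V} verbatim, so no reconciliation of the constants is required; the whole argument is a chaining of three already-established facts.

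First I would discharge the monotone-decrease hypothesis. By \Cref{lem:sufficient decrease for HALS}, the sequence $\{(\mU_k,\mV_k)\}$ produced by \Cref{alg:HALS} satisfies
\[
f(\mU_k,\mV_k) - f(\mU_{k+1},\mV_{k+1}) \geq \frac{\lambda}{2}\left( \|\mU_{k+1}  - \mU_{k}\|_F^2 + \|\mV_{k+1}  - \mV_{k}\|_F^2 \right) \geq 0,
\]
so the objective value $f$ is sequentially decreased along the iterates. This is precisely the descent requirement that \Cref{thm:critical point U = V} imposes on a local search algorithm.

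Next I would discharge the convergence hypothesis. By \Cref{thm:HALS}, which holds for every $\lambda>0$ and in particular for the $\lambda$ selected here, the iterate sequence $\{(\mU_k,\mV_k)\}$ is convergent and its limit $(\mU^\star,\mV^\star)$ is a critical point of the regularized problem \eqref{eq:SNMF by reg}, at an at-least-sublinear rate. Combined with the decrease property just established and the prescribed initialization $\mV_0=\mU_0$, all the hypotheses of \Cref{thm:critical point U = V}---correct $\lambda$, symmetric start, monotone descent, and convergence to a critical point---are in force.

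Invoking \Cref{thm:critical point U = V} then yields $\mU^\star = \mV^\star$ and that $\mU^\star$ is a critical point of the symmetric NMF \eqref{eq:SNMF}, while the sublinear rate is inherited verbatim from \Cref{thm:HALS}. I do not expect a genuine obstacle, since the statement is a pure combination of \Cref{lem:sufficient decrease for HALS}, \Cref{thm:HALS}, and \Cref{thm:critical point U = V}. The only point warranting care is confirming that the bounded-energy condition on $\mU^\star\mV^{\star\T}$ required by \Cref{thm:U = V} is already absorbed into the hypotheses of \Cref{thm:critical point U = V} through \Cref{lem:bound:iterate} and the specific lower bound on $\lambda$, so that no separate energy assumption on the limit point needs to be verified here.
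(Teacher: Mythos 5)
Your proposal is correct and matches the paper's proof exactly: the paper likewise derives \Cref{cor:HALS} as an immediate consequence of \Cref{thm:critical point U = V}, \Cref{lem:sufficient decrease for HALS}, and \Cref{thm:HALS}, with the choice of $\lambda$ and the symmetric initialization $\mV_0 = \mU_0$ discharging the hypotheses of \Cref{thm:critical point U = V} just as you describe. Your closing remark that the bounded-energy condition of \Cref{thm:U = V} is already absorbed into \Cref{thm:critical point U = V} via \Cref{lem:bound:iterate} is also the right reading of how the constants were chosen.
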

\begin{proof}[Proof of \Cref{cor:HALS}]
	This corollary is a direct consequence of \Cref{thm:critical point U = V}, \Cref{lem:sufficient decrease for HALS} and \Cref{thm:HALS}. 
\end{proof}

\paragraph{Remark.} Similar to \Cref{cor:ANLS}, \Cref{cor:HALS} has no assumption on the boundedness of  iterates $(\mU_k,\mV_k)$ and it establishes convergence guarantee for SymHALS without the aid from a proximal term. As a contrast, to establish the subsequence convergence for classical HALS solving nonsymmetric NMF \cite{cichocki2009fast,gillis2012accelerated} (i.e., setting $\lambda = 0$ in SymHALS), one needs the assumption that every column of $(\mU_k,\mV_k)$ is not zero through all iterations. Though such assumption can be satisfied by using additional constraints, it actually solves a slightly different problem than the original nonsymmetric NMF \eqref{eq:NMF}. On the other hand,  SymHALS overcomes this issue and admits sequence convergence because of the additional regularizer in \eqref{eq:SNMF by reg}.

\section{Numerical Experiments}
\label{sec:experiments}
In this section, we conduct experiments on both synthetic data and real data to illustrate the performance of our proposed algorithms and compare it to other state-of-the-art ones, in terms of both  convergence property and image clustering performance.

For comparison convenience, we define 
\[
E^k = \frac{\|\mX-\mU^k(\mU^k)^\T\|_F^2}{\|\mX\|_F^2}
\]
as the normalized fitting error at $k$-th iteration.

 Besides SymANLS and SymHALS,  we also apply the  greedy coordinate descent (GCD) algorithm in \cite{hsieh2011fast} (which is designed for tackling nonsymmetric NMF) to solve the reformulated problem \eqref{eq:SNMF by reg} and denote the corrosponding algorithm as \textbf{SymGCD}.  SymGCD is expected to have similar sequence convergence guarantee as SymANLS and SymHALS. We list the algorithms to compare: 1) \textbf{ADMM} in \cite{lu2017nonconvex}, where there is a regularization item in their augmented Lagrangian and  we tune a good one for comparison;  2) \textbf{SymNewton} \cite{kuang2015symnmf} which is a Newton-like algorithm by with a the Hessian matrix in Newton's method for computation efficiency; and 3) \textbf{PGD} in \cite{lin2007projected}.  The algorithm in \cite{vandaele2016efficient} is inefficient for large scale data, since they apply an alternating minimization over each coordinate which entails many loops for large scale $\mU$. 
 
\subsection{Convergence verification}
We randomly generate a matrix $\mU\in\R^{50\times 5}(n=50,r=5)$ with each entry independently following a standard Gaussian distribution. To enforce nonnegativity, we then take absolute value on each entry of $\mU$ to get $\mU^\star$.  Data matrix $\mX$ is constructed as $\mU^\star (\mU^\star)^\T$ which is nonnegative and PSD.  We initialize all the algorithms with same $\mU^0$ and $\mV^0$, whose entries are \emph{i.i.d.} uniformly distributed  between 0 to 1.

\begin{figure}
	\centering
	\includegraphics[width=0.5\textwidth]{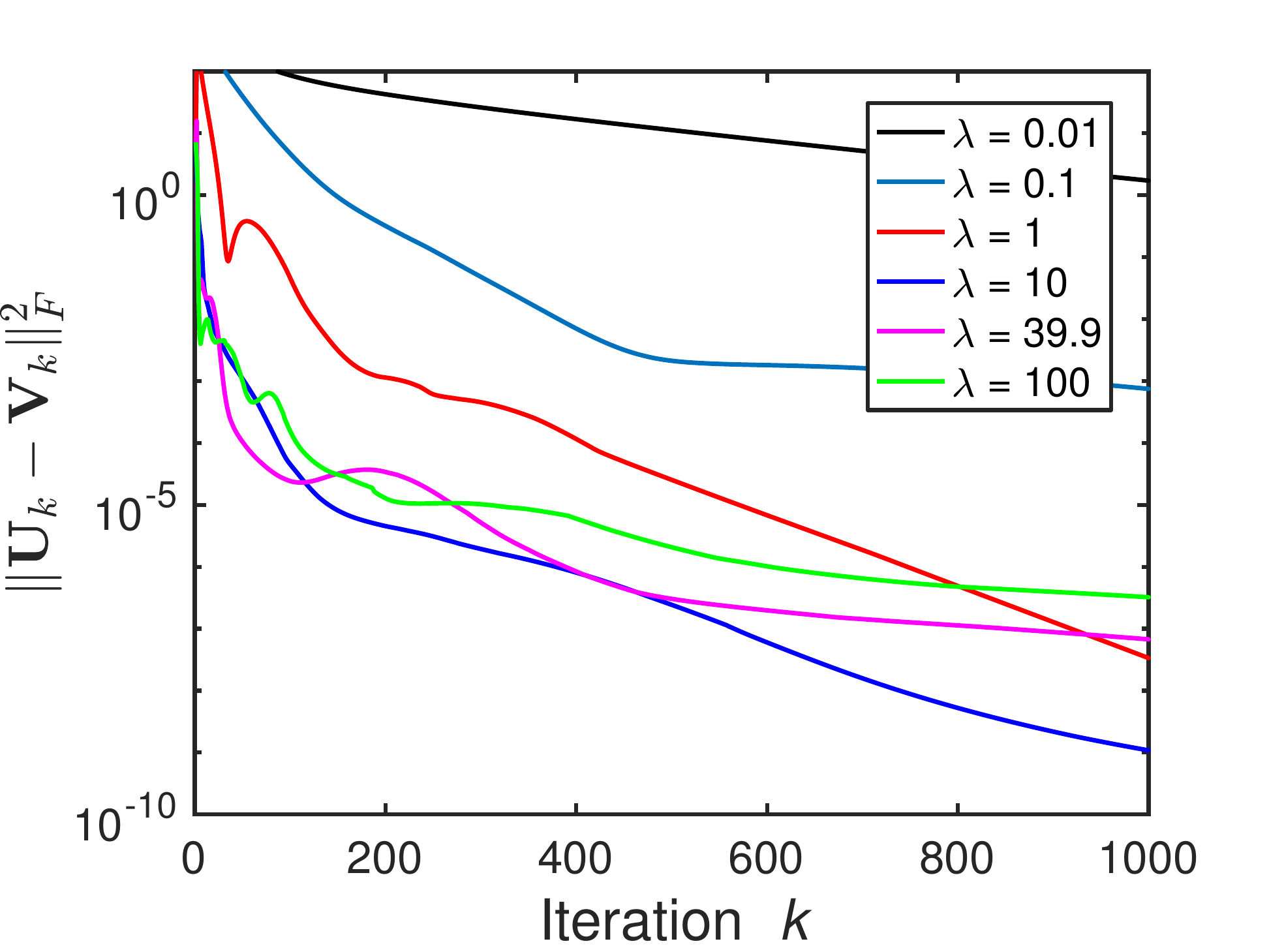}
	\caption{\footnotesize SymHALS with different $\lambda$. Here $n = 50, r = 5$.}\label{fig:lambda}
\end{figure}

To study the effect of the parameter $\lambda$ in \eqref{eq:SNMF by reg}, we show the value $\|\mU_{k}-\mV_{k}\|_F^2$ versus iteration for different choices of $\lambda$ by SymHALS in \Cref{fig:lambda}.  While for this experimental setting the lower bound of $\lambda$ provided in \Cref{thm:critical point U = V} is 39.9, we  observe that $\|\mU_{k}-\mV_{k}\|_F^2$ still converges to 0 with much smaller $\lambda$. This suggests that the \emph{sufficient condition} on the choice of $\lambda$ in \Cref{thm:critical point U = V} is stronger than necessary, leaving room for future improvements. Particularly, we suspect that SymHALS converges to a critical point $(\mU^\star, \mV^\star)$ with $\mU^\star = \mV^\star$ (i.e. a critical point of symmetric NMF) for any $\lambda >0$; we leave this line of theoretical justification as our future work. On the other hand, we note that although SymHALS finds a critical point of symmetric NMF for most of the $\lambda$, the convergence speed varies for different $\lambda$. For example, we observe that either a very large or small $\lambda$ yields a slow convergence speed. In the sequel, we tune the best parameter $\lambda$ for each experiment. 

We also  test on real world dataset CBCL~\footnote{http://cbcl.mit.edu/software-datasets/FaceData2.html}, where there are 2429 face image data with dimension $19\times 19$. We construct the similarity matrix $\mX$ following  ~\cite[section 7.1, step 1 to step 3]{kuang2015symnmf}. The convergence results on synthetic data and real world data are shown in \Cref{fig:synthetic} (a1)-(a2) and \Cref{fig:synthetic} (b1)-(b2), respectively. We observe that the SymANLS, SymHALS, and SymGCD 1) converge faster; 2) empirically have a  linear convergence rate  in terms of $E^k$.

\begin{figure}[!htb]  
	\centering
	\begin{minipage}{0.48\linewidth}
		\includegraphics[width=8cm]{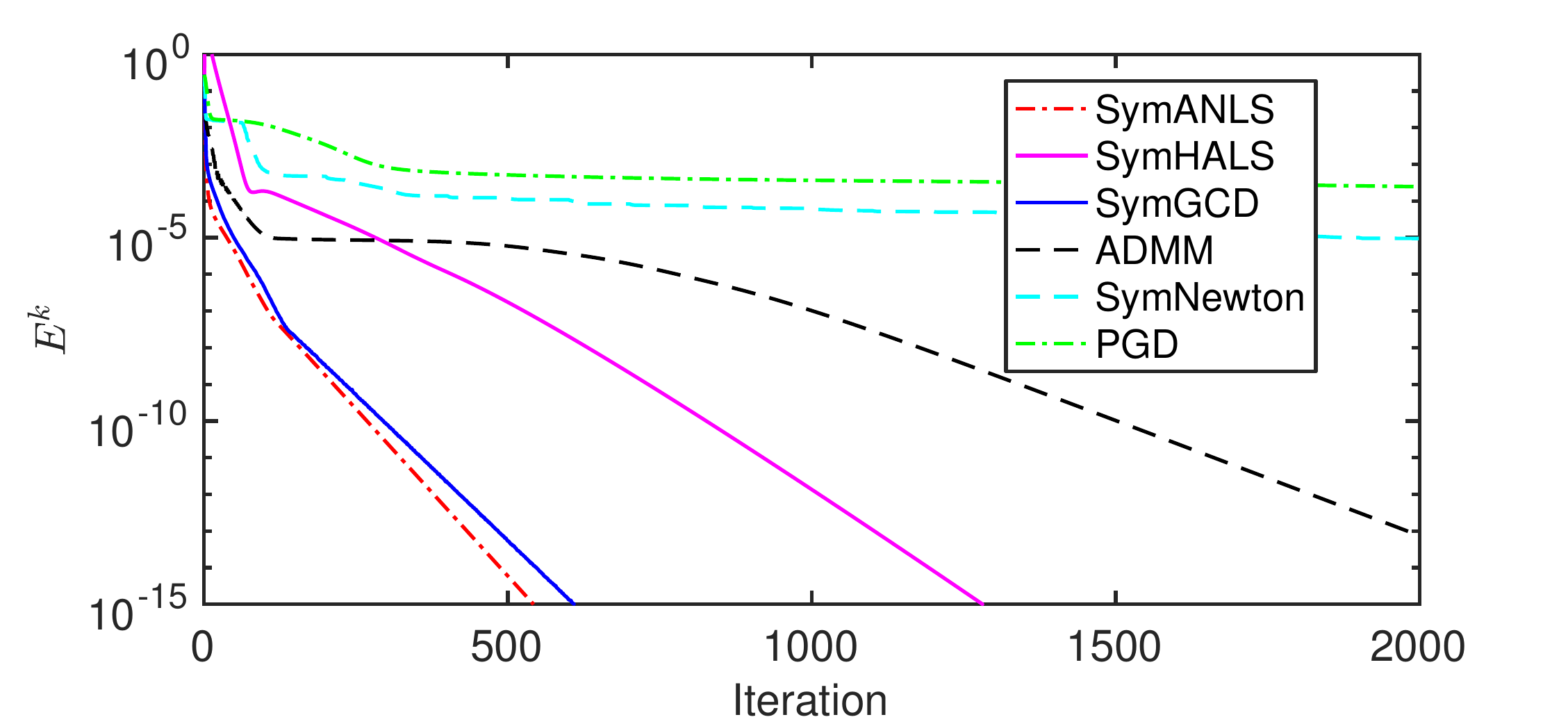}
		\centerline{(a1)}
	\end{minipage}
	\hfill
	\begin{minipage}{0.48\linewidth}
		\includegraphics[width=8cm]{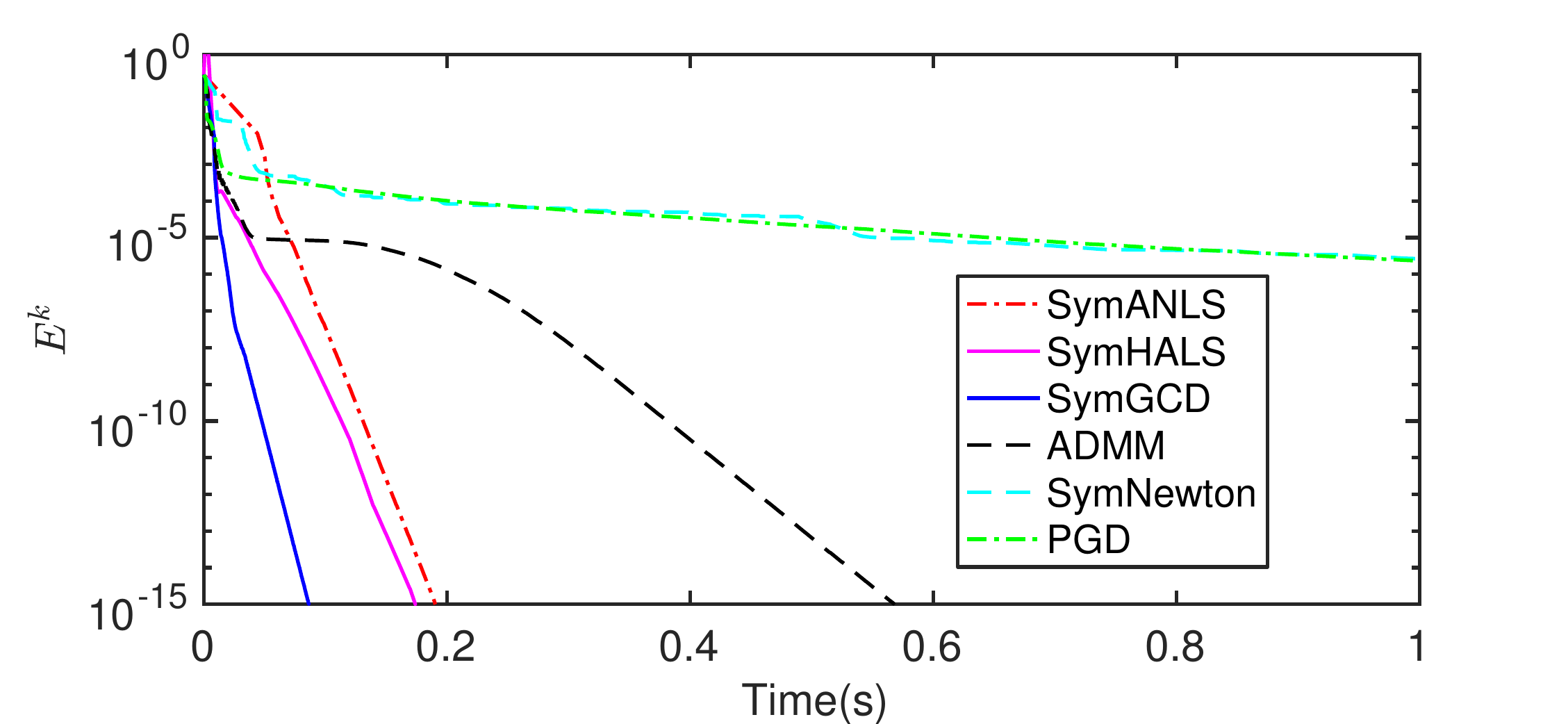}
		\centerline{(a2)}
	\end{minipage}
	\vfill	
	%	\begin{minipage}{0.32\linewidth}
	%		\includegraphics[width=5cm]{}
	%		\centering{(a4)}
	%	\end{minipage}
	%    \hfill
	\begin{minipage}{0.48\linewidth}
		\includegraphics[width=8cm]{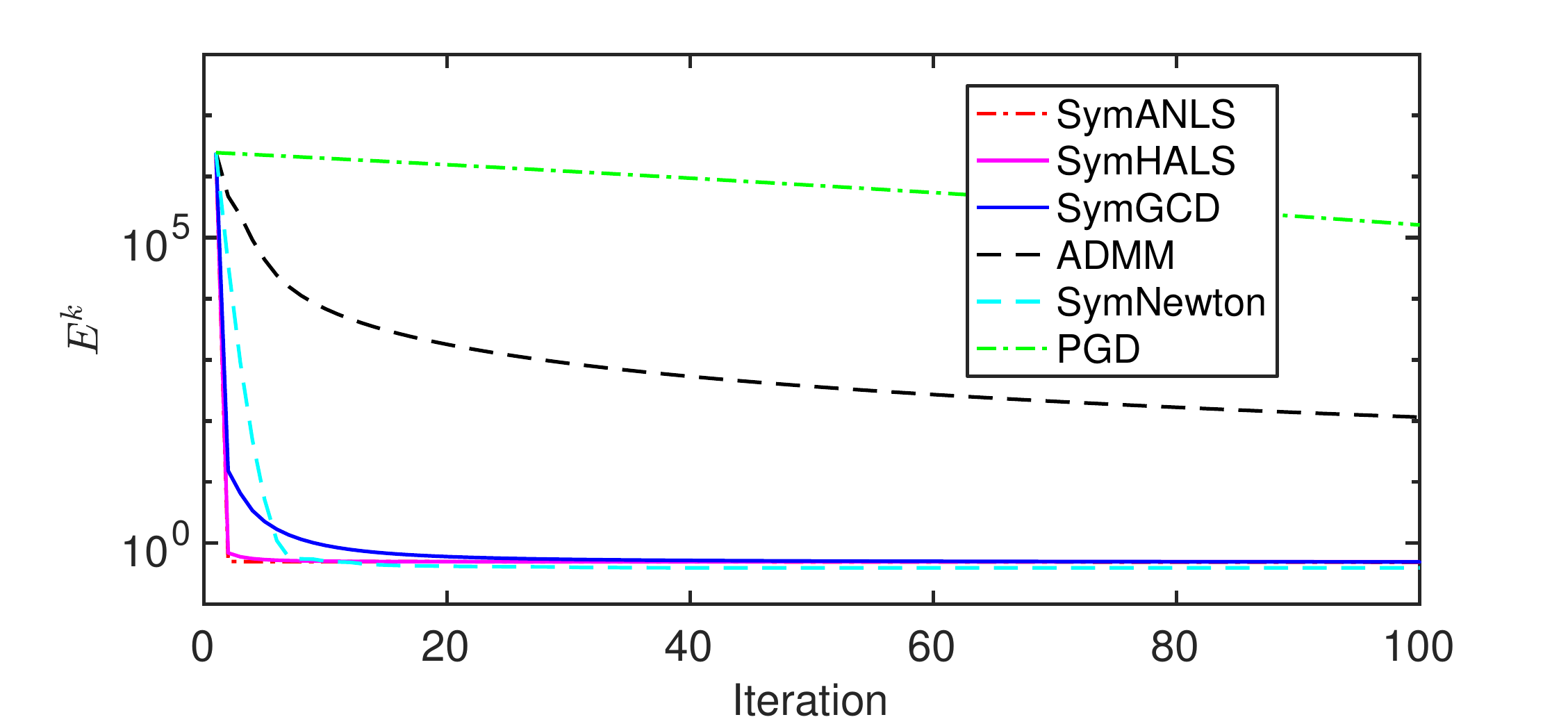}
		\centerline{(b1)}
	\end{minipage}
	\hfill
	\begin{minipage}{0.48\linewidth}
		\includegraphics[width=8cm]{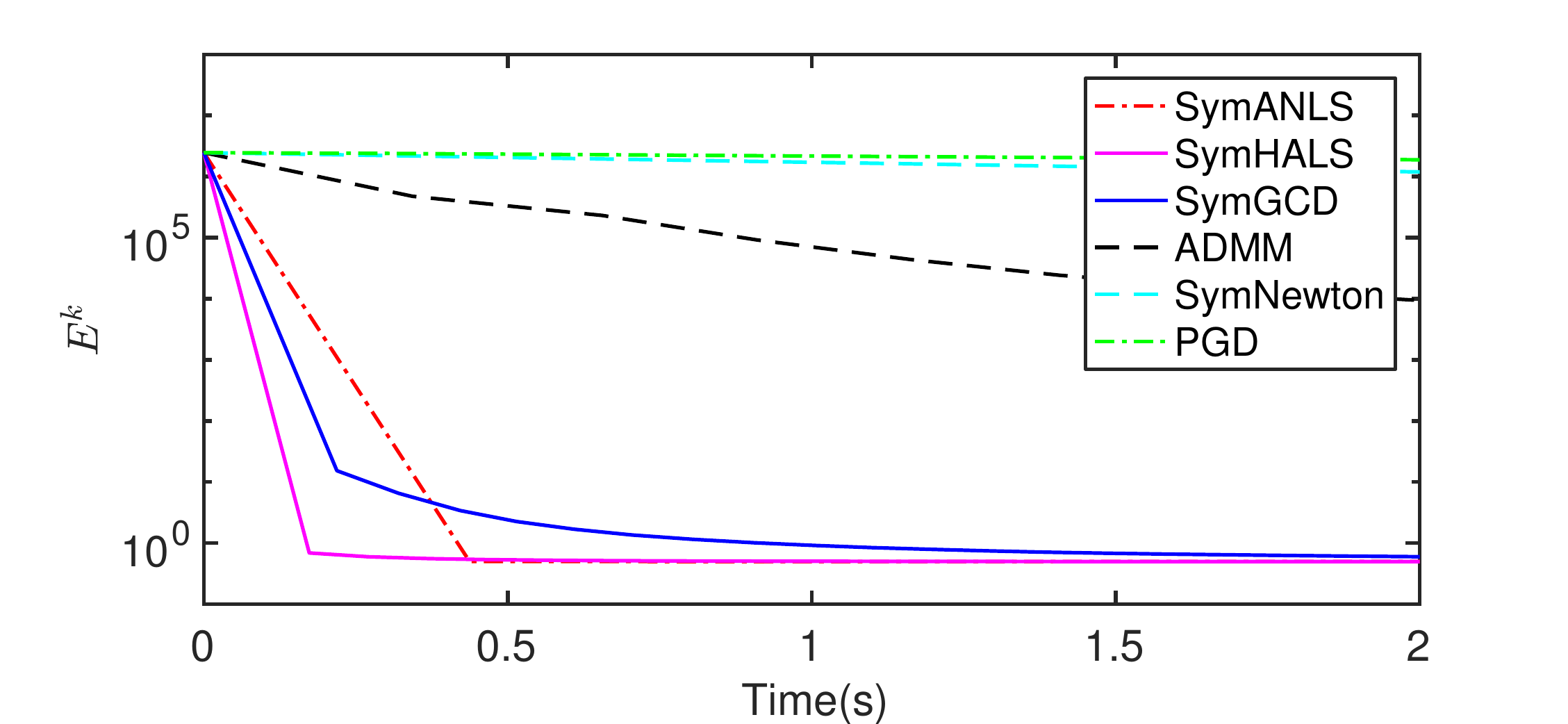}
		\centerline{(b2)}
	\end{minipage}
	%	\centering{(c)}
	\caption{Synthetic data where $n= 50, r = 5$: (a1)-(a2) fitting error versus iteration and running time. Real image dataset CBCL where $n = 2429, r = 49$: (b1)-(b2) fitting error versus iteration and running time.} \label{fig:synthetic}
\end{figure}

\subsection{Image clustering}
Symmetric NMF can be used for graph clustering~\cite{kuang2015symnmf,ding2005equivalence} where each element $\mX_{ij}$ denotes the similarity between data $i$ and $j$. 
In this subsection, we apply different symmetric NMF algorithms for graph clustering on image datasets and compare the clustering accuracy~\cite{xu2003document}.

We put all images to be clustered in a data matrix $\mM$, where each row is a vectorized image.  We construct similarity matrix following the procedures in ~\cite[section 7.1, step 1 to step 3]{kuang2015symnmf}, and utilize self-tuning method to construct the similarity matrix $\mX$. 
Upon deriving $\widetilde \mU$ from symmetric NMF  $\mX \approx \widetilde \mU\widetilde \mU^\T$, the label of the $i$-th image can be obtained by: 
\begin{equation}\label{eq:assign_label}
%\textstyle
l({\mM}_i) = \argmax_{j}{\widetilde \mU}_{(ij)}.
\end{equation}
We  conduct the experiments on four image datasets:

\textbf{ORL}: 400 facial images from 40 different persons with each one has 10 images from different angles and emotions~\footnote{http://www.cl.cam.ac.uk/research/dtg/attarchive/facedatabase.html}.

\textbf{COIL-20}: 1440 images from 20 objects~ \footnote{http://www.cs.columbia.edu/CAVE/software/softlib/coil-20.php}.

\textbf{TDT2}: 10,212 news articles from 30 categories~\footnote{https://www.ldc.upenn.edu/collaborations/past-projects}. We extract the first 3147 data for experiments (containing only 2 categories). 

\textbf{MNIST}:  classical handwritten digits dataset~\footnote{http://yann.lecun.com/exdb/mnist/}, where 60,000 are for training (denoted as MNIST$_{train}$), and 10,000 for testing (denoted as MNIST$_{test}$). we test on the first 3147 data from MNIST$_{train}$ (contains 10 digits)  and 3147 from MNIST$_{test}$ (contains only 3 digits) .

In \Cref{fig:real date} (a1) and \Cref{fig:real date}(a2), we display  the  clustering accuracy on dataset \textbf{ORL} with respect to iterations and time (only show first 10  seconds), respectively. Similar results for dataset \textbf{COIL-20} are plotted in  \Cref{fig:real date} (b1)-(b2).  We observe that in terms of iteration number,  SymNewton has comparable performance to the three alternating methods for \eqref{eq:SNMF by reg} (i.e., SymANLS, SymHALS, and SymGCD), but the latter outperform the former in terms of running time. Such superiority becomes more apparent when the size of the dataset increases.   We note that the  performance of ADMM will increase as iterations goes and after almost 3500 iterations on ORL dataset  it reaches a comparable result to  other algorithms. Moreover, it requires more iterations for larger dataset. This observation makes ADMM not  practical  for image clustering. We run ADMM 5000 iterations on ORL dataset; see \Cref{fig:ADMM ORL}. These results as well as the experimental results shown in the last subsection demonstrate $(i)$ the power of transfering the symmetric NMF \eqref{eq:SNMF} to a nonsymmetric one \eqref{eq:SNMF by reg}; and $(ii)$ the efficieny of alternating-type algorithms for sovling \eqref{eq:SNMF by reg} by exploiting the splitting property within the optimization variables in \eqref{eq:SNMF by reg}.

\begin{figure}[!htb]
	\centering
	
	\begin{minipage}{0.48\linewidth}
		\includegraphics[width=8cm]{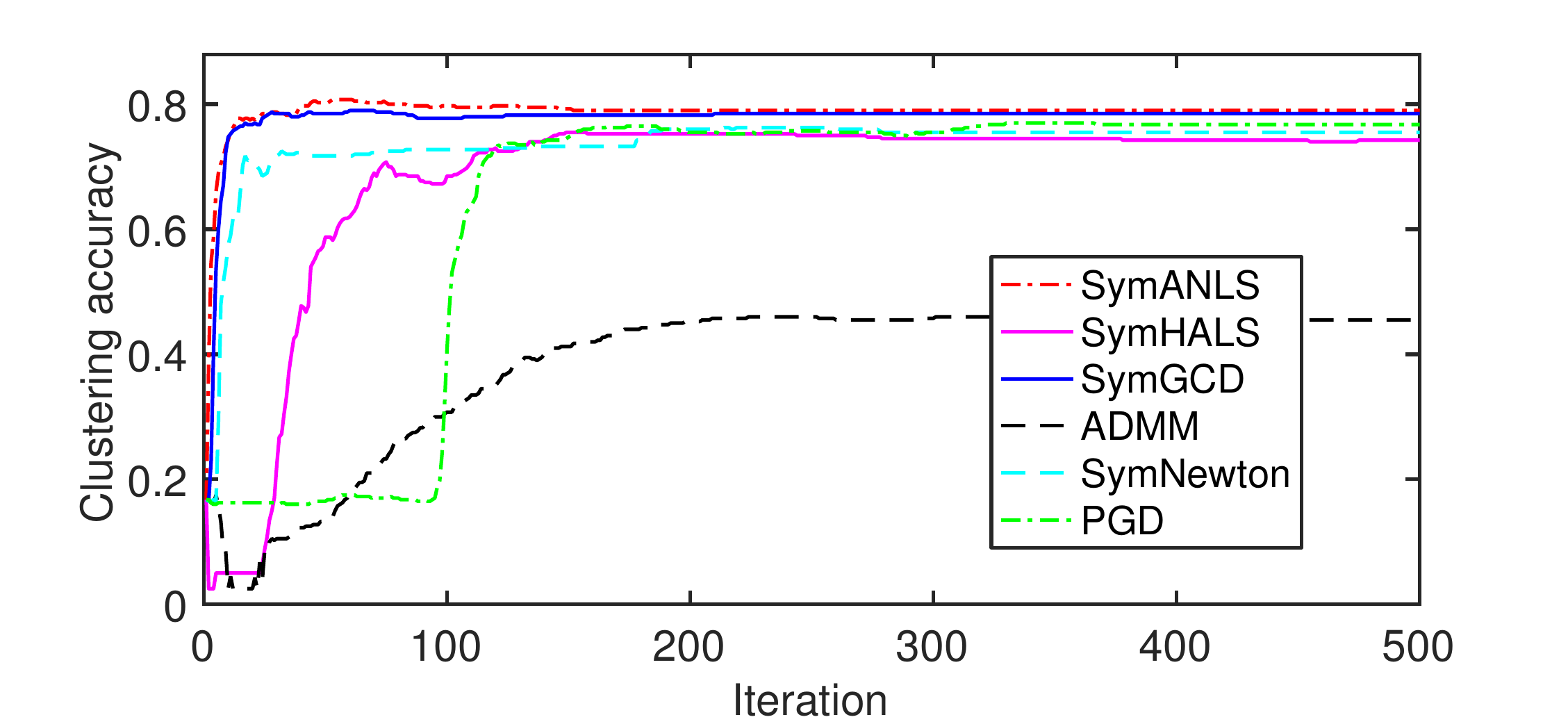}\\
		\centering{(a1)}
	\end{minipage}
	\hfill
	\begin{minipage}{0.48\linewidth}
		\includegraphics[width=8cm]{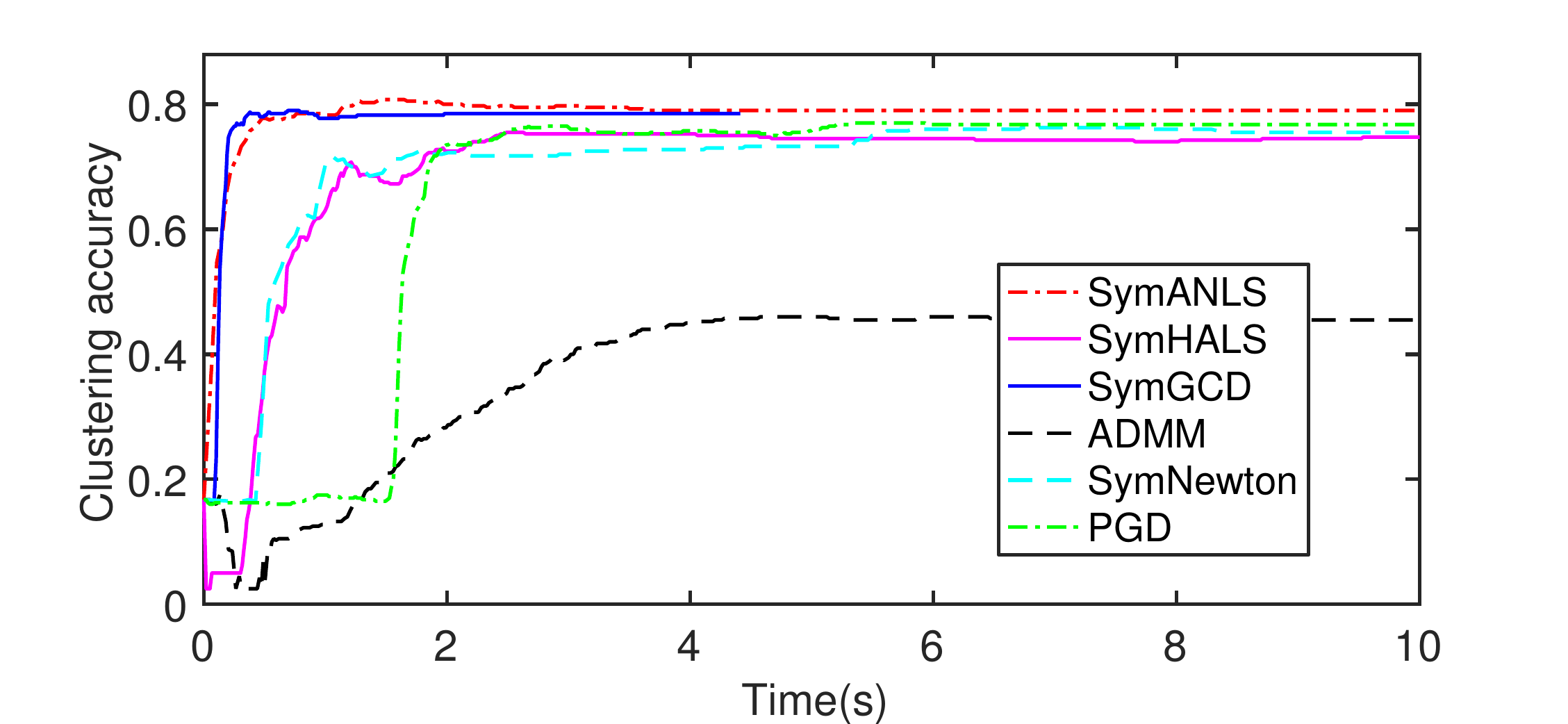}\\
		\centering{(a2)}
	\end{minipage}
	\vfill
	\begin{minipage}{0.48\linewidth}
		\includegraphics[width=8cm]{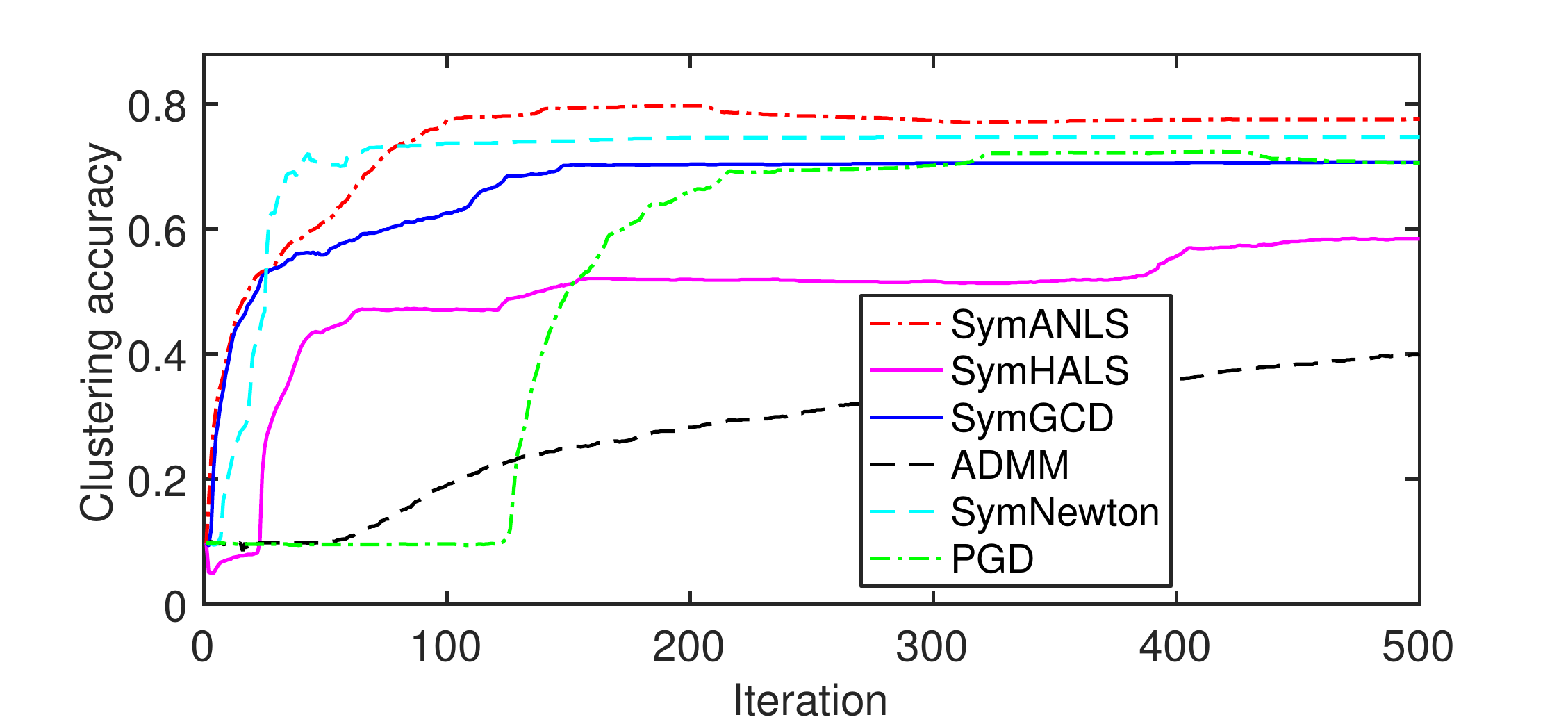}\\
		\centering{(b1)}
	\end{minipage}
	\hfill	
	\begin{minipage}{0.48\linewidth}
		\includegraphics[width=8cm]{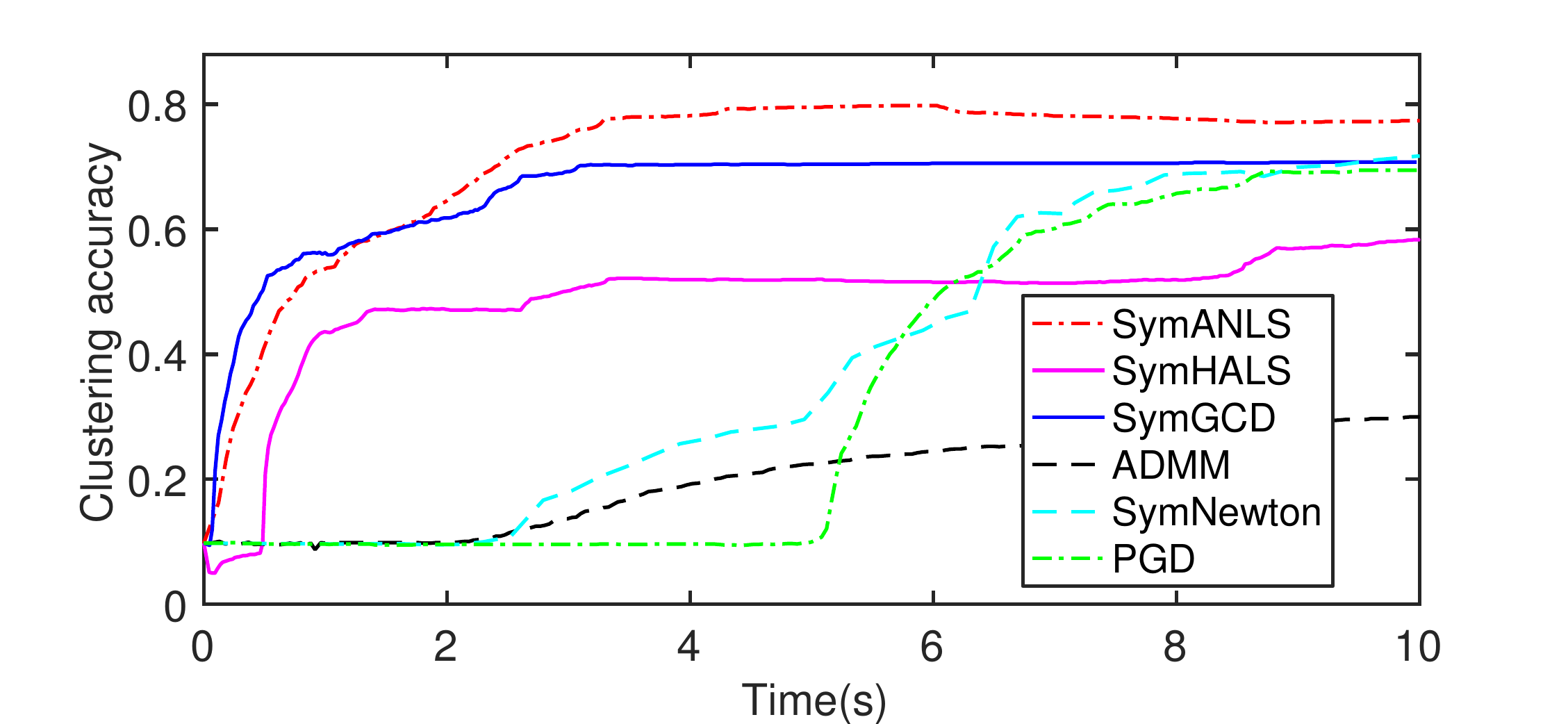}\\
		\centering{(b2)}
	\end{minipage}
	%	\centering{(c)}
	\caption{Real dataset:  (a1) and (a2) Image clustering quality on ORL dataset, $n= 400, r = 40$; (b1) and (b2) Image clustering quality on COIL-20 dataset, $n= 1440, r = 20$. }  \label{fig:real date}
\end{figure}

\begin{figure}[!htb] 
	\centering
	\includegraphics[width=8cm]{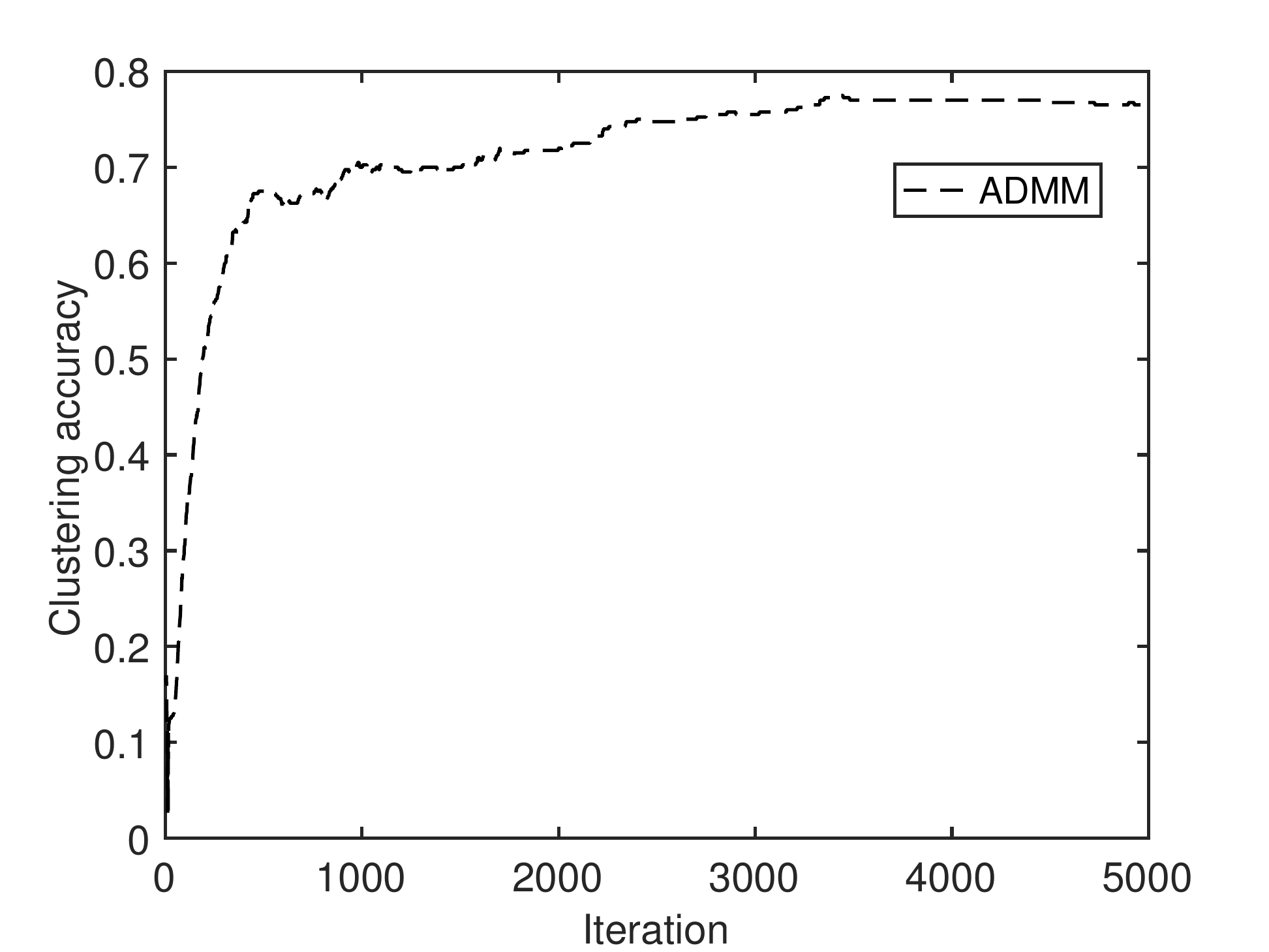}
	%	\centering{(c)}
	\caption{Image clustering quality of ADMM on ORL dataset, here $n = 400, r = 40$. ADMM needs roughly 3500 iterations to reach its maximum clustering rate!}
	\label{fig:ADMM ORL}
\end{figure}

Table~\ref{table:image clustering} shows the clustering accuracies of different algorithms on different datasets, where  we run enough iterations for ADMM so that it obtains its best result. We observe from Table~\ref{table:image clustering} that SymANLS, SymHALS, and SymGCD perform better than or have comparable performance to others for most of the cases. 

%
%We summarize the clustering accuracy of these algorithms on different datasets in \Cref{table:image clustering}, for ADMM, we run enough iterations to get its best result.  Note that we test on truncated TDT2(contains only 2 clusters), MNIST$_{train}$(contains 10 clusters) and MNIST$_{test}$(contains only 3 clusters), hence the performance on TDT2 and MNIST$_{test}$ are relatively good.

\begin{table*}[htb!]\caption{Summary of image clustering accuracy of different algorithms on five image datasets}\label{table:image clustering}
	\begin{center}
		\begin{tabular}{c|c|c|c|c|c}
			\hline  
			&ORL&COIL-20&MNIST$_{train}$&TDT2&MNIST$_{test}$\\
			\hline  
			SymANLS&\textbf{0.8075}&\textbf{0.7979}&0.6477&0.9800&0.8589\\
			\hline 
			SymHALS&0.7550&0.5854&\textbf{0.6657}&\bf{0.9806}&{0.8608}\\
			\hline 
			SymGCD&0.7900&0.7076&{0.6293}&{0.9803}&\bf{0.9882}\\
			\hline 
			ADMM&0.7650&0.6903&0.5803&0.9800&{0.8713}\\
			\hline 
			SymNewton&0.7625&0.7472&0.5990&0.9793&0.8589\\
			\hline 
			PGD&0.7700&0.7243&0.6475&0.9800&0.8710\\
			\hline 
		\end{tabular}
	\end{center}
\end{table*}

\section{Proofs for \Cref{sec:fast algorithms}}
%Proving \Cref{thm:ANLS} and \Cref{thm:HALS} has very similar structure, and since the former is slightly easier to prove, hence we will give the detailed proof for \Cref{thm:ANLS} and provide a sketch to \Cref{thm:HALS}. 

Before going to the main proof, we  first introduce some supporting materials.
\subsection{Definitions and basic ingredients}
Since problem \eqref{eq:SNMF by reg} is nonsmooth and nonconvex, we use tools from generalized differentiation to  characterize its optimality. 
\begin{defi}\cite{bolte2014proximal,rockafellar2015convex} \label{def:subdifferentials}Let $h:\R^d\rightarrow (-\infty,\infty]$ be a proper and lower semi-continuous function
	\begin{enumerate}[$(i)$]
		\item the  effective domain  is defined as
		\[
		\domain h:=\left\{\vu\in\R^d:h(\vu)<\infty\right\}.
		\]
		\item
		The (Fr\'{e}chet) subdifferential $\partial h$ of $h$ at $\vu$ is defined by
		\[
		\partial h(\vu) = \left\{\vz:\liminf\limits_{\vv\rightarrow \vu,\vv\neq \vu}\frac{h(\vv) - h(\vu) - \langle \vz, \vv - \vu\rangle}{\|\vu - \vv\|}\geq 0\right\}
		\]
		for any $\vu\in \domain h$ and $\partial h(\vu) = \emptyset$ if $\vu\notin \domain h$.
	\end{enumerate}	
\end{defi}
When $h(\vu)$ is differentiable at $\vu$, or is convex, then the (Fr\'{e}chet) subdifferential reduces to $\nabla h(\vu)$ or the  convex subdifferential. Throughout this section, we will simply say $\partial h(\vu)$ subdifferential. 
A necessary condition for optimality is $0 \in\partial h(\overline \vu)$, and such a point is called critical point of $h(\vu)$.

Note that for a nonnegative constraint which is convex, it is subdifferentialble everywhere in its effective domain, including the relative boundary. 
For the objective function $f$ in \eqref{eq:SNMF by reg}, the subdiferential is simply given by: gradient of its smooth part $+$ the convex subdifferential of its nonnegative constraints, where the `$+$' represents Minkowsiki summation of sets. Denote by $g(\mU,\mV) = \frac{1}{2}\|\mX - \mU\mV^\T\|_F^2 + \frac{\lambda}{2}\|\mU - \mV\|_F^2$.
\begin{lem}The subdifferential of $f$ is given as follows:
\e\begin{split}
&\partial_{\mU} f(\mU,\mV) = \nabla_{\mU} g(\mU,\mV)+\partial \delta_{+}(\mU), \\
& \partial_{\mV} f(\mU,\mV) =  \nabla_{\mV} g(\mU,\mV)+\partial \delta_{+}(\mV),	
\end{split}\label{eq:subdiff for f}\ee
where 
\e\begin{split}
\partial \delta_{+}(\mU) &= \left\{\mS\in\R^{n\times r}: \delta_+(\mU') - \delta_+(\mU)\geq \langle \mS,\mU' - \mU \rangle, \ \forall \mU'\in\R^{n\times r}\right\}\\
& = \left\{\mS\in\R^{n\times r}: \mS \leq \vzero, \  \mS \odot \mU = \vzero  \right\}
\end{split}
\nonumber\ee
\label{lem:subdiff for f}\end{lem}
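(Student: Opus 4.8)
The plan is to split the statement into two classical ingredients: a sum rule that peels off the smooth quadratic part $g$ from the nonsmooth indicators, and the identification of $\partial\delta_+$ with the normal cone to the nonnegative orthant together with its componentwise description.

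First, for the sum rule, I would fix $\mV$ and regard $f(\cdot,\mV)$ as a function of $\mU$ alone, so that $\delta_+(\mV)$ becomes an additive constant. The part $g(\cdot,\mV)$ is a quadratic in $\mU$ and is therefore continuously differentiable everywhere, while $\delta_+$ is proper, lower semicontinuous and convex. Because $g(\cdot,\mV)$ has full domain and is continuous, the standard sum rule for the (Fr\'{e}chet) subdifferential applies without any constraint qualification issue, yielding $\partial_\mU f(\mU,\mV) = \nabla_\mU g(\mU,\mV) + \partial\delta_+(\mU)$; the argument for $\mV$ is symmetric. A direct gradient computation gives $\nabla_\mU g(\mU,\mV) = (\mU\mV^\T-\mX)\mV + \lambda(\mU-\mV)$ and $\nabla_\mV g(\mU,\mV) = (\mU\mV^\T-\mX)^\T\mU - \lambda(\mU-\mV)$, recovering exactly the expressions already used in \eqref{eq:subdifferential U V}.

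Second, for $\partial\delta_+$, I would observe that $\delta_+$ is the indicator of the convex nonnegative orthant, so its subdifferential (in the sense of \Cref{def:subdifferentials}, which coincides with the convex subdifferential here) is the normal cone to that set; the first displayed set in the lemma is precisely the defining variational inequality of this subdifferential. To reach the explicit complementarity form, I would use that the orthant is a Cartesian product of half-lines $[0,\infty)$, so the normal cone factorizes entrywise. For a single coordinate $u\geq 0$, the normal cone to $[0,\infty)$ is $\{0\}$ when $u>0$ and $(-\infty,0]$ when $u=0$, and both cases are captured simultaneously by the two conditions $s\leq 0$ and $su=0$. Assembling over all entries delivers $\partial\delta_+(\mU)=\{\mS:\mS\leq\vzero,\ \mS\odot\mU=\vzero\}$.

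I do not anticipate a genuine obstacle, since both pieces are textbook facts. The only point that deserves care is the legitimacy of the sum rule within the nonsmooth framework; restricting to a single block makes the relevant function convex (indeed strongly convex in that block thanks to the $\frac{\lambda}{2}\|\mU-\mV\|_F^2$ term), so one may simply invoke the Moreau--Rockafellar theorem, whose qualification is automatic because $g$ is finite-valued everywhere. The remaining verification --- that the variational-inequality description and the complementarity description of $\partial\delta_+$ coincide --- reduces to the scalar statement about $[0,\infty)$ and is immediate.
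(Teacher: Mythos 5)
Your proposal is correct and follows exactly the route the paper itself takes, except that the paper merely asserts the lemma (the preceding remark states that the subdifferential is the gradient of the smooth part plus the convex subdifferential of the indicator, and the complementarity form $\{\mS : \mS\leq\vzero,\ \mS\odot\mU=\vzero\}$ is used without derivation); your write-up supplies the standard details --- the exact sum rule for a finite-valued smooth summand (or Moreau--Rockafellar, since $g(\cdot,\mV)$ is strongly convex in the fixed block), plus the entrywise normal-cone computation for the orthant. No gap to report; the only cosmetic caveat is that the blockwise identity implicitly assumes the other block is feasible (e.g.\ $\mV\geq\vzero$, so that $\delta_+(\mV)$ is a finite additive constant), an edge case the paper also does not address.
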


Another widely used definition for a stationary point is via the Karush-Kuhn-Tucker (KKT) condition.  The KKT condition for problem \eqref{eq:SNMF by reg} is stated as follows.
\e 
\begin{split}
	& \mU\geq 0, \ \mV\geq 0\\
	&\nabla_{\mU} g(\mU,\mV) \geq 0, \ \nabla_{\mV} g(\mU,\mV) \geq 0\\
	&\mU \odot \nabla_{\mU} g(\mU,\mV)=0, \  \mV \odot  \nabla_{\mV} g(\mU,\mV) = 0.
\end{split}
\nonumber\ee
We say $(\mU^\star,\mV^\star)$ a KKT point if it satsifies the above equations. The following result establises the equivalence between a KKT point and a critical point defined from the notion of subdifferentional.
\begin{prop}\label{prop:critical and KKT}
	For problem \eqref{eq:SNMF by reg}, $(\mU^\star,\mV^\star)$ is a critical point iff it is a KKT point.
\end{prop}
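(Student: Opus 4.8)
The plan is to directly unfold the definition of a critical point, $\vzero\in\partial_{\mU}f$ and $\vzero\in\partial_{\mV}f$, using the explicit description of the normal cone $\partial\delta_{+}$ supplied by \Cref{lem:subdiff for f}, and then match the resulting relations termwise against the KKT system. Since the two conditions decouple across the $\mU$- and $\mV$-blocks in exactly the same way, it suffices to argue one block (say $\mU$) and invoke symmetry for the other. Recall from \Cref{lem:subdiff for f} that $\partial_{\mU}f(\mU,\mV)=\nabla_{\mU}g(\mU,\mV)+\partial\delta_{+}(\mU)$, with $\partial\delta_{+}(\mU)=\{\mS\in\R^{n\times r}:\mS\leq\vzero,\ \mS\odot\mU=\vzero\}$ whenever $\mU\geq\vzero$ and $\partial\delta_{+}(\mU)=\emptyset$ otherwise.

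For the implication KKT $\Rightarrow$ critical, I would start from a KKT point $(\mU^\star,\mV^\star)$ and exhibit the required subgradient explicitly: set $\mS:=-\nabla_{\mU}g(\mU^\star,\mV^\star)$. The KKT stationarity $\nabla_{\mU}g\geq\vzero$ gives $\mS\leq\vzero$, the complementarity $\mU^\star\odot\nabla_{\mU}g=\vzero$ gives $\mS\odot\mU^\star=\vzero$, and primal feasibility $\mU^\star\geq\vzero$ guarantees $\partial\delta_{+}(\mU^\star)\neq\emptyset$; hence $\mS\in\partial\delta_{+}(\mU^\star)$ and $\nabla_{\mU}g+\mS=\vzero$, i.e. $\vzero\in\partial_{\mU}f(\mU^\star,\mV^\star)$. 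The identical construction with $\mS:=-\nabla_{\mV}g(\mU^\star,\mV^\star)$ handles the $\mV$-block, so $(\mU^\star,\mV^\star)$ is critical.

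For the converse, critical $\Rightarrow$ KKT, I would read the same chain backwards. From $\vzero\in\partial_{\mU}f(\mU^\star,\mV^\star)$ there exists $\mS\in\partial\delta_{+}(\mU^\star)$ with $\nabla_{\mU}g(\mU^\star,\mV^\star)=-\mS$. Nonemptiness of $\partial\delta_{+}(\mU^\star)$ forces $\mU^\star\geq\vzero$ (primal feasibility), $\mS\leq\vzero$ yields $\nabla_{\mU}g=-\mS\geq\vzero$ (dual feasibility), and $\mS\odot\mU^\star=\vzero$ yields $\nabla_{\mU}g\odot\mU^\star=-\mS\odot\mU^\star=\vzero$ (complementary slackness). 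Applying the same reading to the $\mV$-block recovers the remaining KKT relations, completing the equivalence.

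The argument is essentially a definitional unwinding, so there is no serious analytic obstacle; the only point requiring care is that the feasibility constraints $\mU^\star\geq\vzero$ and $\mV^\star\geq\vzero$ are \emph{not} imposed separately but are encoded implicitly in the domain of $\delta_{+}$. Concretely, one must note that a critical point automatically lies in the feasible region precisely because $\partial\delta_{+}(\mU)$ is empty off the nonnegative orthant, so that $\vzero\in\partial_{\mU}f$ can hold only when $\mU\geq\vzero$. Making this implicit feasibility explicit is the single step I would be most careful to state, as it is what lets the subdifferential-based notion of criticality coincide exactly with the constrained KKT conditions.
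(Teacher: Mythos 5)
Your proof is correct, and it reaches the equivalence by a slightly more economical route than the paper. You invoke the closed-form description $\partial\delta_{+}(\mU)=\left\{\mS:\mS\leq\vzero,\ \mS\odot\mU=\vzero\right\}$ from \Cref{lem:subdiff for f} and match terms against the KKT system blockwise, so both directions become one-line verifications ($\mS:=-\nabla_{\mU}g(\mU^\star,\mV^\star)$ in one direction, reading the same identity backwards in the other); your closing remark, that primal feasibility is encoded in $\partial\delta_{+}$ being empty off the nonnegative orthant, is exactly the right point of care and the paper makes the same observation at the start of its forward direction. The paper, by contrast, works from the \emph{inequality} definition of the convex subdifferential: it derives complementary slackness $\mS^\star_{ij}\mU^\star_{ij}=0$ by testing the subgradient inequality at $\mU=2\mU^\star$ and $\mU=\vzero$, and dual feasibility by testing positive entries at coordinates where $\mU^\star_{ij}=0$, then reverses these steps for the converse. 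The practical difference is self-containedness: the second equality in \Cref{lem:subdiff for f} is stated in the paper without proof, so your argument is complete only modulo that (standard) normal-cone fact, whereas the paper's test-point computation is, in effect, an inline proof of precisely that equality. If you wanted your version fully self-contained you would add the short test-point verification of the normal-cone characterization; otherwise the two proofs carry identical content, with yours the cleaner presentation once the normal cone of the nonnegative orthant is taken as known.
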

\begin{proof}[Proof of \Cref{prop:critical and KKT}]
	Suppose $(\mU^\star,\mV^\star)$ is a critical point of  \eqref{eq:NMF}:
	\[
	\vzero \in \partial f(\mU^\star,\mV^\star) = (\nabla_{\mU} g(\mU^\star,\mV^\star)+\partial \delta_{+}(\mU^\star),  \nabla_{\mV} g(\mU^\star,\mV^\star)+\partial \delta_{+}(\mV^\star)),
	\]
which implies that $\mU^\star\geq 0, \mV^\star\geq 0$ since otherwise from the definition of convex subdifferential, both $\partial \delta_{+}(\mU^\star)$ and $\partial \delta_{+}(\mV^\star)$ are empty. We define $\mS^\star\in \partial \delta_{+}(\mU^\star)$ and $\mD^\star \in \partial \delta_{+}(\mV^\star)$ such that 
	\e \label{eq:element-wise critical point equation}
	\nabla_{\mU} g(\mU^\star,\mV^\star) + \mS^\star = 0, \ \ \nabla_{\mV} g(\mU^\star,\mV^\star) + \mD^\star = 0.
	\ee
It follows from the definition of convex subdifferential and  separability of the nonnegative constraint indicator function that 
	\e \label{eq:element-wise subdifferential}
	\begin{split}
		&\delta_{+}(\mU_{ij}) \geq \delta_{+}(\mU^\star_{ij})  + \langle\mS^\star_{ij}\ve_i\ve_j^\T, (\mU_{ij} -   \mU^\star_{ij})\ve_i\ve_j^\T\rangle, \ \ \forall i,j, \ \ \forall \ \mU\in \domain f,\\
		&\delta_{+}(\mV_{ij}) \geq \delta_{+}(\mV^\star_{ij})  + \langle\mD^\star_{ij}\ve_i\ve_j^\T, (\mV_{ij} -   \mV^\star_{ij})\ve_i\ve_j^\T\rangle, \ \ \forall i,j, \ \  \forall \ \mV\in \domain f.
	\end{split}
	\ee
Constructing $\mU = 2\mU^\star$ and $\mV = 2\mV^\star$  gives 
	\[
	\mS^\star_{ij} \mU^\star_{ij} \leq 0, \ \ \mD^\star_{ij} \mV^\star_{ij} \leq 0, \ \ \forall i,j.
	\]
Similarly, plugging $\mU = 0$ and $\mV = 0$  gives
	\[
	\mS^\star_{ij} \mU^\star_{ij} \geq 0, \ \ \mD^\star_{ij} \mV^\star_{ij} \geq 0, \ \ \forall i,j.
	\]
Hence, 
	\[
	\mS^\star_{ij} \mU^\star_{ij} = 0, \ \ \mD^\star_{ij}\mV^\star_{ij} = 0, \ \ \forall i,j.
	\]
which together with \eqref{eq:element-wise critical point equation} implies 
	\[
	[\nabla_{\mU} g(\mU^\star,\mV^\star)]_{ij} \mU^\star_{ij} = 0, \ \ [\nabla_{\mV} g(\mU^\star,\mV^\star)]_{ij} \mV^\star_{ij} = 0, \ \ \forall i,j.
	\]
Thus $(\mU^\star,\mV^\star)$ satisfies complementary slackness equation in the KKT condition. 
	
When $\mU^\star_{ij}>0$, from the above complementary $[\nabla_{\mU} g(\mU^\star,\mV^\star)]_{ij} = 0,[\nabla_{\mV} g(\mU^\star,\mV^\star)]_{ij} \geq 0$, the second equation in the KKT condition is satisfied for these $(i,j)$-th entry. For those $(i,j)$-th entries such that $\mU^\star_{ij}=0$, plugging any $\mU_{ij}>0$ and $\mV_{ij}>0$  into \eqref{eq:element-wise subdifferential} provides $\mS^\star_{ij}\leq 0$, which together with \eqref{eq:element-wise critical point equation} gives $[\nabla_{\mU} g(\mU^\star,\mV^\star)]_{ij} \geq 0, [\nabla_{\mV} g(\mU^\star,\mV^\star)]_{ij} \geq 0$. Hence the second equation in the KKT condition also holds true for all $i,j$. Therefore, $(\mU^\star,\mV^\star)$ satisfies the KKT condition. 
	
	On the contrary, suppose $(\mU^\star,\mV^\star)$ is a KKT point of  \eqref{eq:NMF}. We have 
	\e 
	\begin{split}
		& \mU^\star\geq 0, \ \mV^\star\geq 0\\
		&\mLambda^\star_{1} \geq 0, \ \mLambda^\star_{2}\geq 0, \\
		&\nabla_{\mU} g(\mU^\star,\mV^\star) - \mLambda^\star_{1} = 0, \ \nabla_{\mV} g(\mU,\mV) - \mLambda^\star_{2} = 0\\
		&\mU \odot \mLambda^\star_{1}=0, \  \mV \odot \mLambda^\star_{2} = 0.
	\end{split}
	\ee
Reversing the above arguments leads to $\mLambda^\star_{1} \in \partial \delta_{+}(\mU^\star)$ and $ \mLambda^\star_{2} \in \partial \delta_{+}(\mV^\star)$, and we conclude 
	\[
	0 \in \partial f(\mU^\star,\mV^\star) = (\nabla_{\mU} g(\mU^\star,\mV^\star)+\partial \delta_{+}(\mU^\star),  \nabla_{\mV} g(\mU^\star,\mV^\star)+\partial \delta_{+}(\mV^\star)),
	\]
which implies that $(\mU^\star,\mV^\star)$ is a critical point of \eqref{eq:NMF}.
\end{proof}

The following property states the geometry of objective function (including its constraints) around its critical points, which plays a key role in our sequel analysis. 
\begin{defi}\cite{bolte2007lojasiewicz,attouch2009convergence}\label{def:KL}
	We say a proper semi-continuous function $h(\vu)$ satisfies Kurdyka-Lojasiewicz (KL) property, if $\overline{\vu}$ is a limiting critical point of $h(\vu)$, then there exist $\delta>0,~\theta\in[0,1),~C_1>0,~s.t.$
	\[
	\left|h(\vu) - h(\overline{\vu})\right|^{\theta} \leq C_1 \dist(0, \partial h(\vu)),~~\forall~\vu\in B(\overline{\vu}, \delta)
	\]
	
\end{defi}
The above KL property (also known as KL inequality) states the regularity of $h(\vu)$ around its critical point $\vu$.   A very large set of functions satisfy the KL inequality. For example, as stated in \cite[Theorem 5.1]{bolte2014proximal}.   a proper lower semi-continuous function has KL property once it has semi-algebraic property which is sufficiently general, including but never limited to any polynomials, any norm, quasi norm, $\ell_0$ norm, smooth manifold, etc. For more discussions and examples, see \cite{bolte2014proximal,attouch2013convergence}.

\subsection{Proof of \Cref{lem:sufficient decrease}}
\label{sec:prf lem sufficient decrease}

We first show that the smooth part of the objective function in \eqref{eq:SNMF by reg} is $C^1$ smooth on any bounded subset.
\begin{lem} 
	The function $g(\mU,\mV) = \frac{1}{2}\|\mX - \mU\mV^\T\|_F^2 + \frac{\lambda}{2}\|\mU - \mV\|_F^2$
	has  Lipschitz continuous gradient with the Lipschitz constant as $2B+\lambda+\|\mX\|_F$
	in any bounded $\ell_2$-norm ball $\{(\mU,\mV): \|\mU\|_F^2+\|\mV\|_F^2\leq B\}$ for any  $B>0$.
	\label{lem:lipchitz}
\end{lem}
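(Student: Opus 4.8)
The plan is to establish the Lipschitz continuity of $\nabla g$ by computing the gradient explicitly, then bounding the difference $\|\nabla g(\mU_1,\mV_1) - \nabla g(\mU_2,\mV_2)\|_F$ in terms of $\|(\mU_1,\mV_1) - (\mU_2,\mV_2)\|_F$ on the prescribed ball. First I would compute the two partial gradients directly:
\[
\nabla_{\mU} g(\mU,\mV) = (\mU\mV^\T - \mX)\mV + \lambda(\mU - \mV), \qquad \nabla_{\mV} g(\mU,\mV) = (\mU\mV^\T - \mX)^\T \mU - \lambda(\mU - \mV),
\]
which I can read off from the subdifferential expression \eqref{eq:subdifferential U V} already given in the excerpt (with the indicator terms dropped). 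The goal is then to show each block of $\nabla g(\mU_1,\mV_1) - \nabla g(\mU_2,\mV_2)$ has Frobenius norm controlled by a constant multiple of $\|\mU_1 - \mU_2\|_F + \|\mV_1 - \mV_2\|_F$.

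The key step is to split each difference into pieces that are manageable. For the bilinear term I would write, for instance,
\[
(\mU_1\mV_1^\T)\mV_1 - (\mU_2\mV_2^\T)\mV_2 = (\mU_1\mV_1^\T - \mU_2\mV_2^\T)\mV_1 + (\mU_2\mV_2^\T)(\mV_1 - \mV_2),
\]
and further decompose $\mU_1\mV_1^\T - \mU_2\mV_2^\T = (\mU_1 - \mU_2)\mV_1^\T + \mU_2(\mV_1 - \mV_2)^\T$, so that every resulting summand contains exactly one difference factor multiplied by factors whose norms are bounded on the ball. Using submultiplicativity of the Frobenius/operator norm together with the constraint $\|\mU\|_F^2 + \|\mV\|_F^2 \leq B$ (hence $\|\mU\|_F, \|\mV\|_F \leq \sqrt{B}$), each such term is bounded by a constant times $B$ times a single difference norm. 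The term $-(\mX)\mV_1 + \mX \mV_2 = -\mX(\mV_1 - \mV_2)$ contributes the $\|\mX\|_F$ factor, and the regularizer $\lambda(\mU - \mV)$ contributes the $\lambda$ term in a straightforward linear fashion. Collecting the coefficients and adding the two blocks should reproduce the stated constant $2B + \lambda + \|\mX\|_F$ (up to the bookkeeping of how the cross terms combine).

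The main obstacle is the bookkeeping: keeping the constant tight enough to land exactly on $2B + \lambda + \|\mX\|_F$ rather than a looser bound with a larger coefficient. This requires careful grouping of the several bilinear cross-terms across both the $\mU$-block and the $\mV$-block and treating the coupled differences $\|\mU_1 - \mU_2\|_F$ and $\|\mV_1 - \mV_2\|_F$ uniformly, ultimately invoking $\|\mU_1 - \mU_2\|_F^2 + \|\mV_1 - \mV_2\|_F^2 = \|(\mU_1,\mV_1) - (\mU_2,\mV_2)\|_F^2$ to express the final bound as a single Lipschitz inequality. Since boundedness of the domain is essential (the map is only \emph{locally} Lipschitz, being bilinear), I would emphasize that the ball constraint is exactly what makes the factors $\mV_1, \mU_2$, etc., bounded, and note that \Cref{lem:bound:iterate} guarantees the iterates stay in such a ball so the lemma applies along the algorithm's trajectory.
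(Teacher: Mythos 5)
Your strategy --- compute $\nabla g$ explicitly and telescope the gradient difference into summands each containing exactly one difference factor --- is a legitimate and more elementary route, and it is genuinely different from the paper's proof. The paper never touches gradient differences at two points: it stacks $\mW=(\mU,\mV)$, computes the quadratic form of the Hessian $[\nabla^2 g(\mW)](\mD,\mD)$, bounds it uniformly over the ball, and relies on the mean-value (integral) representation of the gradient along the segment joining two points, which stays in the ball by convexity. That second-order route is what keeps constants tight: one bounds $\sup_t \|\nabla^2 g(\mW_2+t(\mW_1-\mW_2))\|$ once, rather than bounding several telescoped cross terms evaluated at the two endpoints separately. Your route would certainly deliver \emph{a} Lipschitz constant of the form $c_1 B + c_2\lambda + c_3\|\mX\|_F$ on the ball, and that is all the downstream results (\Cref{lem:safeguard}, \Cref{thm:ANLS}, via the boundedness from \Cref{lem:bound:iterate}) actually use; in that sense your proof would serve the paper equally well.

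However, the ``bookkeeping'' you deferred is exactly where the proposal fails as a proof of the lemma \emph{as stated}: the constant $2B+\lambda+\|\mX\|_F$ is not reachable by endpoint telescoping --- and in fact it is not correct. Concretely, the regularizer contributes the linear map $(\mU,\mV)\mapsto(\lambda(\mU-\mV),\,-\lambda(\mU-\mV))$ to the stacked gradient, whose exact operator norm is $2\lambda$, not $\lambda$: taking $\mX=\vzero$ and letting $B\to 0$ (e.g.\ compare $(t\mZ,-t\mZ)$ with the origin and let $t\to 0$), the best Lipschitz constant of $\nabla g$ on the ball tends to $2\lambda$, which exceeds $2B+\lambda$ whenever $\lambda>2B$. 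So no grouping of cross terms can land on the stated constant; the same slip sits in the paper's own display, where the Hessian quadratic form of $\frac{\lambda}{2}\|\mU-\mV\|_F^2$ is written as $\frac{\lambda}{2}\|\mD_V-\mD_U\|_F^2$ instead of $\lambda\|\mD_V-\mD_U\|_F^2$, and the data-fit Hessian is missing the cross term $2\lg \mU\mV^\T,\mD_U\mD_V^\T\rg$; a careful version of either argument yields a constant like $\frac{5}{2}B+2\lambda+\|\mX\|_F$. Your telescoping also pays an extra price on the trilinear term, since mixed products across the two points only satisfy $\|\mU_2\|_F\|\mV_1\|_F\le B$ (the joint constraint gives $\|\mU\|_F\|\mV\|_F\le B/2$ only at a single point), so expect a coefficient larger than $2$ on $B$ as well. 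If you want the tightest constant your computation supports, replace two-point telescoping by the integral form $\nabla g(\mW_1)-\nabla g(\mW_2)=\int_0^1 \nabla^2 g(\mW_2+t(\mW_1-\mW_2))[\mW_1-\mW_2]\,\mathrm{d}t$ --- i.e., the paper's route --- and state the lemma with the corrected constant; nothing downstream changes, since the convergence analysis only needs some finite uniform Lipschitz constant on the bounded set containing the iterates.
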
 
\begin{proof}
To ease the notation, we stack $\mU$ and $\mV$ into one variable $\mW:=(\mU,\mV)$. To obtain the Lipschitz constant, it is equivalent to bound the spectral norm of the quadrature form of the Hessian $[\nabla^2 g(\mW)](\mD,\mD)$ for any $\mD:=(\mD_U,\mD_V)$:
	\begin{align*}
	[\nabla^2 g(\mW)](\mD,\mD)
	=&
	\|\mU\mD_V^\T+\mD_U\mV^\T\|_F^2-2\lg\mX,\mD_U\mD_V^\T\rg+\frac{\lambda}{2}\|\mD_V-\mD_U\|_F^2\\
	&\leq2\|\mU\|_F^2\|\mD_V\|_F^2+2\|\mV\|_F^2\|\mD_U\|_F^2+\underbrace{\lambda\|\mD_U\|_F^2+\lambda\|\mD_V\|_F^2}_{=\lambda\|\mD\|_F^2}+2\|\mX\|_F\underbrace{\|\mD_U\mD_V^\T\|_F}_{\leq \|\mD\|_F^2/2}\\
	&\leq(2\|\mU\|_F^2+2\|\mV\|_F^2+\lambda+\|\mX\|_F)\|\mD\|_F^2\leq (2B+\lambda+\|\mX\|_F)\|\mD\|_F^2.
	\end{align*}
\end{proof}

 As each iterate $\mW_k = (\mU_k,\mV_k)$  lives in the $\ell_2$-norm ball with the radius $\sqrt{B_0}$ (see \eqref{eqn:bound}),  $g$ has Lipschitz continuous gradient with the Lipschitz constant being $2B_0+\lambda+\|\mX\|_F$ around each $\mW_k$. We now prove \Cref{lem:sufficient decrease}.
\begin{proof}[Proof of \Cref{lem:sufficient decrease}]
	Updating $\mV_{k +1  }$ amounts to solve
	\[
	\minimize_{\mV} f(\mU_{k},\mV) = g(\mU_{k},\mV) + \sigma_{+}(\mV).
	\]

As  the indicator function of nonnegative constraint $\sigma_{+}(\mV)$ is  convex subdifferentiable  for all $\mV$ in its effective domain including relative boundary, its  subdifferential is given as follows
	\[
	\partial \sigma_{+}(\mV) = \{ \mS\in \R^{n\times r}: \sigma_{+}(\mV) \geq \sigma_{+}(\widetilde\mV) + \langle \mS, \mV-\widetilde\mV\rangle,~~\forall~\mV \in \R^{n\times r} \}
	\]
which	is nonempty for all  $\mV\geq 0$. Utilizing the nonnegativity of  $\mV_{k}$ and  $\mV_{k +1}$  gives
	\e
	0\geq \langle \mS_{k+1}, \mV_{k}-\mV_{k +1}\rangle,~~\forall~\mS_{k+1} \in  \partial \delta_{+}(\mV_{k +1}).
\label{eq:S V negative}	\ee
	Since the update means $\mV_{k +1} = \argmin_{\mV}  g(\mU_{k},\mV) + \sigma_{+}(\mV)$,  it can be seen from the first order optimality ${\bf 0} \in \nabla_{\mV}g(\mU_{k},\mV_{k +1}) +  \partial \sigma_{+}(\mV_{k +1})$ that
	\[
	\nabla_{\mV}g(\mU_{k},\mV_{k +1})+ \mS_{k+1} = {\bf 0}.
	\]
Multiplying $\mV_{k}-\mV_{k +1}$ on both sides in the above equation provides
	\[
	\left\langle \nabla_{\mV}g(\mU_{k},\mV_{k +1}), \mV_{k}-\mV_{k +1}\right\rangle + \langle\mS_{k+1} , \mV_{k}-\mV_{k +1}\rangle  = {\bf 0},
	\]
which together with \eqref{eq:S V negative} gives
	\begin{equation}
	\left\langle \nabla_{\mV}g(\mU_{k},\mV_{k +1}), \mV_{k}-\mV_{k +1}\right\rangle    \geq 0. \label{eq:subdiff_property 2}
	\end{equation}
	
	Now utilizing the Taylor expansion,
	\begin{align*}
	g(\mU_k,\mV_k) &= g(\mU_k,\mV_{k+1}) +  	\left\langle \nabla_{\mV}g(\mU_{k},\mV_{k +1}), \mV_{k}-\mV_{k +1}\right\rangle \\
	&\quad + \int_0^1 \nabla^2_{\mV\mV} g(\mU_k,t\mV_k + (1-t)\mV_{k+1})[\mV_{k}-\mV_{k +1},\mV_{k}-\mV_{k +1}]\dif t\\
	& \geq g(\mU_k,\mV_{k+1}) + \frac{\lambda}{2}\|\mV_{k}-\mV_{k +1}\|_F^2,
	\end{align*}	
	which immediately implies
	\[
	g(\mU_k,\mV_k) - g(\mU_k,\mV_{k+1}) \geq \frac{\lambda}{2}\|\mV_{k}-\mV_{k +1}\|_F^2.
	\]
	
	Using similar argument, we have 
	\[
	g(\mU_k,\mV_{k+1}) - g(\mU_{k+1},\mV_{k+1})\geq \frac{\lambda}{2}\|\mU_{k}-\mU_{k +1}\|_F^2.
	\]
The proof is completed by summing the above two inequalities and recognizing that $\delta_{+}(\mU_k) = \delta_{+}(\mV_k) = \delta_{+}(\mU_{k+1}) = \delta_{+}(\mV_{k+1}) = 0$.
\end{proof}

\subsection{Proof of \Cref{thm:ANLS}}
\label{sec:prf thm ANLS}
%\begin{lem}\label{lem:sufficient decrease}
%	Suppose the iterates sequence $\{(\mU_k,\mV_k)\}$ is generated by \Cref{alg:ANLS}, then we have 
%	\begin{align} \label{eq:sufficient decrease}
%	    f(\mU_k,\mV_k) - f(\mU_{k+1},\mV_{k+1}) \geq \frac{\lambda}{2}( \|\mU_{k+1}  - \mU_{k}\|_F^2 + \|\mV_{k+1}  - \mV_{k}\|_F^2).
%	\end{align}
%\end{lem}
\begin{lem}\label{lem:iterates regular}
Let $\{(\mU_k,\mV_k)\}$ be the sequence generated by \Cref{alg:ANLS}. Then the following holds. 
	\begin{enumerate}[(a)]
		\item The sequence $\{f(\mU_k,\mV_k)\}$ of function values is nonincreasing and it converges to some finite value:
		\[\lim\limits_{k\to\infty}f(\mU_k,\mV_k)=f^\star\]
		for some $f^\star\geq 0$.
		\item The difference between iterates sequence is convergent, i.e. 

		    \e
		    \lim_{k\rightarrow\infty} \| \mU_{k+1} -   \mU_{k} \|_F = 0,  \ \lim_{k\rightarrow\infty}\|   \mV_{k +1  } - \mV_{k}\|_F = 0.
		    \label{eq:difference converges}
		    \ee

	\end{enumerate}
\end{lem}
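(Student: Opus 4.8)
The plan is to deduce both statements directly from the sufficient decrease estimate in \Cref{lem:sufficient decrease}, combined with the fact that the objective $f$ is bounded below by zero along the iterate sequence. No new machinery beyond the telescoping of a summable series is needed, so I expect this to be a short argument; the only points requiring care are the lower-boundedness of $f$ and the positivity of $\lambda$.

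For part (a), I would first observe that the right-hand side of the inequality in \Cref{lem:sufficient decrease}, namely $\frac{\lambda}{2}(\|\mU_{k+1}-\mU_k\|_F^2 + \|\mV_{k+1}-\mV_k\|_F^2)$, is nonnegative, whence $f(\mU_k,\mV_k)\geq f(\mU_{k+1},\mV_{k+1})$ for all $k$ and the sequence of function values is nonincreasing. Next I would note that the updates in \Cref{alg:ANLS} keep every iterate feasible (the initialization satisfies $\mU_0=\mV_0\geq\vzero$ and each subproblem is minimized over the nonnegative orthant), so the indicator terms $\delta_+(\mU_k)$ and $\delta_+(\mV_k)$ vanish and
\[
f(\mU_k,\mV_k)=\tfrac{1}{2}\|\mX-\mU_k\mV_k^\T\|_F^2+\tfrac{\lambda}{2}\|\mU_k-\mV_k\|_F^2\geq 0.
\]
A nonincreasing sequence bounded below by $0$ converges to a finite limit $f^\star\geq 0$, which establishes part (a).

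For part (b), I would sum the inequality of \Cref{lem:sufficient decrease} over $k=0,\dots,N-1$. The left-hand side telescopes, giving
\[
\frac{\lambda}{2}\sum_{k=0}^{N-1}\left(\|\mU_{k+1}-\mU_k\|_F^2+\|\mV_{k+1}-\mV_k\|_F^2\right)\leq f(\mU_0,\mV_0)-f(\mU_N,\mV_N)\leq f(\mU_0,\mV_0)-f^\star.
\]
Letting $N\to\infty$ and using $\lambda>0$ shows that the nonnegative series $\sum_{k\geq 0}\left(\|\mU_{k+1}-\mU_k\|_F^2+\|\mV_{k+1}-\mV_k\|_F^2\right)$ is summable, hence its general term tends to zero. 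Since both $\|\mU_{k+1}-\mU_k\|_F^2$ and $\|\mV_{k+1}-\mV_k\|_F^2$ are nonnegative and their sum vanishes, each must vanish individually, yielding \eqref{eq:difference converges}.

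The argument has no genuine obstacle: the substantive content is entirely carried by \Cref{lem:sufficient decrease}. The only items to verify explicitly are that $f\geq 0$ along the (feasible) iterates and that $\lambda>0$ permits dividing through, after which the telescoping-and-summability step is routine. I would emphasize that this convergence of successive differences is the quantity that later feeds into the Kurdyka--{\L}ojasiewicz--based proof of \Cref{thm:ANLS}.
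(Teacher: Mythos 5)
Your proposal is correct and follows essentially the same route as the paper's proof: both deduce monotonicity and convergence of $\{f(\mU_k,\mV_k)\}$ from \Cref{lem:sufficient decrease} together with lower-boundedness by zero, and both obtain \eqref{eq:difference converges} by telescoping the decrease inequality into a summable series whose general term must vanish. Your explicit verification that the iterates remain feasible (so the indicator terms vanish and $f\geq 0$ along the sequence) is a small point the paper leaves implicit, but it is not a different argument.
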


\begin{proof}[Proof of \Cref{lem:iterates regular}]
It follows from \Cref{lem:sufficient decrease} that
\[\sum_{k=0}^\infty f(\mW_k) -f(\mW_{k+1})\geq \frac{\lambda}{2}\sum_{k=0}^\infty \|\mW_{k+1}-\mW_{k}\|_F^2
\quad
\Longrightarrow
\quad
\sum_{k=0}^\infty \|\mW_{k+1}-\mW_{k}\|_F^2\le \frac{2f(\mW_0)}{\lambda}.\]
Now, we conclude the proof of (a) by identifying that the sequence $\{f(\mW_k)\}$ is non-increasing and  lower-bounded by zero. 
 For proving (b), we note that the sequence $\{\sum_{k=n}^\infty \|\mW_{k+1}-\mW_{k}\|_F^2\}_n$ is convergent, hence we are guaranteed that $\lim_{k\rightarrow\infty} \|\mW_{k+1}-\mW_{k}\|_F =0$.
\end{proof}

\begin{lem}\label{lem:bounded iterates}
The sequence $\{(\mU_k,\mV_k)\}$  generated by \Cref{alg:ANLS} lies in a bounded subset.
\end{lem}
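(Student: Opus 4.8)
The plan is to obtain this lemma as a direct corollary of the two results already established, namely \Cref{lem:bound:iterate} and \Cref{lem:sufficient decrease}, rather than to argue boundedness from scratch. Recall that \Cref{lem:bound:iterate} applies to \emph{any} local search algorithm that (i) is initialized with $\mV_0 = \mU_0 \geq 0$ and (ii) sequentially decreases the objective value of \eqref{eq:SNMF by reg}, and it produces the explicit bound in \eqref{eqn:bound}. Hence the entire task reduces to verifying that SymANLS (\Cref{alg:ANLS}) satisfies these two hypotheses.

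First I would dispatch the initialization requirement, which is immediate: \Cref{alg:ANLS} is by construction initialized with $\mU_0 = \mV_0$, and the common initial factor is taken nonnegative (in fact every subsequent iterate is the minimizer of a nonnegatively constrained least-squares subproblem and is therefore automatically feasible). Next I would establish the monotone-decrease requirement by appealing to \Cref{lem:sufficient decrease}, which gives
\[
f(\mU_k,\mV_k) - f(\mU_{k+1},\mV_{k+1}) \geq \frac{\lambda}{2}\left( \|\mU_{k+1} - \mU_k\|_F^2 + \|\mV_{k+1} - \mV_k\|_F^2 \right) \geq 0 .
\]
Thus $\{f(\mU_k,\mV_k)\}$ is nonincreasing and in particular $f(\mU_k,\mV_k) \leq f(\mU_0,\mV_0)$ for every $k$, which is exactly the ``sequentially decreases the objective value'' condition used inside the proof of \Cref{lem:bound:iterate}.

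With both hypotheses in place, \Cref{lem:bound:iterate} applies verbatim and yields
\[
\|\mU_k\|_F^2 + \|\mV_k\|_F^2 \leq B_0 := \left(\tfrac{1}{\lambda} + 2\sqrt{r}\right)\left\|\mX - \mU_0\mU_0^\T\right\|_F^2 + 2\sqrt{r}\,\|\mX\|_F
\]
for all $k \geq 0$. Consequently the whole sequence $\{(\mU_k, \mV_k)\}$ is contained in the closed ball $\{(\mU,\mV): \|\mU\|_F^2 + \|\mV\|_F^2 \leq B_0\}$, a bounded subset, which is precisely the claim.

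As for the main obstacle, there is essentially none at the level of this lemma, since it is a corollary of previously proved statements; the genuine work of bounding $\|\mU_k\mV_k^\T\|_F$ and then $\|\mU_k\|_F^2 + \|\mV_k\|_F^2$ via the sublevel-set argument has already been carried out inside \Cref{lem:bound:iterate}. The only point deserving a line of care is confirming that the sufficient-decrease inequality of \Cref{lem:sufficient decrease} implies the plain monotone decrease from the initial value $f(\mU_0,\mV_0)$ that \Cref{lem:bound:iterate} requires, and this is immediate because the right-hand side of that inequality is nonnegative.
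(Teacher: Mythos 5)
Your proposal is correct and matches the paper's own proof, which likewise obtains \Cref{lem:bounded iterates} as a direct consequence of \Cref{lem:bound:iterate} combined with the sufficient-decrease property of \Cref{lem:sufficient decrease}. You merely spell out the verification of the hypotheses (nonnegative symmetric initialization and monotone decrease of $f$) that the paper leaves implicit, so there is no substantive difference in approach.
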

\begin{proof}[Proof of \Cref{lem:bounded iterates}]
The proof is a direct consequence of \Cref{lem:bound:iterate} and the sufficient decrease property proved in \Cref{lem:sufficient decrease}.
\end{proof}

\begin{lem}\label{lem:safeguard}
Let $\{\mW_{k} = (\mU_k,\mV_k)\}$ be the sequence generated by \Cref{alg:ANLS}. Then there exist $\mS_{k+1} \in \partial_{\mU} f(\mU_{k+1},\mV_{k+1})$ and $\mD_{k+1} \in \partial_{\mV}  f(\mU_{k+1},\mV_{k+1})$  such that
\begin{align} \label{eq:safeguard}
	 \left\| 
	 \begin{bmatrix}\mS_{k+1}  \\ \mD_{k+1} \\ \end{bmatrix} \right\|_F  \leq (2B_0+\lambda+\|\mX\|_F)\|\mW_{k+1}-\mW_{k}\|_F.
\end{align}
\end{lem}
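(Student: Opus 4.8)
The plan is to read off the two required subgradients directly from the first-order optimality conditions of the two subproblems solved at iteration $k+1$ of \Cref{alg:ANLS}, and then to bound their joint size by a single Lipschitz-gradient estimate supplied by \Cref{lem:lipchitz}. Throughout, I write $g$ for the smooth part of $f$ and use the decomposition $\partial_\mU f = \nabla_\mU g + \partial\delta_+$ from \Cref{lem:subdiff for f}.

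I would first dispose of the $\mV$-block, which turns out to be free. Since $\mV_{k+1}$ is the exact minimizer of the strongly convex problem $\min_{\mV\geq\vzero} f(\mU_{k+1},\mV)$, its optimality condition is precisely $\vzero\in\partial_\mV f(\mU_{k+1},\mV_{k+1})$. Hence I may choose $\mD_{k+1}=\vzero$, so the $\mV$-block contributes nothing to the left-hand side of \eqref{eq:safeguard}.

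The $\mU$-block is where the only real work lies. The update $\mU_{k+1}=\argmin_{\mU\geq\vzero} f(\mU,\mV_k)$ was computed against the \emph{stale} factor $\mV_k$, so its optimality condition produces an indicator subgradient $\mE_{k+1}\in\partial\delta_+(\mU_{k+1})$ with $\nabla_\mU g(\mU_{k+1},\mV_k)+\mE_{k+1}=\vzero$. The key observation is that $\mE_{k+1}$ remains a valid element of $\partial\delta_+(\mU_{k+1})$ irrespective of the second argument, so I can reuse it to build an honest subgradient at the current point: set $\mS_{k+1}:=\nabla_\mU g(\mU_{k+1},\mV_{k+1})+\mE_{k+1}\in\partial_\mU f(\mU_{k+1},\mV_{k+1})$. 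Substituting the optimality identity for $\mE_{k+1}$ collapses this to a pure gradient difference, $\mS_{k+1}=\nabla_\mU g(\mU_{k+1},\mV_{k+1})-\nabla_\mU g(\mU_{k+1},\mV_k)$, in which only the second factor has moved, from $\mV_k$ to $\mV_{k+1}$.

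Finally I would close the estimate with Lipschitz continuity. By \Cref{lem:bounded iterates} all iterates---and the intermediate point $(\mU_{k+1},\mV_k)$---remain in a bounded ball, so \Cref{lem:lipchitz} controls the gradient difference with constant $2B_0+\lambda+\|\mX\|_F$; since the $\mU$-block is a block of the full gradient, this gives $\|\mS_{k+1}\|_F\leq(2B_0+\lambda+\|\mX\|_F)\|\mV_{k+1}-\mV_k\|_F$. Combining with $\mD_{k+1}=\vzero$ and $\|\mV_{k+1}-\mV_k\|_F\leq\|\mW_{k+1}-\mW_k\|_F$ yields \eqref{eq:safeguard}. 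I expect the main obstacle to be conceptual rather than computational: correctly exploiting the asymmetry of the Gauss--Seidel updates---namely that the freshly updated $\mV$-block yields a zero subgradient while the $\mU$-block is nonzero only because it was optimized against $\mV_k$---and engineering the construction so that the leftover is a single gradient difference across one block, which a single application of the Lipschitz bound can absorb.
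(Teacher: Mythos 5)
Your proposal is correct and takes essentially the same route as the paper's proof: read the zero subgradient off the freshly minimized block, convert the stale block's optimality condition into a single gradient difference across one block, and absorb it with the Lipschitz constant $2B_0+\lambda+\|\mX\|_F$ from \Cref{lem:lipchitz} on the $B_0$-ball guaranteed by \Cref{lem:bound:iterate}. The only difference is cosmetic: you put the zero subgradient on the $\mV$-block and the gradient difference on the $\mU$-block, which matches the actual update order of \Cref{alg:ANLS} ($\mU_{k+1}$ computed against the stale $\mV_k$, then $\mV_{k+1}$ fresh), whereas the paper's write-up swaps the two roles (asserting $\mU_{k+1}=\argmin_{\mU} f(\mU,\mV_{k+1})$), so your version is in fact the one consistent with the algorithm as stated.
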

\begin{proof}[Proof of \Cref{lem:sufficient decrease}]
On one hand, by the definition of $\mV_{k+1}$, we have
\[\mzero\in \nabla_{\mV} g(\mU_{k},\mV_{k+1}) + \partial\sigma_{+}(\mV_{k+1}).\]
Along with the fact
\[\partial_{\mV} f(\mU_{k+1},\mV_{k+1})=\nabla_{\mV} g(\mU_{k+1},\mV_{k+1})+ \partial\sigma_{+}(\mV_{k+1}),\]
we denote by
\[\mD_{k+1}=\nabla_{\mV} g(\mU_{k+1},\mV_{k+1})-\nabla_{\mV} g(\mU_{k},\mV_{k+1})\in \partial_{\mV}  f(\mU_{k+1},\mV_{k+1}).\]
Then by the Lipschitz property of $g$ in \Cref{lem:lipchitz} and the boundedness property $\|\mU_k\|_F^2+\|\mV_k\|_F^2\leq B_0$ in \eqref{eqn:bound}, we have 
\[\|\mD_{k+1}\|_F\leq (2B_0+\lambda+\|\mX\|_F)\|\mU_{k+1}-\mU_{k}\|_F.\]

On the other hand, we let $\mS_{k+1}=\mzero$ which satisfies $\mS_{k+1} \in \partial_{\mU} f(\mU_{k+1},\mV_{k+1})$ since
\[\mU_{k+1}=\argmin_{\mU} f(\mU,\mV_{k+1}).\]

Thus, we have  $\|(\mS_{k+1},\mD_{k+1})\|_F\leq (2B_0+\lambda+\|\mX\|_F)\|\mW_{k+1}-\mW_{k}\|_F$.
 	 
\end{proof}

We denote $\calC(\mW_0)$ as the collection of the limit points  of  the sequence $\{\mW_k\}$ (which may depend on the initialization$(\mW_0)$). The following lemma demonstrate some useful property and  optimality of  $\calC(\mW_0)$.

\begin{prop}\label{prop:function value converges}
	Suppose the  sequence $\{(\mU_k,\mV_k)\}$ is generated by \Cref{alg:ANLS}. Then 
	\[
	   \lim\limits_{k\rightarrow\infty} f(\mU_k,\mV_k) = f(\mU^\star,\mV^\star), \ \ \forall \ (\mU^\star,\mV^\star) \in \calC(\mU_0,\mV_0).
	\]
And furthermore, 
	\[
	f(\mU_1^\star,\mV_1^\star) = f(\mU_2^\star,\mV_2^\star)
	\]
	 for any $(\mU_1^\star,\mV_1^\star)\in \calC(\mU_0,\mV_0)$ and $(\mU_2^\star,\mV_2^\star) \in \calC(\mU_0,\mV_0)$.  In other words, the sequence in function values converges to a critical value of \eqref{eq:SNMF by reg}.
\end{prop}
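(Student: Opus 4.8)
The plan is to fix an arbitrary limit point $(\mU^\star,\mV^\star)\in\calC(\mU_0,\mV_0)$ together with a subsequence $\mW_{k_j}\to(\mU^\star,\mV^\star)$ (such subsequences exist because \Cref{lem:bounded iterates} makes $\{\mW_k\}$ bounded), and to sandwich $f(\mU^\star,\mV^\star)$ between $\liminf_j f(\mW_{k_j})$ and $\limsup_j f(\mW_{k_j})$. Since \Cref{lem:iterates regular}(a) already guarantees that the \emph{full} sequence $\{f(\mW_k)\}$ converges to a single value $f^\star$, every subsequential limit of $\{f(\mW_k)\}$ equals $f^\star$; hence proving $\lim_j f(\mW_{k_j})=f(\mU^\star,\mV^\star)$ immediately yields $f(\mU^\star,\mV^\star)=f^\star$. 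Because $f^\star$ does not depend on the chosen limit point, all limit points share this common value, which settles the ``furthermore'' claim at once.

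First I would establish the lower bound. Because $f=g+\delta_+(\cdot)+\delta_+(\cdot)$ is proper and lower semicontinuous, we get $f(\mU^\star,\mV^\star)\le\liminf_j f(\mW_{k_j})=f^\star$. Here I would note that nonnegativity is preserved in the limit: each iterate satisfies $\mU_{k_j},\mV_{k_j}\ge\vzero$, so $\mU^\star,\mV^\star\ge\vzero$ and the indicator terms vanish at $(\mU^\star,\mV^\star)$, keeping $f$ finite there.

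Second, and this is the crux, I would extract the matching upper bound from algorithmic optimality rather than from continuity of $f$ (which fails because of the indicator terms). Using that $\mV_{k_j}$ is the exact minimizer of $\mV\mapsto f(\mU_{k_j},\mV)$, feasibility of $\mV^\star$ gives $f(\mU_{k_j},\mV_{k_j})\le f(\mU_{k_j},\mV^\star)=g(\mU_{k_j},\mV^\star)$. Since $g$ is a continuous (polynomial) function and $\mU_{k_j}\to\mU^\star$, the right-hand side converges to $g(\mU^\star,\mV^\star)=f(\mU^\star,\mV^\star)$, so $\limsup_j f(\mW_{k_j})\le f(\mU^\star,\mV^\star)$. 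Combining with the lower bound forces $\lim_j f(\mW_{k_j})=f(\mU^\star,\mV^\star)=f^\star$, which is the first assertion. Finally, to read $f^\star$ as a \emph{critical} value, I would invoke \Cref{lem:safeguard}: at index $k_j-1$ it supplies a subgradient in $\partial f(\mW_{k_j})$ whose norm is bounded by $(2B_0+\lambda+\|\mX\|_F)\|\mW_{k_j}-\mW_{k_j-1}\|_F\to0$ by \Cref{lem:iterates regular}(b); together with $\mW_{k_j}\to(\mU^\star,\mV^\star)$ and the just-proved $f(\mW_{k_j})\to f(\mU^\star,\mV^\star)$, the closedness (outer semicontinuity) of the graph of the limiting subdifferential yields $\vzero\in\partial f(\mU^\star,\mV^\star)$.

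The main obstacle I expect is the upper bound: since $f$ is only lower semicontinuous, one cannot pass continuity through $f(\mW_{k_j})\to f(\mU^\star,\mV^\star)$ directly, and the inequality must instead be manufactured from the exact-minimization structure of the update together with feasibility of $\mV^\star\ge\vzero$. Care is also needed in the criticality step to ensure the three hypotheses required by graph-closedness---iterate convergence, subgradient convergence to zero, and function-value convergence---hold simultaneously, which is precisely why the function-value statement must be proved \emph{first} and is then used to reach the critical-value conclusion.
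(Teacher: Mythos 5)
Your proof is correct, but it reaches the key limit $f(\mW_{k_j})\to f(\mW^\star)$ by a genuinely different route than the paper. You run the standard two-sided argument from the proximal/PALM-style literature: lower semicontinuity of $f$ gives $f(\mW^\star)\le\liminf_j f(\mW_{k_j})$, while the exact-minimization property of the $\mV$-update, $f(\mU_{k_j},\mV_{k_j})\le f(\mU_{k_j},\mV^\star)=g(\mU_{k_j},\mV^\star)$, combined with continuity of $g$, supplies the matching $\limsup$. The paper instead observes that the obstacle you flag as the crux is actually vacuous in this setting: every iterate is feasible and the nonnegative orthant is closed, so $\delta_+(\mU_{k_j})=\delta_+(\mV_{k_j})=0$ for all $j$ and likewise $\delta_+(\mU^\star)=\delta_+(\mV^\star)=0$ at the limit; hence $f$ coincides with the continuous polynomial $g$ at every relevant point, and $f(\mW_{k_j})=g(\mW_{k_j})\to g(\mW^\star)=f(\mW^\star)$ follows in one line with no need for the minimization-based upper bound. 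Your sandwich buys generality---it would survive if the nonsmooth terms were not indicators vanishing along feasible iterates (e.g.\ an $\ell_1$ penalty, or inexact updates that are merely cost-decreasing rather than exact minimizers)---whereas the paper's argument is shorter but tied to feasibility of the iterates. Both proofs then conclude identically via \Cref{lem:iterates regular}(a): the full value sequence converges, so all subsequential limits coincide, which settles the ``furthermore'' claim. Finally, you prove the criticality underlying the phrase ``critical value'' inside the proposition itself, using \Cref{lem:safeguard}, \Cref{lem:iterates regular}(b), and outer semicontinuity of the limiting subdifferential (correctly noting that the function-value convergence must be established first for graph-closedness to apply); the paper defers exactly this step to \Cref{lem:limit points set}, where it is carried out by hand through limits of the convex subdifferential inequality for $\delta_+$, so your addition is sound and simply front-loads that later lemma.
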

\begin{proof}[Proof of \Cref{prop:function value converges}]
	We first extract an arbitrary convergent subsequence $\{\mW_{k_m}\}_m$  which converges to $\mW^\star$. By the definition of the algorithm  we have 
	\[
	  \mU_k \geq 0, \ \mV_k \geq 0, \ \ \forall \ k\geq 0,
	\]
which implies that
	\[
	\mU^\star \geq 0, \ \mV^\star \geq 0.
	\]
Thus,
	\[
	    \lim\limits_{m\rightarrow \infty}  \delta_{+} (\mU_{k_m}) = 0, \ \lim\limits_{m\rightarrow \infty}  \delta_{+} (\mV_{k_m}) = 0.
	\]
We now take limit on subsequence 
	\[
	  \lim\limits_{m\rightarrow \infty}   f(\mW_{k_m}) = \lim\limits_{m\rightarrow \infty}  g(\mW_{k_m}) + \lim\limits_{m\rightarrow \infty}   ( \delta_{+} (\mU_{k_m}) +  \delta_{+} (\mV_{k_m}) ) =     g(\lim_{m\rightarrow \infty}\mW_{k_m}) = g(\mW^\star),
	\]
	where we have used the continuity of the smooth part $g(\mW)$ in \eqref{eq:SNMF by reg}.
	Then from  \Cref{lem:iterates regular} we know that  $\{f(\mW_k)\}$ forms a convergent sequence. The proof is completed by noting that for any convergent sequence, all its  subsequence must converge to the same limiting point.
\end{proof}

\begin{lem}\label{lem:limit points set}
	Suppose the sequence $\{\mW_k\}$ is generated by \Cref{alg:ANLS}. Then each element $\mW^\star=(\mU^\star,\mV^\star)\in \calC(\mW^0)$ is a critical point of \eqref{eq:SNMF by reg} and $\calC(\mW^0)$ is a nonempty, compact, and connected set, and satisfy
	\[\lim\limits_{k\to\infty}\dist(\mW_k,\calC(\mW_0))=0.\]
	
\end{lem}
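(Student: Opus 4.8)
The plan is to establish the four assertions---nonemptiness, criticality of every limit point, compactness, and connectedness, together with $\dist(\mW_k,\calC(\mW_0))\to 0$---one at a time, leaning on the boundedness of the iterates (\Cref{lem:bounded iterates}), the asymptotic regularity $\|\mW_{k+1}-\mW_k\|_F\to 0$ (\Cref{lem:iterates regular}(b)), and the subgradient bound of \Cref{lem:safeguard}. Nonemptiness is immediate: since $\{\mW_k\}$ is bounded, Bolzano--Weierstrass yields at least one convergent subsequence, hence at least one limit point.

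To show every $\mW^\star=(\mU^\star,\mV^\star)\in\calC(\mW_0)$ is critical, I would fix a subsequence $\mW_{k_m}\to\mW^\star$. Because $\|\mW_{k_m+1}-\mW_{k_m}\|_F\to 0$, the shifted subsequence $\mW_{k_m+1}$ also converges to $\mW^\star$. \Cref{lem:safeguard} then supplies $(\mS_{k_m+1},\mD_{k_m+1})\in\partial f(\mW_{k_m+1})$ with $\|(\mS_{k_m+1},\mD_{k_m+1})\|_F\le(2B_0+\lambda+\|\mX\|_F)\|\mW_{k_m+1}-\mW_{k_m}\|_F\to 0$. Since all iterates are nonnegative, so is $\mW^\star$, whence $\delta_+(\mU_{k_m+1})=\delta_+(\mV_{k_m+1})=0$ and $f(\mW_{k_m+1})\to f(\mW^\star)$ by continuity of the smooth part $g$. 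Writing $\partial f=\nabla g+\partial\delta_+$ as a Minkowski sum, with $\nabla g$ continuous and $\partial\delta_+$ the (closed-graph) convex subdifferential of the orthant indicator, I pass to the limit along $m$ using the closedness of the subdifferential graph and conclude $\mzero\in\partial f(\mW^\star)$; that is, $\mW^\star$ is a critical point of \eqref{eq:SNMF by reg}.

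For $\dist(\mW_k,\calC(\mW_0))\to 0$ I argue by contradiction: if some subsequence obeyed $\dist(\mW_{k_m},\calC(\mW_0))\ge\epsilon>0$, boundedness would furnish a further convergent sub-subsequence whose limit lies in $\calC(\mW_0)$ yet sits at distance $\ge\epsilon$ from $\calC(\mW_0)$, which is absurd. Compactness follows because the set of accumulation points of any sequence is closed, and here it is contained in the bounded ball of \Cref{lem:bounded iterates}, hence closed and bounded.

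Finally, connectedness is the delicate part, and I expect it to be the main obstacle. I would invoke the standard fact (see, e.g., \cite{bolte2014proximal}) that the limit set of a bounded sequence with vanishing successive gaps is connected. The argument runs by contradiction: were $\calC(\mW_0)$ to split into disjoint nonempty compact pieces $C_1,C_2$ separated by distance $3\epsilon>0$, then since $\dist(\mW_k,\calC(\mW_0))\to 0$ and both pieces absorb infinitely many indices, the sequence would have to travel from the $\epsilon$-neighborhood of $C_1$ to that of $C_2$ infinitely often; but once $\|\mW_{k+1}-\mW_k\|_F<\epsilon$ no single step can clear the gap, so infinitely many iterates land in the intermediate region, producing an accumulation point outside $C_1\cup C_2=\calC(\mW_0)$---the required contradiction. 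Making this separation-and-continuity reasoning precise is the only nontrivial point; the preceding three claims are bookkeeping built on the already-established regularity and boundedness of the iterates.
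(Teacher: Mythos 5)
Your proof is correct and takes essentially the same route as the paper's: you combine the vanishing subgradient bound of \Cref{lem:safeguard} with the asymptotic regularity $\|\mW_{k+1}-\mW_k\|_F\to 0$ from \Cref{lem:iterates regular} and pass to the limit along a subsequence via closedness of the convex subdifferential of $\delta_+$ (the paper unpacks this closedness by hand from the subgradient inequality $\delta_+(\mU')-\delta_+(\mU_{k_m})\geq\langle\overline\mS_{k_m},\mU'-\mU_{k_m}\rangle$, while you are in fact slightly more careful about the index shift between $\mW_{k_m}$ and $\mW_{k_m+1}$). The only divergence is that for nonemptiness, compactness, connectedness, and $\dist(\mW_k,\calC(\mW_0))\to 0$ the paper simply cites \cite[Lemma 3.5]{bolte2014proximal} for bounded sequences with vanishing successive gaps, whereas you prove these facts directly---your separation argument for connectedness is precisely the proof of that cited lemma, so this is an opened black box rather than a different approach.
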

\begin{proof}[Proof of \Cref{lem:limit points set}]
   	It follows from \Cref{lem:iterates regular} and \Cref{lem:safeguard} that there exist $\mS_{k+1} \in \partial_{\mU} f(\mU_{k+1},\mV_{k+1})$ and $\mD_{k+1} \in \partial_{\mV}  f(\mU_{k+1},\mV_{k+1})$  such that
   	$\|(\mS_{k+1},\mD_{k+1})\|_F\leq(2B_0+\lambda+\|\mX\|_F)\|\mU_{k+1}-\mU_{k}\|_F$
   	with the right hand side converging to zero as $k$ goes to infinity, i.e.,
   	\[
   	\lim\limits_{k\to\infty}(\mS_{k},\mD_{k})=\textbf{0}.
   	\]
   	Then we extract an arbitrary convergent subsequence $\{\mW_{k_m}\}_m$  with limit $\mW^\star$, i.e.,  $\lim_{m\rightarrow \infty} \mW_{k_m}= \mW^\star$.  Due to \Cref{lem:subdiff for f}, we have
   	\[
   	\mS_{k_m} = \nabla_{\mU}g(\mU_{k_m},\mV_{k_m}) + \overline\mS_{k_m}, \ \overline\mS_{k_m} \in \partial\delta_+(\mU_{k_m}).
   	\]
Since $\lim_{m \rightarrow \infty} \mS_{k_m} = \vzero,\lim_{m\rightarrow \infty} \mW_{k_m}= \mW^\star$, and $\nabla_{\mU}g$ is continuous, $\{\overline \mS_{k_m}\}$ is convergent. Denote by $\overline \mS^\star = \lim_{m \rightarrow \infty} \overline\mS_{k_m}$. By the definition of $\overline\mS_{k_m} \in \partial\delta_+(\mU_{k_m})$, for any $\mU'\in\R^{n\times r}$, we have
\[
\delta_+(\mU') - \delta_+(\mU_{k_m})\geq \langle \overline\mS_{k_m},\mU' - \mU_{k_m} \rangle.
\]
Since $\lim_{m\rightarrow \infty}  \delta_{+} (\mU_{k_m}) = \delta_{+} (\mU^\star) =0$, taking $m\rightarrow \infty$ for both sides of the above equation gives
\[
\delta_+(\mU') - \delta_+(\mU^\star)\geq \langle \overline\mS^\star,\mU' - \mU^\star \rangle.
\]
Since the above equation holds for any $\mU'\in\R^{n\times r}$, we have $\overline \mS^\star \in \partial \delta_+(\mU^\star)$ and thus $\vzero = \nabla_{\mU}g(\mU^\star,\mV^\star) + \overline\mS^\star\in \partial_{\mU}f(\mW^\star)$. With similar argument, we get $\vzero \in \partial_{\mV}f(\mW^\star)$ and thus
	\[
   	    \textbf{0} \in \partial f(\mW^\star),
   	\]
which implies that $\mW^\star$ is a critical point of \eqref{eq:SNMF by reg}.

	Finally, by \cite[Lemma 3.5]{bolte2014proximal} and identifying that the sequence $\{\mW_k\}$ is bounded and regular (i.e. $\lim_{k\to\infty}\|\mW_{k+1}-\mW_k\|_F=0$), we conclude that the set of accumulation points $\calC(\mW_0)$ is a nonempty compact and connect set satisfying
	\[\lim\limits_{k\to\infty}\dist(\mW_k,\calC(\mW_0))=0.\]
\end{proof}

\begin{lem}\label{lem:Uniform KL}
	For arbitrary $(\mU,\mV)\in \calC(\mU_0,\mV_0)$, we can uniformly find a set of constants $C_2>0, \delta>0, \theta\in[0,1)$ such that 
	\[
	\left|f(\mW) - f(\mW^\star)\right|^{\theta} \leq C_2 \dist(0, \partial f(\mW)).
	\]
	for all $(\mW)$ such that $\dist\left(\mW,\calC(\mW_0)\right) \leq \delta$.
\end{lem}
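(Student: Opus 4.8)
The plan is to promote the pointwise KL property—valid at each individual accumulation point—into a single inequality that holds uniformly on an entire tube around the limit set $\calC(\mW_0)$, by patching together finitely many local estimates through a compactness argument. The genuine content is this uniformization; the pointwise KL property itself comes for free.

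First I would collect the facts already established. By \Cref{lem:limit points set}, every $\mW^\star\in\calC(\mW_0)$ is a critical point of \eqref{eq:SNMF by reg}, and $\calC(\mW_0)$ is compact; by \Cref{prop:function value converges}, $f$ takes a single common value $f^\star$ on $\calC(\mW_0)$, so that $f(\mW^\star)=f^\star$ independently of the choice of $\mW^\star$, which is what makes the left-hand side of the asserted inequality unambiguous. Moreover $f$ is semi-algebraic (its smooth part $g$ is a polynomial and each $\delta_+$ is the indicator of the semi-algebraic nonnegative orthant), so by the discussion following \Cref{def:KL} it satisfies the KL property at each of its critical points. Hence to each $\overline\mW\in\calC(\mW_0)$ I can attach a radius $\epsilon_{\overline\mW}>0$ and constants $\theta_{\overline\mW}\in[0,1)$, $C_{\overline\mW}>0$ for which the pointwise inequality of \Cref{def:KL} holds on $B(\overline\mW,\epsilon_{\overline\mW})$.

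Next I would compactify. The balls $\{B(\overline\mW,\epsilon_{\overline\mW}/2)\}_{\overline\mW\in\calC(\mW_0)}$ form an open cover of the compact set $\calC(\mW_0)$, so finitely many of them, centered at $\mW^{(1)},\dots,\mW^{(p)}$ with associated constants $\epsilon_i,\theta_i,C_i$, already cover it. I would then fix $\delta>0$ small enough that the tube $\{\mW:\dist(\mW,\calC(\mW_0))\le\delta\}$ is contained in $\bigcup_{i=1}^p B(\mW^{(i)},\epsilon_i)$; this is possible because the finite union of the larger balls is an open set containing the compact set $\calC(\mW_0)$. Setting $\theta:=\max_i\theta_i\in[0,1)$ and $C_2:=\max_i C_i$, any $\mW$ in the tube lies in some $B(\mW^{(i)},\epsilon_i)$, where $|f(\mW)-f^\star|^{\theta_i}\le C_i\,\dist(0,\partial f(\mW))$ holds. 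Shrinking $\delta$ further and using continuity of $g$ on the feasible set, I can guarantee $|f(\mW)-f^\star|\le 1$ for every feasible $\mW$ in the tube; then $\theta\ge\theta_i$ gives $|f(\mW)-f^\star|^{\theta}\le|f(\mW)-f^\star|^{\theta_i}\le C_i\,\dist(0,\partial f(\mW))\le C_2\,\dist(0,\partial f(\mW))$. For infeasible $\mW$ the subdifferential is empty, so $\dist(0,\partial f(\mW))=+\infty$ and the inequality holds vacuously.

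The step I expect to be the main obstacle is the bookkeeping in this patching: ensuring a single pair $(\theta,C_2)$ works across all the finitely many local charts despite their differing exponents $\theta_i$, and justifying the normalization $|f(\mW)-f^\star|\le 1$ that collapses the exponents. This is exactly where the constancy of $f$ on $\calC(\mW_0)$ from \Cref{prop:function value converges} and the continuity of the smooth part $g$ become essential, since without a common value $f^\star$ neither the comparison of exponents nor the left-hand side itself would make sense. The appeal to the pointwise KL inequality, by contrast, is routine once semi-algebraicity of $f$ is in place.
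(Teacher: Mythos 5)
Your proposal is correct and follows essentially the same route as the paper: both establish the pointwise KL property from semi-algebraicity, use compactness of $\calC(\mW_0)$ (from \Cref{lem:limit points set}) to extract a finite subcover of balls on which local KL constants $(c_i,\theta_i)$ hold, and set $C_2=\max_i c_i$, $\theta=\max_i\theta_i$ on a tube of radius $\delta$. If anything, your write-up is more careful than the paper's, which asserts the final inequality is ``straightforward to verify'': your explicit normalization step (shrinking $\delta$ so that $\left|f(\mW)-f^\star\right|\leq 1$, which justifies $\left|f(\mW)-f^\star\right|^{\theta}\leq\left|f(\mW)-f^\star\right|^{\theta_i}$) and your appeal to \Cref{prop:function value converges} for the common value $f^\star$ across all of $\calC(\mW_0)$ supply exactly the bookkeeping the paper leaves implicit.
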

\begin{proof}[Proof of \Cref{lem:Uniform KL}]
It is easy and straightforward to identify that $f(\mU,\mV)$ satisfies the KL inequality at every point in its effective domain.  From \Cref{lem:limit points set} we know the set $\calC(\mW_0)$ is a compact and connected set.  Hence we can find finitely many balls $B(\mW_i,r_i)$  and their intersection to cover 
\[
   \calD   = \{ (\mW): \dist\left(\mW,\calC(\mW_0)\right) \leq \delta\},
\]
where each $r_i$ is chosen such that the KL inequality holds true at each center and we can choose  $c_i >0, \theta_i \in[0,1) $ that 
\[
     [f(\mW) - f(\mW_i)]^{\theta_i} \leq c_i \dist(0,\partial f(\mW)) , \ \ \forall \ \mW \in B(\mW_i,r_i).
\]
Hence it is straightforward to verify 
\[
\left|f(\mW) - f(\mW^\star)\right|^{\theta} \leq C_2 \dist(0, \partial f(\mW))
\]
for all $(\mW)$ such that $\dist\left(\mW,\calC(\mW_0)\right) \leq \delta$, where $C_2 = \max\{c_i\}$ and $\theta = \max\{\theta_i\}$.

 \end{proof}

We now prove \Cref{thm:ANLS}.  Due to \Cref{lem:limit points set} that  $\lim_{k\to\infty}\dist(\mW_k,\calC(\mW_0))=0$, for any fixed $\delta>0$ there exists $k_0$ such that  $\dist\left(\mW_k,\calC(\mW_0)\right) \leq \delta$ for all $k\geq k_0$. Hence 
\[
      \left|f(\mW_k) - f(\mW^\star)\right|^\theta \leq  C_2 \dist(0, \partial f(\mW_k)), \ \ \forall k\geq k_0.
\]
In the subsequent analysis, we restrict to $k\geq k_0$. We now construct a concave function  $x^{1-\theta}$ with domain $x> 0$, $x_1^{1-\theta} \leq x_2^{1-\theta} + (1-\theta) x_2^{-\theta}(x_1-x_2), \forall x_1>0,x_2>0$, where recall that $\theta \in [0,1)$. Replacing $x_1$ by $f(\mW_{k+1}) - f(\mW^\star)$ and  $x_2$ by $f(\mW_{k}) - f(\mW^\star)$ gives 
\[
\left(f(\mW_{k}) - f(\mW^\star)\right)^{1-\theta} - \left(f(\mW_{k+1}) - f(\mW^\star )\right)^{1-\theta} \geq  (1-\theta)\frac{f(\mW_{k}) - f(\mW_{k+1}) }{\left(f(\mW_{k}) - f(\mW^\star)\right)^\theta}.
\]
which togerher with \Cref{lem:sufficient decrease} and \Cref{lem:Uniform KL} gives
\begin{align*}
   & \left(f(\mW_{k}) - f(\mW^\star)\right)^{1-\theta} - \left(f(\mW_{k+1}) - f(\mW^\star )\right)^{1-\theta}  \\
   &\geq  \frac{\lambda(1-\theta)}{2C_2} \frac{\|\mW_{k} - \mW_{k+1}\|_F^2 }{\dist(0,\partial f(\mW_k))},\\
    &\geq \frac{\lambda(1-\theta)}{2C_2(2B_0+\lambda+\|\mX\|_F)} \frac{\|\mW_{k} - \mW_{k+1}\|_F^2 }{\|\mW_{k} - \mW_{k-1}\|_F},
\end{align*}
where we use \Cref{lem:safeguard} in the last inequality. 

By construction we have 
\begin{align*}
    &\frac{\|\mW_{k} - \mW_{k+1}\|_F^2 }{\|\mW_{k} - \mW_{k-1}\|_F} + \|\mW_{k} - \mW_{k-1}\|_F - \|\mW_{k} - \mW_{k-1}\|_F\\
    &\geq 2 \|\mW_{k} - \mW_{k+1}\|_F - \|\mW_{k} - \mW_{k-1}\|_F.
\end{align*}
Combing the above two inequalities provides 
\[
      2 \|\mW_{k} - \mW_{k+1}\|_F - \|\mW_{k} - \mW_{k-1}\|_F \leq \beta  \left(f(\mW_{k}) - f(\mW^\star)\right)^{1-\theta} - \left(f(\mW_{k+1}) - f(\mW^\star )\right)^{1-\theta}, 
\]
where with $\beta >0$ is some constant depending on $\lambda,\theta, C_2, B_0$, and $\|\mX\|_F$. Repeating the above inequality and summing them up from $\widetilde k$ (which is larger than $k_0$) to $m$, then taking limit that $m\rightarrow \infty$ yields 
\e \label{eq:difference summable}
     \sum_{k=\widetilde k}^{\infty}  \|\mW_{k} - \mW_{k+1}\|_F  \leq  \|\mW_{\widetilde k} - \mW_{\widetilde k-1}\|_F + \beta  \left(f(\mW_{\widetilde k}) - f(\mW^\star)\right)^{1-\theta},
\ee 
where we invoke the fact that $f(\mW_{k})\rightarrow f(\mW^\star )$. Hence 
\[
     \sum_{k=k_0}^{\infty}  \|\mW_{k} - \mW_{k+1}\|_F  < \infty.
\]
Following some standard arguments one can see that  
\[
   \limsup_{t\rightarrow \infty, t_1,t_2\geq t}  \|\mW_{t_1} - \mW_{t_2}\|_F  = 0,
\]
which implies the sequence $\{  \mW_k \}$  is Cauchy, hence a convergent sequence.  The limit point set $\calC(\mW_0)$ is  singleton $\mW^\star$, and  from \Cref{lem:limit points set} it is guaranteed to be one critical point of \eqref{eq:SNMF by reg}, i.e. 
	\[
\lim\limits_{k\rightarrow \infty} (\mU_k,\mV_k) = (\mU^\star,\mV^\star)
\]
where $(\mU^\star,\mV^\star)$ is a critical point of \eqref{eq:SNMF by reg}.

As for convergence rate, we can see from \eqref{eq:difference summable} and triangle inequality that 
\e\label{eq:convergence rate 1}
     \|\mW_{\widetilde k} - \mW^\star\|_F \leq\sum_{k=\widetilde k}^{\infty}  \|\mW_{k} - \mW_{k+1}\|_F  \leq   \|\mW_{\widetilde k} - \mW_{\widetilde k-1}\|_F + \beta  \left(f(\mW_{\widetilde k}) - f(\mW^\star)\right)^{1-\theta},
\ee
from which we observe that the convergence rate of $\mW_{\widetilde k} \rightarrow  \mW^\star$  is at least as fast as the speed that $ \|\mW_{\widetilde k} - \mW_{\widetilde k-1}\|_F + \beta  \left(f(\mW_{\widetilde k}) - f(\mW^\star)\right)^{1-\theta}$ tends to 0.   \Cref{lem:Uniform KL}  and \Cref{lem:safeguard} provide the bound $\beta \left(f(\mW_{\widetilde k}) - f(\mW^\star)\right)^{1-\theta} \leq \alpha \|\mW_{\widetilde k} - \mW_{\widetilde k-1}\|_F^{\frac{1-\theta}{\theta}} $. Then, we have
\[
    \sum_{k=\widetilde k}^{\infty}  \|\mW_{k} - \mW_{k+1}\|_F \leq \|\mW_{\widetilde k} - \mW_{\widetilde k-1}\|_F +\alpha \|\mW_{\widetilde k} - \mW_{\widetilde k-1}\|_F^{\frac{1-\theta}{\theta}}. 
\]
We divide the following analysis into two cases based on the value of the KL exponent $\theta$. 

\emph{Case I}: $\theta\in [0,\frac{1}{2}]$. This case means $\frac{1-\theta}{\theta} \geq 1$. We define $P_{\widetilde k} = \sum_{i = \widetilde k}^\infty \|\mW_{i+1} - \mW_i\|_F$,
\e 
P_{\widetilde k} \leq P_{{\widetilde k}-1}- P_{\widetilde k} + \alpha \left[P_{\widetilde k-1}- P_{\widetilde k}\right]^{\frac{1-\theta}{\theta}}. \label{eq:convergence rate 2}
\ee
Since $P_{\widetilde k-1}- P_{\widetilde k} \rightarrow 0$, there exists a positive integer $k_1$ such that $P_{{\widetilde k}-1}- P_{\widetilde k} < 1,~\forall~\widetilde k\geq k_1$.    Thus,
\[
P_{\widetilde k} \leq \left(1+ \alpha\right) (P_{\widetilde k-1}- P_{\widetilde k}),~~~~\forall ~ \widetilde k\geq \max\{ k_0,k_1\},
\]
which implies that
\e
P_{\widetilde k} \leq \rho \cdot P_{\widetilde k-1},~~~~\forall ~ \widetilde k\geq \max\{ k_0,k_1\},
\label{eq:Pk decay}\ee
where $\rho = \frac{1+\alpha}{2+ \alpha} \in (0,1)$. This together with  \eqref{eq:convergence rate 1} gives the linear convergence rate
\e \label{eq:linear convergence}
        \|\mW_{ k} - \mW^\star\|_F  \leq \calO( \rho^{k-\overline k} ), \ \forall \ k\geq \overline k. 
\ee
where $\overline k = \max\{ k_0,k_1\}$.

\emph{Case II}: $\theta\in (1/2,1)$. This case means $\frac{1-\theta}{\theta} \leq 1$. Based on the former results, we have 
\[
P_{\widetilde k} \leq \left(1+ \alpha\right) \left[P_{\widetilde k-1}- P_{\widetilde k}\right]^{\frac{1-\theta}{\theta}},~~~~\forall ~ \widetilde k\geq \max\{ k_0,k_1\}.
\]
We now run into the same situation as in \cite[Theorem 2]{bolte2014proximal} (after equation (13)) and  \cite[Theorem 2]{zhu2018convergence} (after equation (30)), hence  following  a similar argument  gives 
 \[
 P_{\widetilde k}^{\frac{1-2\theta}{1-\theta}} -  P_{\widetilde k-1}^{\frac{1-2\theta}{1-\theta}}  \geq \zeta, \ \forall \ k\geq \overline k,
 \]
 for some $\zeta >0$. Repeating and summing up the above inequality from $\overline k = \max\{ k_0,k_1\}$ to any $k> \overline k$, we have
 	\[
P_{\widetilde k}\leq \left[ P_{\widetilde k-1}^{\frac{1-2\theta}{1-\theta}} + \zeta(\widetilde k-\overline k)  \right]^{-\frac{1-\theta}{2\theta -1}} = \calO\left((\widetilde k-\overline k) ^{-\frac{1-\theta}{2\theta -1}} \right), \ \forall \ \widetilde k> \overline k.
 \]
 Finally, the following sublinear convergence holds 
 \e \label{eq:sublinear convergence}
 \|\mW_{ k} - \mW^\star\|_F  \leq \calO\left(( k-\overline k) ^{-\frac{1-\theta}{2\theta -1}} \right), \ \forall \  k> \overline k.
 \ee
 We end this proof by commenting that both linear and sublinear convergence rate are  closely related to  the KL exponent $\theta$ at the critical point $\mW^\star$.

 \subsection{Proof sketch of  \Cref{lem:sufficient decrease for HALS} and \Cref{thm:HALS}}
\label{sec:prf HALS} 
Note that both SymHALS and SymANLS share the same algorithmic framework, i.e., alternating minimization. The only difference is that SymHALS has multiple optimization variables while SymANLS has only two variables. Thus, \Cref{lem:sufficient decrease for HALS} and \Cref{thm:HALS} can be proved with similar arguments used for \Cref{lem:sufficient decrease} and \Cref{thm:ANLS}. For example, with a similar argument displayed in \Cref{lem:sufficient decrease}, the iterates in SymHALS satisfies
\begin{itemize}
	\item \emph{Case I}: by updating $\vu_i$, i.e. $(\vu_1^{k+1},\cdots,\vu_{i-1}^{k+1},\vu_i^k,\cdots,\vu_r^k, \mV_k) \rightarrow (\vu_1^{k+1},\cdots,\vu_i^{k+1},\vu_{i+1}^{k},\cdots,\vu_r^k, \mV_k)$, we have 
	\[
	f(\vu_1^{k+1},\cdots,\vu_i^k,\cdots,\vu_r^k, \mV_k) - f(\vu_1^{k+1},\cdots,\vu_i^{k+1},\cdots,\vu_r^k,\mV_{k+1})
	 \geq \frac{\lambda}{2}\|\vu_i^{k+1}  - \vu_i^{k}\|_2^2. 
	\]
	\item \emph{Case II}: By updating $\vv_i$, i.e. $(\mU_{k+1}, \vv_1^{k+1},\cdots,\vv_{i-1}^{k+1},\vv_i^k,\cdots,\vv_r^k) \rightarrow (\mU_{k+1}, \vv_1^{k+1},\cdots,\vv_i^{k+1},\vv_{i+1}^{k},\cdots,\vv_r^k)$, we have 
	\[
	f(\mU_{k+1}, \vv_1^{k+1},\cdots,\vv_{i-1}^{k+1},\vv_i^k,\cdots,\vv_r^k) - f(\mU_{k+1}, \vv_1^{k+1},\cdots,\vv_i^{k+1},\vv_{i+1}^{k},\cdots,\vv_r^k)
	\geq \frac{\lambda}{2}\|\vv_i^{k+1}  - \vv_i^{k}\|_2^2. 
	\]
\end{itemize}
Unrolling the update from $\vu_1$ to $\vv_r$ and summing them up, we get the same descend inequality in \Cref{lem:sufficient decrease for HALS}. 
By the same strategy for \Cref{lem:safeguard}, one can then prove \Cref{thm:HALS} following the same argument in \Cref{thm:ANLS}.

\section*{Acknowledgment}
The authors thank Dr. Songtao Lu for sharing the code used in \cite{lu2017nonconvex} and
the three anonymous reviewers as well as the area chair for their constructive comments.

\bibliographystyle{ieeetr}
%\pagenumbering{arabic}
\bibliography{Convergence}

\end{document}